\documentclass[twoside]{article}

%
\usepackage[accepted]{aistats2025}
%


\usepackage[round]{natbib}


\usepackage[utf8]{inputenc} 
\usepackage[T1]{fontenc}    
\usepackage{hyperref}       
\usepackage{url}            
\usepackage{booktabs}       
\usepackage{amsfonts}       
\usepackage{nicefrac}       
\usepackage{microtype}      

\usepackage{graphicx}
\usepackage{booktabs,colortbl} 
\usepackage{multirow}
\usepackage{fixltx2e}
\usepackage{wrapfig}
\usepackage{caption}
\usepackage{subcaption}
\usepackage{algorithm}
\usepackage{algpseudocode}

\usepackage[dvipsnames]{xcolor}

\newcommand{\grn}[1]{{\color{Green} \textbf{#1}}}

\usepackage[textsize=tiny]{todonotes}

\usepackage{amsmath}
\usepackage{amssymb}
\usepackage{mathtools}
\usepackage{amsthm}
\usepackage[mathscr]{eucal}

\usepackage{mdwlist}

\bibliographystyle{unsrtnat}
\theoremstyle{plain}
\newtheorem{theorem}{Theorem}[section]

\newtheorem{lemma}[theorem]{Lemma}

\theoremstyle{definition}
\newtheorem{definition}[theorem]{Definition}

\newcommand{\eq}[1]{\begin{equation}{#1}\end{equation}}
\newcommand{\ml}[1]{\begin{multline}{#1}\end{multline}}
\newcommand{\al}[1]{\begin{align}{#1}\end{align}}
\newcommand{\prn}[1]{\left({#1}\right)}
\newcommand{\brt}[1]{\left[{#1}\right]}
\newcommand{\abs}[1]{\left|{#1}\right|}

\newcommand{\mbf}[1]{\mathbf{#1}}
\newcommand{\mbb}[1]{\mathbb{#1}}
\newcommand{\mc}[1]{\mathcal{#1}}
\newcommand{\ms}[1]{\mathscr{#1}}
\newcommand{\bs}[1]{\boldsymbol{#1}}
\newcommand{\nn}{\nonumber}

\begin{document}

%

%

\twocolumn[

\aistatstitle{Understanding GNNs and Homophily in Dynamic Node Classification}

\aistatsauthor{ Michael Ito \And Danai Koutra \And Jenna Wiens }

\aistatsaddress{ University of Michigan \And University of Michigan \And University of Michigan }

]

\begin{abstract}
Homophily, as a measure, has been critical to increasing our understanding of graph neural networks (GNNs). However, to date this measure has only been analyzed in the context of static graphs. In our work, we explore homophily in dynamic settings. Focusing on graph convolutional networks (GCNs), we demonstrate theoretically that in dynamic settings, current GCN discriminative performance is characterized by the probability that a node's \textit{future} label is the same as its neighbors' \textit{current} labels. Based on this insight, we propose \textit{dynamic homophily}, a new measure of homophily that applies in the dynamic setting. This new measure correlates with GNN discriminative performance and sheds light on how to potentially design more powerful GNNs for dynamic graphs. Leveraging a variety of dynamic node classification datasets, we demonstrate that popular GNNs are not robust to low dynamic homophily. Going forward, our work represents an important step towards understanding homophily and GNN performance in dynamic node classification.
\end{abstract}

\section{INTRODUCTION}

In node classification tasks, graph neural networks (GNNs) obtain strong performance on highly homophilous graphs, where most edges connect nodes of similar labels. In contrast, on many heterophilous graphs where most edges connect nodes of opposing labels, GNN performance degrades \citep{Pei2020Geom-GCN}. Measuring homophily of a graph-based task has since been used in attempts to characterize the discriminative power of GNNs and the types of graphs for which GNN performance is limited \citep{zhu2020beyond}. Several works have focused on understanding the relationship between GNNs and homophily, both at the graph and node level~\citep{Pei2020Geom-GCN,LovelandK25fairness,ma2022is,ZhuLY0CK24linkpred,luan2024heterophilicgraphlearninghandbook}. These analyses have inspired key GNN designs that provide good generalization performance on homophilous and heterophilous graphs in static node classification~\citep{chien2021adaptive, bo2021beyond, zhu2021graph, yan2022two, ItoKW25llpe}. Homophily has thus been crucial in advancing the study of GNNs since it has increased our understanding of GNN limitations and in turn led to the development of new methods to overcome these limitations.


However, to date, the vast majority of works analyzing GNNs and homophily in node classification have assumed a static graph with unchanging features, labels, and structure \citep{platonov2024characterizing}. In many real-world node classification tasks, the graph changes over time. For example, consider a node classification problem from epidemiology where the goal is to predict the spread of a contagion on a temporal contact network of individuals. Here, node features change across time due to time-varying individual characteristics, node labels change across time due to the spread of the contagion, and the graph structure changes across time due to changes in contact patterns of the individuals.

Inspired by this problem, we explore notions of homophily and GNNs in the context of node classification on dynamic graphs. We theoretically analyze graph convolutional networks (GCNs), a widely used GNN variant, and demonstrate that in dynamic settings, GCN discriminative performance is characterized by the probability that node future labels are the same as their neighbors' current labels. Based on this finding, we propose \textit{dynamic homophily}, a new homophily measure that is highly correlated with the discriminative performance of GCNs in dynamic settings. Our theory further suggests potential designs for more powerful dynamic GNNs that can overcome low dynamic homophily settings. Empirically, we apply dynamic homophily to dynamic node classification datasets from epidemiology, social network analysis, and molecular biology, demonstrating that our theoretical analyses hold in real-world dynamic graphs for a variety of GNNs. In summary, we make the following contributions.

\begin{itemize}
    \item \textbf{Theoretical Analysis.} We present a theoretical analysis showing that in dynamic settings both the separation and variances of node representations produced by a GCN can be represented as a function of the probability that a node's future label is the same as their neighbors' current labels. As a result, GCN discriminative performance can be represented as a function of this probability. Our analysis sheds light on how to potentially design more powerful dynamic GNNs robust to low dynamic homophily settings.
    
    \item \textbf{New Homophily Definition for Dynamic Graphs.} Based on our theoretical findings, we propose a novel definition of homophily for the dynamic setting called dynamic homophily.

    \item \textbf{Real-world \& Synthetic Experiments.} Applied to dynamic node classification tasks, we show that dynamic homophily accurately correlates with the discriminative performance of many GNNs.
\end{itemize}

\section{PRELIMINARIES}
We first introduce notation and define node classification on a static graph. We then provide an overview of message passing, a common framework for node classification, focusing on linear GCNs, a key tool in our theoretical analysis. We next provide a background on the homophily measures on which we build and formalize our problem setting of node classification on dynamic graphs.

\subsection{Notation}
We define the static graph $G = (V, \mbf{A}, \mbf{X}, \mbf{y})$, where $V$ is set of nodes, $\mbf{A} \in \{0, 1\}^{\abs{V} \times \abs{V}}$ is the adjacency matrix, $\mbf{X} \in \mbb{R}^{\abs{V} \times d}$ is the node feature matrix, and $\mbf{y} \in \mbb{R}^{\abs{V}}$ is the vector of node labels. Let $C$ be the set of nodes classes. Let $\mbf{d} \in \mbb{N}^{\abs{V}}$ be the vector of node degrees. For node $i \in V$, we denote the node feature vector and node label as $\mbf{x}(i)$ and $y(i)$, respectively. We denote its degree and its set of one-hop neighbors as $d(i)$ and $\mc{N}(i)$, respectively. Let $\hat{\mc{N}}(i)$ be the set of one-hop neighbors after the addition of a self-loop. 

\subsection{Node Classification on a Static Graph}
Let $G$ be a static graph with adjacency matrix $\mbf{A}$ from the space of adjacency matrices $\mc{A}$. Given a random sample of node representations $\mbf{X}_\text{train} = \{\mbf{x}(0), \ldots, \mbf{x}(n_{\text{train}})\}$ from input space $\mc{X}$, and their labels $\mbf{y}_\text{train} = \{y(0), \ldots, y(n_{\text{train}})\}$ from output space $\mc{Y}$, the node classification task on a static graph is to learn a classifier $f: \mc{X} \times \mc{A} \to \mc{Y}$ such that the expected loss $\mbb{E}[\ms{L}(f(\mbf{x}, \mbf{A}), y)]$ over the training set is minimized, where $\ms{L}$ is some loss function. Once learned, $f$ can be used to label the remaining nodes in the graph.

\subsection{Message-Passing GNNs and Homophily}
GNNs leverage the graph structure $\mbf{A}$ and node feature matrix $\mbf{X}$ to learn new representations $\mbf{H} \in \mbb{R}^{\abs{V} \times h}$. Most GNNs follow a message-passing scheme, where each GNN layer updates each node's representation using the representations of its immediate neighbors \citep{gilmer2017neural}. Formally, the $l$-th layer of a GNN can be summarized at the node level as the following propagation rule for all $i \in V$,
\eq{\mbf{h}^{(l+1)}(i) = \textsc{Aggregate}^{(l)}(\{\mbf{h}^{(l)}(j) : j \in \hat{ \mc{N}}(i)\}),}

where $\mbf{h}^{(l)}(j)$ is the representation for node $j$ at layer $l$ of a GNN and $\textsc{Aggregate}$ is a function that treats $\hat{\mc{N}}(i)$ as a set of nodes. In our work, we focus on linear GCNs which are GNNs that leverage mean aggregation determined by the propagation rule:
\eq{\mbf{h}^{(l+1)}(i) = \frac{1}{d(i) + 1}\sum_{j \in \hat{\mc{N}}(i)} \mbf{h}^{(l)}(j) \label{eq:2}.}
\vspace{-1em}

GCNs are widely used due to their high performance in many tasks \citep{kipf2017semi, wu2019simplifying}. Linear GCNs have also been rigorously analyzed due to their simplicity, and various limitations of message passing GNNs have been discovered, such as oversmoothing and the heterophily problem \citep{li2018deeper, zhu2020beyond}. Similarly, in our analysis, we leverage linear GCNs as a key tool in demonstrating the limitations of GNNs in dynamic node classification. 

Homophily is the probability that a node forms an edge with another node with the same label \cite{zhu2020beyond}. Intuitively, when homophily is high, message passing is beneficial since different classes will be well separated in feature space after message passing. Formally, given a static graph $G$, static homophily measured with respect to all nodes in $V$ is defined as: 
\eq{h^S = \mbb{P}(y(i) = y(j) \mid j \in \mc{N}(i)).}

\begin{figure*}[t!]
    \centering
    \includegraphics[width=\textwidth]{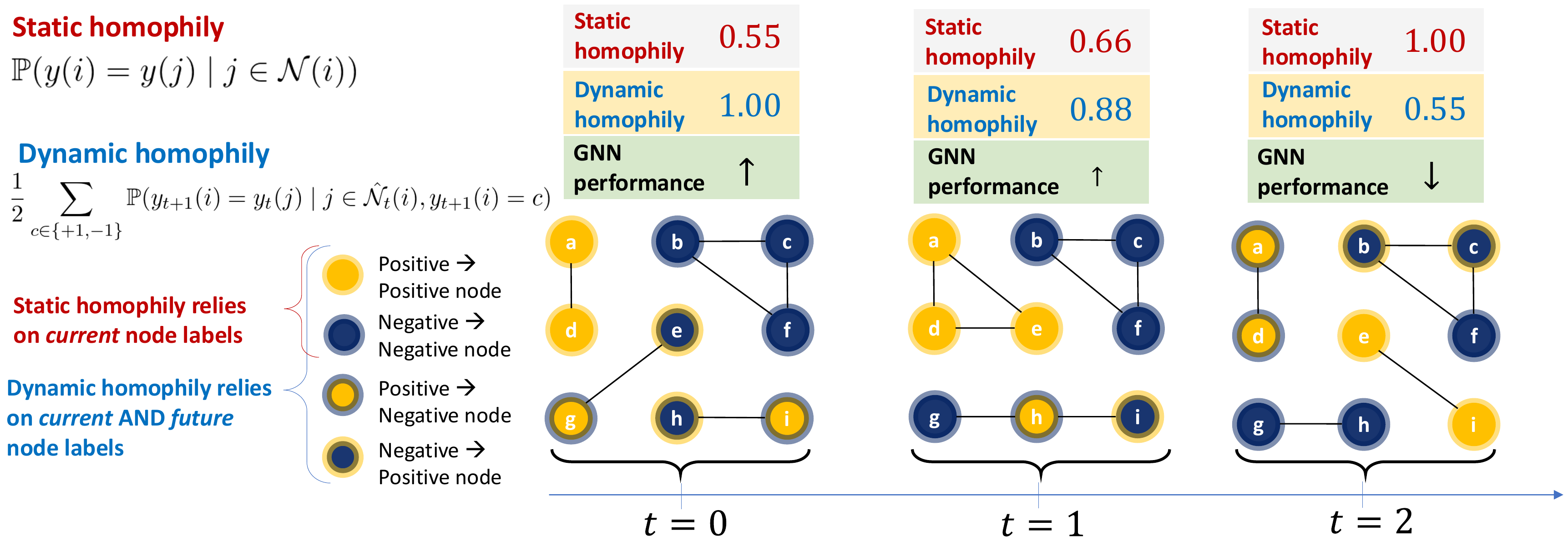}
    \caption{Toy dynamic graph where the task is to predict future node labels. When $t=0$, GNNs obtain good performance in predicting future node labels. Dynamic homophily is high since all nodes have the same \textit{future label} as their neighbors' current label while, static homophily is low since many nodes (e, g, h, i) and their neighbors have different labels at the \textit{current timestep}. When $t=2$, GNNs obtain poor performance. Dynamic homophily is low since many nodes (a, b, c, d) have a different \textit{future label} than their neighbors' current label, while static homophily is high since all nodes and their neighbors have the same current label at $t=2$.}
    \label{fig:1}
\end{figure*}

\subsection{Our Problem Setting: Node Classification on a Dynamic Graph}
A dynamic graph $\bs{G}_{0:T}$ is defined as a sequence of static graphs $\bs{G}_{0:T} = \{G_0, \ldots, G_T\}$, where $T+1$ is the number of total timesteps. The static graph at time $t$ is defined as $G_t = (V_t, \mbf{A}_t, \mbf{X}_t, \mbf{y}_t)$ where $V_t$ is the set of nodes at time $t$, $\mbf{A}_t \in \{0, 1\}^{\abs{V} \times \abs{V}}$ is the adjacency matrix at time $t$, $\mbf{X}_t \in \mbb{R}^{\abs{V_t} \times d}$ is the feature matrix at time $t$, and $\mbf{y}_t \in \mbb{R}^{\abs{V} }$ is the label vector at time $t$. Given training data $\{G_{0:T}^{(k)}\}_{k=1}^{n_{\text{train}}}$, the goal of node classification on dynamic graphs is to learn a classifier $f: \mc{X}_t \times \mc{A}_t \times \mc{Y}_t \to \mc{Y}_{t+1}$ such that the expected loss $\mbb{E}[\ms{L}(f(x_t, \mbf{A}_t, y_t), y_{t+1})]$ is minimized for all $t \in [0, T]$. $f$ can then be applied to a new dynamic graph, with the goal of predicting labels at the next time step.

\section{THEORETICAL ANALYSIS}

We first introduce our setup, outlining our framework for characterizing GCN discriminative performance in the dynamic setting. We then present our main results, characterizing GCN discriminative performance in the dynamic setting and providing potential insights on how to design more powerful dynamic GNNs. Based on our theoretical results, we introduce a new measure of homophily, dynamic homophily, that correlates with GNN performance in the dynamic setting. We lastly provide an extensive comparison between our results in the dynamic setting and existing results in the static setting. In Figure \ref{fig:1}, we provide an overview of our definitions, building intuition and highlighting the differences between static and dynamic homophily.

\subsection{Setup}

In this section, we show that the expected distance between nodes of different classes and the variance of nodes at time $t$ characterize GCN discriminative performance at time $t$. This intermediate result provides a framework for our theoretical analysis. Specifically, by measuring the expected distance and the variance of the node representations after GCN layers at time $t$, we can quantify GCN discriminative performance at time $t$. We utilize the following assumptions. 

\textbf{Assumptions.} Let $\bs{G}_{0:T}$ be a dynamic graph. For all $t \in [0, T]$ and for all $i \in V_t$, $y_t(i) \in \{-1, +1\}$, and $x_t(i) \sim N(y_t(i) \cdot \mu_t, \sigma^2_t) \in \mbb{R}$ where $N$ is the normal distribution. We do not assume any specific temporal process on the labels of $\bs{G}_{0:T}$, allowing our results to generalize across many dynamic settings. We also do not make any explicit assumptions on the node homophily distribution. We assume $f^{(l)}$ is a linear GCN. While our analysis relies on these typical assumptions  \citep{wu2019simplifying, zhu2020beyond} for tractability, we empirically verify our claims when these assumptions do not hold.

Since the task at each time step is binary, we utilize the AUROC as a metric for discriminative power in favor of other metrics such as the misclassification rate which is affected by class imbalances. The AUROC of a GCN is the probability that it ranks a random positive node higher than a random negative node. Thus, it accurately measures a GCN's ability to discriminate between the two classes. The following lemma states that we can explicitly characterize the AUROC of a linear GCN applied at time $t$ in terms of the expected distance between node representations of opposing classes and their variances at time $t$. 

\begin{lemma}
The expected AUROC of $f^{(l)}$ at timestep $t$ can be written as follows, 
\eq{\resizebox{1.0\columnwidth}{!}{ $\mbb{E}[A_t(f^{(l)})] = 1 - \Phi\prn{-\frac{\mbb{E}_{i | i \in V_{t+1}^+}[\mbf{h}^{(l)}_{t}(i)] -\mbb{E}_{j | j \in V_{t+1}^-}[\mbf{h}^{(l)}_{t}(j)]}{\mbb{V}_{i, j| i \in V_{t+1}^+, j \in V_{t+1}^-}[\mbf{h}^{(l)}_{t}(i)  + \mbf{h}^{(l)}_{t}(j)] }}$ \label{eq:lem}}}
where $\Phi$ is the cumulative distribution function of the Gaussian distribution, and $V_{t+1}^+$ and $V_{t+1}^-$ are positive and negative nodes at time $t+1$, respectively.
\label{lem:1}
\end{lemma}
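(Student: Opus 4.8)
The strategy is to recognize that the AUROC of any scalar scoring rule equals $\mathbb{P}(S^+ > S^-)$, where $S^+$ is the score assigned to a random positive node and $S^-$ the score assigned to a random negative node, drawn independently. Here the score for node $i$ at time $t$ is the scalar GCN representation $\mbf{h}^{(l)}_t(i)$, and "positive/negative" refers to the \emph{future} label, i.e.\ membership in $V_{t+1}^+$ versus $V_{t+1}^-$ (this is the point where the dynamic setting enters — the label used to define the AUROC is the target label at $t+1$, but the representation is computed at $t$). So the first step is to write $\mathbb{E}[A_t(f^{(l)})] = \mathbb{P}\bigl(\mbf{h}^{(l)}_t(i) - \mbf{h}^{(l)}_t(j) > 0 \bigm| i \in V_{t+1}^+,\, j \in V_{t+1}^-\bigr)$, possibly with a $\tfrac12$ correction for ties that vanishes in the continuous case.

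**Reduction to a Gaussian tail.**
Next I would argue that the difference $D := \mbf{h}^{(l)}_t(i) - \mbf{h}^{(l)}_t(j)$ is (approximately) Gaussian. Each $\mbf{h}^{(l)}_t(\cdot)$ is a fixed linear combination — iterated mean-aggregation — of the input features $x_t(k)$, which are Gaussian by assumption; a linear combination of jointly Gaussian variables is Gaussian, so $\mbf{h}^{(l)}_t(i)$ and $\mbf{h}^{(l)}_t(j)$ are Gaussian and so is their difference (if one does not want to assume joint Gaussianity across nodes, one appeals to a CLT-type approximation over the neighborhood sums, which is the standard move in this line of work \citep{wu2019simplifying, zhu2020beyond}). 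Then $D \sim N(\mu_D, \sigma_D^2)$ with
$\mu_D = \mathbb{E}_{i \mid i \in V_{t+1}^+}[\mbf{h}^{(l)}_t(i)] - \mathbb{E}_{j \mid j \in V_{t+1}^-}[\mbf{h}^{(l)}_t(j)]$
and, using independence of the two draws, $\sigma_D^2 = \mathbb{V}[\mbf{h}^{(l)}_t(i)] + \mathbb{V}[\mbf{h}^{(l)}_t(j)] = \mathbb{V}_{i,j}[\mbf{h}^{(l)}_t(i) + \mbf{h}^{(l)}_t(j)]$, matching the variance term written in the lemma. Standardizing, $\mathbb{P}(D > 0) = \mathbb{P}\bigl(Z > -\mu_D/\sigma_D\bigr) = 1 - \Phi(-\mu_D/\sigma_D)$, which is exactly \eqref{eq:lem}.

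**Main obstacle.**
The delicate step is the variance bookkeeping and the justification of Gaussianity of $D$. The expression in the lemma writes the denominator as $\mathbb{V}[\mbf{h}^{(l)}_t(i) + \mbf{h}^{(l)}_t(j)]$ rather than $\mathbb{V}[\mbf{h}^{(l)}_t(i)] + \mathbb{V}[\mbf{h}^{(l)}_t(j)]$ or $\mathbb{V}[\mbf{h}^{(l)}_t(i) - \mbf{h}^{(l)}_t(j)]$; these agree only if one treats $i$ and $j$ as drawn independently (so covariance terms drop) and uses that variance is insensitive to the sign of one summand. I would make the independence of the positive and negative draws explicit, note that conditionally the representations of a $V_{t+1}^+$ node and a $V_{t+1}^-$ node share no randomness in the idealized setup (or that any shared-neighborhood covariance is a lower-order term one neglects under the working assumptions), and thereby identify $\sigma_D^2$ with the stated quantity. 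The rest — expanding the mean and variance of $\mbf{h}^{(l)}_t$ in terms of the neighborhood structure and the per-node parameters $\mu_t, \sigma_t^2$ — is deferred to the subsequent results that actually compute these quantities; the lemma itself only needs the clean reduction AUROC $= 1 - \Phi(-\text{mean gap}/\text{spread})$.
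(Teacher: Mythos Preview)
Your proposal is correct and follows essentially the same route as the paper: interpret the expected AUROC as $\mathbb{P}(f^{(l)}(i)-f^{(l)}(j)>0)$ for independently drawn $i\in V_{t+1}^+$, $j\in V_{t+1}^-$, observe that the difference is Gaussian (the paper simply asserts this, whereas you spell out the linear-combination-of-Gaussians justification), and standardize to obtain $1-\Phi(-\text{mean}/\text{spread})$. Your extra care about the variance bookkeeping (sum versus difference, independence of the two draws) goes beyond what the paper's proof makes explicit but is entirely in the same spirit.
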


We prove Lemma \ref{lem:1} in Appendix \ref{pf:lem1}. Lemma \ref{lem:1} tells us that the expected AUROC at time $t$ is monotonically increasing in the ratio of expected distance over the variances of the nodes at time $t$. Intuitively, an increase in the expected distance increases the distance \textit{between} the two classes, while a decrease in the variances decreases the distance \textit{within} the two classes. 

\subsection{Main Results} 

To characterize GCN discriminative performance, we first measure the expected distance in node representations between the positive and negative classes after $l$ GCN layers, showing that the distance is characterized by the probability that node future labels are the same as their neighbors' current labels. We next show that node representation variances of the positive and negative classes after $l$ GCN layers are also characterized by this probability. Based on these findings, we propose dynamic homophily, a new measure of homophily for dynamic node classification that better reflects GNN discriminative power compared to static homophily. Our results further provide potential insights in how to design more powerful GNNs in the dynamic setting. In the following theorem, we present our first result measuring the expected difference in node representations after $l$ GCN layers.

\begin{theorem}
At time $t$, the difference in expected node representations between a future positive and negative node after $l$ layers of a GCN can be expressed as: 
\ml{\mbb{E}_{i | i \in V_{t+1}^+}[\mbf{h}^{(l)}_{t}(i)] -\mbb{E}_{j | j \in V_{t+1}^-}[\mbf{h}^{(l)}_{t}(j)] \\= 2 \cdot \mu_t \cdot (h_t^+ + h_t^- - 1)^l.}

where $h_{t}^+$ is the probability node $i$'s label at time $t+1$ is the same as its neighbor's label at time $t$ given node $i$'s label is positive at $t+1$ such that: 
{\small{\eq{h_{t}^+ = \mbb{P}(y_{t+1}(i) = y_{t}(j)\mid j \in \hat{\mc{N}}_t(i), y_{t+1}(i) = +1),}}}

and $h_{t}^-$ is the probability node $i$'s label at time $t+1$ is the same as its neighbor's label at time $t$ given node $i$'s label is negative at $t+1$ such that: 
{\small{\eq{h_{t}^- = \mbb{P}(y_{t+1}(i) = y_{t}(j)\mid j \in \hat{\mc{N}}_t(i), y_{t+1}(i)=-1).}}}

We denote $h_{t}^+$ and $h_{t}^-$ as the positive and negative class dynamic homophily levels, respectively.
\label{thm:1}
\end{theorem}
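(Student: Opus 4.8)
The plan is to prove the identity by induction on the layer count $l$, after reducing it to the behaviour of two pairs of class-conditional means. Writing the $l$-layer linear GCN output on $G_t$ as $\mathbf{h}^{(l)}_t(i)=\sum_k w^{(l)}_{ik}\,x_t(k)$, where $w^{(l)}_{ik}$ is the probability that an $l$-step walk on the self-loop-augmented graph $G_t$ started at $i$ ends at $k$ (so $\sum_k w^{(l)}_{ik}=1$ by row-stochasticity of $\hat D_t^{-1}\hat A_t$), each representation is a genuine average of the input features $x_t(k)\sim N(y_t(k)\mu_t,\sigma_t^2)$; hence, conditioning only on labels, $\mathbb{E}[\mathbf{h}^{(l)}_t(i)\mid\text{labels}]=\mu_t\,\mathbb{E}[\,y_t(K_l)\mid\text{labels}\,]$ where $K_l$ is the walk endpoint, and the $\tfrac{1}{d_t(i)+1}$ normalisations never appear explicitly. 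I would introduce $\alpha_l:=\mathbb{E}[\mathbf{h}^{(l)}_t(i)\mid y_{t+1}(i)=+1]$ and $\beta_l:=\mathbb{E}[\mathbf{h}^{(l)}_t(i)\mid y_{t+1}(i)=-1]$ (the two quantities in the theorem), together with the auxiliary means $\gamma^{+}_l:=\mathbb{E}[\mathbf{h}^{(l)}_t(i)\mid y_{t}(i)=+1]$ and $\gamma^{-}_l:=\mathbb{E}[\mathbf{h}^{(l)}_t(i)\mid y_{t}(i)=-1]$, which are needed because one message-passing step maps a node conditioned on its \emph{future} label to an average over neighbours conditioned on their \emph{current} labels.

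For the base case I would use $\mathbf{h}^{(0)}_t=\mathbf{x}_t$, so $\gamma^{+}_0-\gamma^{-}_0=2\mu_t$, and then apply the propagation rule (\ref{eq:2}) once: $\alpha_1=\mu_t\big(\mathbb{P}(y_t(j)=+1\mid j\in\hat{\mathcal N}_t(i),\,y_{t+1}(i)=+1)-\mathbb{P}(y_t(j)=-1\mid\cdots)\big)$, and since under $y_{t+1}(i)=+1$ the event $\{y_t(j)=+1\}$ coincides with $\{y_{t+1}(i)=y_t(j)\}$, this is $\mu_t(2h_t^{+}-1)$; symmetrically $\beta_1=\mu_t(1-2h_t^{-})$, so $\alpha_1-\beta_1=2\mu_t(h_t^{+}+h_t^{-}-1)$, the $l=1$ case.

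For the inductive step I would unfold one layer, $\mathbf{h}^{(l+1)}_t(i)=\tfrac{1}{d_t(i)+1}\sum_{j\in\hat{\mathcal N}_t(i)}\mathbf{h}^{(l)}_t(j)$, condition on $y_{t+1}(i)$, and then, by the law of total expectation, on the current label $y_t(j)$ of a uniformly chosen neighbour $j$. By the definitions of $h_t^{\pm}$ the neighbour-label probabilities are exactly $h_t^{+},1-h_t^{+}$ (when $y_{t+1}(i)=+1$) and $h_t^{-},1-h_t^{-}$ (when $y_{t+1}(i)=-1$), so $\alpha_{l+1}=h_t^{+}\gamma^{+}_l+(1-h_t^{+})\gamma^{-}_l$ and $\beta_{l+1}=(1-h_t^{-})\gamma^{+}_l+h_t^{-}\gamma^{-}_l$; subtracting, the cross terms combine to $\alpha_{l+1}-\beta_{l+1}=(h_t^{+}+h_t^{-}-1)(\gamma^{+}_l-\gamma^{-}_l)$. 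Running the identical one-layer computation for $\gamma^{\pm}$ gives $\gamma^{+}_l-\gamma^{-}_l=(h_t^{+}+h_t^{-}-1)(\gamma^{+}_{l-1}-\gamma^{-}_{l-1})$ — here invoking the structural assumptions on $\bs{G}_{0:T}$ to identify, hop after hop, the probability that an edge of $G_t$ joins two nodes of the same time-$t$ label with the dynamic-homophily level $h_t^{+}+h_t^{-}-1$ — and iterating down to $\gamma^{+}_0-\gamma^{-}_0=2\mu_t$ yields the stated $2\mu_t(h_t^{+}+h_t^{-}-1)^l$.

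The main obstacle is exactly this $\gamma$-recursion. A propagation step taken \emph{between} times $t$ and $t+1$ produces the dynamic-homophily factor, but a step taken \emph{within} time $t$ produces a same-label-across-an-edge factor, and the theorem's $\ell$-th power form requires these to agree at every hop; making that agreement precise is where the modelling assumptions on how the graph and labels co-evolve must do the work, rather than it being a consequence of linearity alone. Everything else is bookkeeping: linearity of expectation, the two-line telescoping above, and the observation that the self-loop in $\hat{\mathcal N}_t$ contributes only the node's own time-$t$ label and is already absorbed into the walk weights $w^{(l)}_{ik}$.
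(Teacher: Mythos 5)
Your proposal follows the same route as the paper's own proof: induction on $l$, with the base case obtained by partitioning a node's neighbours according to whether their time-$t$ label agrees with the node's time-$(t{+}1)$ label, so that the partition weights are exactly $h_t^{\pm}$. Your computation $\alpha_1-\beta_1=2\mu_t(h_t^{+}+h_t^{-}-1)$ matches the paper's Equations (18)--(19), and your inductive step is structured identically: one unfolding of the propagation rule plus the law of total expectation yields a factor of $(h_t^{+}+h_t^{-}-1)$ multiplying a difference of class-conditional means of the $l$-layer representations.

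Where you and the paper part ways is in what happens to that remaining difference. You correctly observe that after one unfolding the conditioning has shifted from the node's \emph{future} label to the neighbour's \emph{current} label, i.e.\ the quantity that appears is $\gamma_l^{+}-\gamma_l^{-}$ (means conditioned on $y_t(j)=\pm 1$) rather than $\alpha_l-\beta_l$ (means conditioned on $y_{t+1}(i)=\pm 1$), and that closing the recursion with the factor $(h_t^{+}+h_t^{-}-1)$ at every subsequent hop would require the within-$G_t$ edge-label statistics --- a static-homophily-type quantity --- to coincide with the dynamic homophily levels. That identification does not follow from the stated assumptions and is false in general; the paper's Figure 1 is built precisely around static and dynamic homophily disagreeing. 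You flag this as the main obstacle and leave it unresolved. The paper's proof does not resolve it either: its final display produces $\bigl(\mathbb{E}_{i\mid i\in V_t^{+}}[\mathbf{h}_t^{(l)}(i)]-\mathbb{E}_{i\mid i\in V_t^{-}}[\mathbf{h}_t^{(l)}(i)]\bigr)\cdot(2h_t^{D}-1)$ and then ``applies the inductive hypothesis,'' but the inductive hypothesis concerns $V_{t+1}^{\pm}$, not $V_t^{\pm}$; the substitution of one for the other is made silently (the partition weights in that display are also swapped relative to the definitions of $h_t^{\pm}$, though that is a fixable typo). So the gap you name is genuine, but it is a gap in the theorem's proof as given rather than an idea you failed to find. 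To actually close it one would need either an explicit assumption tying neighbours' joint label statistics at every hop to $h_t^{\pm}$ (e.g.\ a mean-field/conditional-independence assumption), or a reformulation of the recursion entirely in terms of future-label-conditioned means with correspondingly defined multi-hop homophily levels.
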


We prove Theorem \ref{thm:1} in Appendix \ref{pf:dist}. Before discussing the implications of Theorem \ref{thm:1}, we first discuss the intuition behind our new homophily measures. The main idea is to measure node neighbors' contributions at time $t$ towards node future labels at time $t + 1$. In this manner, the positive and negative class dynamic homophily levels accurately capture relationships across time. Theorem \ref{thm:1} tells us that the expected distance in node representations at time $t$ is a function of these homophily levels at time $t$. Specifically, the distance in node representations is \textit{polynomial} in the sum of the positive and negative class dynamic homophily levels. Thus, increases in the sum of these homophily levels increases the separation of node representations, and we expect the discriminative performance of GCNs to increase. 

While it is well understood that in the limit as $l \to \infty$ GCNs \textit{oversmooth}, and the difference in node representations becomes 0, rendering nodes indistinguishable \citep{li2018deeper}, Theorem \ref{thm:1} tells us that the rate of oversmoothing is a function of the dynamic homophily levels, thus bridging the gap between homophily and oversmoothing in the dynamic setting similar to analyses in the static setting \citep{bodnar2022neural, yan2022two}. We now demonstrate that the empirical distance concentrates around its expected distance.

\begin{theorem}
For $\epsilon > 0$, the probability that at time $t$ the distance between the empirical and expected distance after $l$ GCN layers is larger than $\epsilon$ is bounded as follows:
\eq{\resizebox{1.0\columnwidth}{!}{$\mbb{P}(\lvert (\bs{\mu_{V_{t+1}^+}^{(l)}} - \bs{\mu_{V_{t+1}^-}^{(l)}}) - (\mbb{E}_{i |i \in V_{t+1}^+}[\mbf{h}^{(l)}_{t}(i)] -\mbb{E}_{i | i \in V_{t+1}^-}[\mbf{h}^{(l)}_{t}(i)]) \rvert \geq \epsilon)$} \nn}
{\small{\eq{\hspace{10em} \leq \mc{O}(e^{-\epsilon^2 L_{t, +}^{(l)}} + e^{-\epsilon^2 L_{t,-}^{(l)}})}}},

where $\bs{\mu_{V_{t+1}^+}^{(l)}}$ and $\bs{\mu_{V_{t+1}^-}^{(l)}}$ are the empirical mean representations after $l$ GCN layers over future positive and negative nodes, respectively, and
{\small{\al{L_{t, +}^{(l)} &= \frac{ \abs{V_{t+1}^+}^2}{\sigma^4 \cdot \sum_{i \in V_{t+1}^+} \prn{\sum_{j \in \hat{\mc{N}}_t(i)} \frac{l}{d_t(j)^l}}^{2}} \\ L_{t, -}^{(l)} &= \frac{ \abs{V_{t+1}^-}^2}{\sigma^4 \cdot \sum_{i \in V_{t+1}^-} \prn{\sum_{j \in \hat{\mc{N}}_t(i)} \frac{l}{d_t(j)^l}}^{2}}.}}}
\label{thm:2}
\vspace{-1em}
\end{theorem}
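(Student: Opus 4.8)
The plan is to unroll the $l$ GCN layers into a single linear operator, observe that the resulting node representations --- and hence the empirical class means $\bs{\mu_{V_{t+1}^+}^{(l)}},\bs{\mu_{V_{t+1}^-}^{(l)}}$ --- are weighted sums of the independent Gaussian inputs $x_t(\cdot)$, and then read off the tail bound from standard Gaussian concentration, with the variances controlled by a walk-counting estimate that produces $L_{t,\pm}^{(l)}$. Concretely, iterating the linear-GCN rule~\eqref{eq:2} at time $t$ gives $\mbf{h}^{(l)}_t(i)=\sum_{v\in V_t} c^{(l)}_{iv}\,x_t(v)$, where $c^{(l)}=\mbf{P}_t^{\,l}$ and $\mbf{P}_t$ is the row-stochastic self-loop-normalized adjacency at time $t$, with $(\mbf{P}_t)_{iv}=\tfrac{1}{d_t(i)+1}$ when $v\in\hat{\mc{N}}_t(i)$ and $0$ otherwise; each row of $c^{(l)}$ sums to $1$, and $c^{(l)}_{iv}$ is supported on nodes $v$ within $l$ hops of $i$. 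Since the $x_t(v)\sim N(y_t(v)\mu_t,\sigma_t^2)$ are independent, $\bs{\mu_{V_{t+1}^+}^{(l)}}=\tfrac{1}{\abs{V_{t+1}^+}}\sum_{i\in V_{t+1}^+}\mbf{h}^{(l)}_t(i)=\sum_{v}w^+_v\,x_t(v)$, with $w^+_v=\tfrac{1}{\abs{V_{t+1}^+}}\sum_{i\in V_{t+1}^+}c^{(l)}_{iv}$, is itself Gaussian; its mean is exactly $\mbb{E}_{i\mid i\in V_{t+1}^+}[\mbf{h}^{(l)}_t(i)]$, the quantity computed in Theorem~\ref{thm:1}, and its variance is $\sigma_t^2\nrm{w^+}_2^2$. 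The same holds for $\bs{\mu_{V_{t+1}^-}^{(l)}}$.

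Next I would decouple the two classes. The empirical means are linear in overlapping sets of inputs and hence dependent, so rather than a single joint bound I would split via the triangle inequality: the event in the statement is contained in $\{\abs{\bs{\mu_{V_{t+1}^+}^{(l)}}-\mbb{E}[\bs{\mu_{V_{t+1}^+}^{(l)}}]}\ge\epsilon/2\}\cup\{\abs{\bs{\mu_{V_{t+1}^-}^{(l)}}-\mbb{E}[\bs{\mu_{V_{t+1}^-}^{(l)}}]}\ge\epsilon/2\}$, so by a union bound its probability is at most the sum of the two one-class probabilities. For each, Gaussianity and the bound $1-\Phi(s)\le\tfrac12 e^{-s^2/2}$ give $\mbb{P}(\abs{\bs{\mu_{V_{t+1}^+}^{(l)}}-\mbb{E}[\cdot]}\ge\epsilon/2)\le e^{-\epsilon^2/(8\sigma_t^2\nrm{w^+}_2^2)}$ (equivalently, Hoeffding's inequality for sub-Gaussian summands), and likewise for the negative class; it therefore suffices to upper bound $\nrm{w^\pm}_2^2$.

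The main obstacle is precisely this variance bound. Expanding, $\nrm{w^+}_2^2=\abs{V_{t+1}^+}^{-2}\sum_v\prn{\sum_{i\in V_{t+1}^+}c^{(l)}_{iv}}^2$, and the goal is to show it is at most a constant times $\sigma_t^2\,\abs{V_{t+1}^+}^{-2}\sum_{i\in V_{t+1}^+}\prn{\sum_{j\in\hat{\mc{N}}_t(i)}\tfrac{l}{d_t(j)^l}}^2$ (and symmetrically for $w^-$), which identifies the exponent with $L_{t,\pm}^{(l)}$ up to a universal constant absorbed by $\mc{O}(\cdot)$ --- the extra power of $\sigma_t$ relative to the naive $\sigma_t^2$ reflecting looseness in this step. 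I would first use Cauchy--Schwarz to reduce $\sum_v\prn{\sum_{i\in V_{t+1}^+}c^{(l)}_{iv}}^2$ to a sum over $i\in V_{t+1}^+$ of per-node quantities proportional to $\sum_v(c^{(l)}_{iv})^2=\mbb{V}[\mbf{h}^{(l)}_t(i)]/\sigma_t^2$, and then bound each per-node $l$-step transition mass by walk-counting: every walk of length $\le l$ from $i$ contributes $\prod_k(d_t(w_k)+1)^{-1}$ to $c^{(l)}_i$, and grouping walks by the first neighbor $j\in\hat{\mc{N}}_t(i)$ they visit and overcounting crudely (at most $l$ positions at which a walk can sit at $j$, each step contributing a factor $\le d_t(j)^{-1}$) yields the stated $\sum_{j\in\hat{\mc{N}}_t(i)}l\,d_t(j)^{-l}$. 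The delicate part is carrying out this transition-probability estimate so that it lands on exactly the claimed closed form while correctly tracking the dependence on $\abs{V_{t+1}^\pm}$, the degrees, and $l$; the Gaussian-concentration portion is routine, and if exact Gaussianity were considered too strong one could instead invoke McDiarmid's inequality on the empirical mean as a function of independent (truncated) inputs, with bounded-difference constants read off from the same coefficients $c^{(l)}_{iv}$.
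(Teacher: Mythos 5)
Your proposal follows essentially the same route as the paper's proof: split the two class means via the triangle inequality and a union bound, exploit that each empirical mean is a linear (hence Gaussian/Lipschitz) function of the independent inputs $x_t(\cdot)$, and obtain the exponents $L_{t,\pm}^{(l)}$ by a degree-based accounting of how often each input appears in the $l$-step aggregation — the paper phrases this last step as computing $\nrm{\nabla f^{(l),\pm}_t}_2$ and invoking Gaussian Lipschitz concentration, which for a linear map is exactly your direct Gaussian tail bound. Your walk-counting derivation of $\sum_{j\in\hat{\mc{N}}_t(i)} l/d_t(j)^l$ is a more explicit version of the paper's one-line claim that each $x_t(j)$ appears $l$ times per neighbor, and the dimensional oddity you flag (the extra power of $\sigma$ in $L_{t,\pm}^{(l)}$) is indeed present in the paper's own gradient computation.
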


\begin{figure*}[t!]
    \centering
    \includegraphics[width=\textwidth]{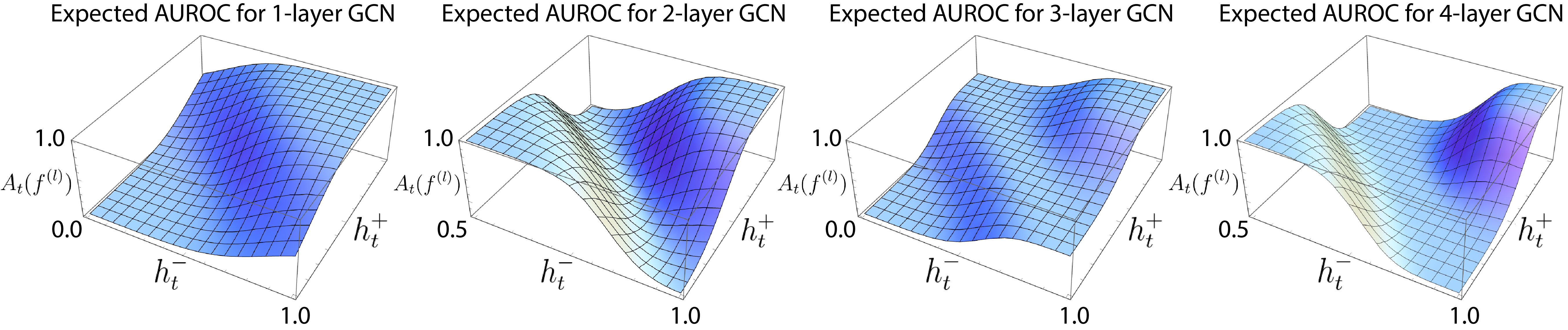}
    \caption{Expected AUROC across GCN layers as a function of dynamic homophily levels. The AUROC of odd layer GCNs is monotonically increasing in dynamic homophily levels, while the AUROC of even layer GCNs increase as both dynamic homophily levels approach 0 and 1, providing potential insights into how to design more powerful dynamic GNNs that can adapt to dynamic homophily levels across time.}
    \label{fig:auc}
\end{figure*}

We prove Theorem \ref{thm:2} in Appendix \ref{pf:con}. Theorem \ref{thm:2} tells us that the distance at time $t$ between the empirical means and the expected representations after $l$ GCN layers is small with high probability. In particular, this probability depends exponentially on $\abs{V_{t+1}^+}$, the number of positive nodes, $\abs{V_{t+1}^-}$, the number of negative nodes, and $d_t(i)$, the node degrees. If these quantities are large, the empirical means are close to their expectations with high probability, and the sum of the positive and negative dynamic homophily levels correlate with the empirical distance at time $t$. Our next theorem upper bounds GCN discriminative performance in terms of positive and negative class dynamic homophily levels.

\begin{theorem}
The expected AUROC of $f^{(l)}$ at timestep $t$ can be upper bounded as follows, 
\eq{\mbb{E}[A_t(f^{(l)})] \leq 1 - \Phi\prn{-\frac{2 \cdot \mu_t \cdot (h_t^+ + h_t^- - 1)^l}{v_{t+1}^+(l) + v_{t+1}^-(l)}}.}
where $v_{t+1}^+(l)$ and $v_{t+1}^-(l)$ are the lower bounds of the variances of the future positive and negative nodes after $l$ GCN layers, respectively, and are defined recursively in terms of the dynamic homophily levels as follows,
\al{v_{t+1}^+(l) &= {h_t^+}^2 \cdot v_{t+1}^+(l-1) + (1 - {h_t^+})^2 \cdot v_{t+1}^-(l-1) \nn \\
v_{t+1}^-(l) &= {h_t^-}^2 \cdot v_{t+1}^-(l-1) + (1-{h_t^-})^2 \cdot v_{t+1}^+(l-1) \nn \\
v_{t+1}^+(0) &= v_{t+1}^-(0) = \sigma^2_t.}
Moreover the probability that at time $t$ the distance between the empirical AUROC and the expected AUROC of a $l$ layer GCN is larger than $\epsilon$ is bounded as follows, 
{\small{\eq{\mbb{P}(\abs{A_t(f^{(l)}) - \mbb{E}[A_t(f^{(l)})]} \geq \epsilon) \leq e^{\frac{-2\cdot\abs{V_{t+1}^+}\cdot\abs{V_{t+1}^-}\cdot\epsilon^2}{\abs{V_{t+1}^+}+\abs{V_{t+1}^-}}}}}}.
\label{thm:3}
\vspace{-2em}
\end{theorem}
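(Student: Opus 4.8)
The bound on $\mbb{E}[A_t(f^{(l)})]$ is obtained by combining Lemma~\ref{lem:1} with Theorem~\ref{thm:1} and one new ingredient: a \emph{lower} bound on the denominator appearing in the argument of $\Phi$ in Lemma~\ref{lem:1}. Since the future-positive node $i$ and the future-negative node $j$ are drawn independently, $\mbb{V}_{i,j}[\mbf{h}^{(l)}_t(i)+\mbf{h}^{(l)}_t(j)]=\mbb{V}_{i\mid i\in V_{t+1}^+}[\mbf{h}^{(l)}_t(i)]+\mbb{V}_{j\mid j\in V_{t+1}^-}[\mbf{h}^{(l)}_t(j)]$, so it suffices to prove $\mbb{V}_{i\mid i\in V_{t+1}^\pm}[\mbf{h}^{(l)}_t(i)]\ge v_{t+1}^\pm(l)$ with the stated recursion. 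Given this, Theorem~\ref{thm:1} pins the numerator at $2\mu_t(h_t^++h_t^--1)^l$, and monotonicity of $x\mapsto 1-\Phi(-x)$ turns the variance lower bound into the claimed upper bound on the expected AUROC, in the regime where the numerator is nonnegative (even $l$, or $h_t^++h_t^-\ge1$ with $\mu_t>0$); in the reversed-sign case the classifier operates below chance and the same bound holds for $1-A_t$. The concentration statement for the empirical AUROC is then handled separately.

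\textbf{The variance recursion (main obstacle).} I would prove $\mbb{V}_{i\mid i\in V_{t+1}^\pm}[\mbf{h}^{(l)}_t(i)]\ge v_{t+1}^\pm(l)$ by induction on $l$, mirroring the recursion used for Theorem~\ref{thm:1}. Base case: $\mbf{h}^{(0)}_t(i)=x_t(i)$ with $x_t(i)\mid y_t(i)\sim N(y_t(i)\mu_t,\sigma_t^2)$; the law of total variance, taken over the current label $y_t(i)$ (which is random once we condition on the \emph{future} label), gives $\mbb{V}[x_t(i)\mid y_{t+1}(i)=\pm1]=\sigma_t^2+\mu_t^2\,\mbb{V}[y_t(i)\mid y_{t+1}(i)=\pm1]\ge\sigma_t^2=v_{t+1}^\pm(0)$, the extra term being the nonnegative contribution of the label mixture. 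Inductive step: write the GCN update $\mbf{h}^{(l)}_t(i)=\frac{1}{d_t(i)+1}\sum_{j\in\hat{\mc{N}}_t(i)}\mbf{h}^{(l-1)}_t(j)$ as a convex combination of the layer-$(l-1)$ representations of $i$'s neighbors, and group those neighbors by current label. Conditioned on $y_{t+1}(i)=+1$, the mass the combination places on positively-labeled neighbors has (population) mean $h_t^+$, and the positively- and negatively-labeled components are conditionally independent, so the variance of the combination is at least ${h_t^+}^2\,\mbb{V}[\text{pos.\ component}]+(1-h_t^+)^2\,\mbb{V}[\text{neg.\ component}]$ — the squares, in contrast to the linear coefficients in the mean recursion of Theorem~\ref{thm:1}, arising because $\mbb{V}[aY_1+bY_2]=a^2\mbb{V}[Y_1]+b^2\mbb{V}[Y_2]$ for independent $Y_1,Y_2$. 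The induction hypothesis lower bounds the two component variances by $v_{t+1}^+(l-1)$ and $v_{t+1}^-(l-1)$, and fluctuations of the neighbor-label proportion around $h_t^+$, together with any nonnegative cross-covariance, only add. The negative case is symmetric. This step is where essentially all of the difficulty lies: making the "positive/negative class component" abstraction precise, controlling the correlations among the neighbors' layer-$(l-1)$ representations, and reconciling the per-node neighborhood sums with the population-level homophily levels $h_t^\pm$ — this is presumably where the same regularity and conditional-independence assumptions used to establish Theorem~\ref{thm:1} re-enter.

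\textbf{Concentration of the empirical AUROC.} With $m=\abs{V_{t+1}^+}$ and $n=\abs{V_{t+1}^-}$, the empirical AUROC is, up to ties, the two-sample (Mann--Whitney) U-statistic $A_t(f^{(l)})=\frac{1}{mn}\sum_{i\in V_{t+1}^+}\sum_{j\in V_{t+1}^-}\mbf{1}[\mbf{h}^{(l)}_t(i)>\mbf{h}^{(l)}_t(j)]$ of the test-node representations, viewed — as in the setup underlying Lemma~\ref{lem:1} — as independent draws from the two class-conditional laws. Its kernel takes values in $[0,1]$, and replacing any one of the $m+n$ samples changes $A_t$ by at most $1/m$ (if a positive sample) or $1/n$ (if a negative sample), since it perturbs at most $n$ (resp.\ $m$) of the $mn$ indicator terms, each by at most $1$. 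McDiarmid's bounded-differences inequality with $\sum_k c_k^2=m\cdot\frac{1}{m^2}+n\cdot\frac{1}{n^2}=\frac1m+\frac1n$ then yields $\mbb{P}(\abs{A_t(f^{(l)})-\mbb{E}[A_t(f^{(l)})]}\ge\epsilon)\le 2\exp\!\big(-2mn\epsilon^2/(m+n)\big)$; the stated bound is the one-sided version (equivalently, with the constant $2$ absorbed). This part is routine, amounting to the classical concentration of two-sample U-statistics.
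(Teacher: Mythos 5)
Your proposal follows essentially the same route as the paper's proof: the numerator comes from Theorem~\ref{thm:1} via Lemma~\ref{lem:1}, the variance recursion is established by induction using the same class-partition of neighbors with squared homophily weights and by dropping the nonnegative cross-covariance term to get a lower bound, and the empirical-AUROC concentration is the standard two-sample U-statistic bound (the paper cites Theorem~5 of \citet{agarwal2005generalization}, which is exactly the McDiarmid argument you spell out). Your explicit handling of the sign of the numerator and the $l=0$ base case via the law of total variance are slightly more careful than the paper's write-up but do not constitute a different approach.
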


We prove Theorem \ref{thm:3} in Appendix \ref{pf:auc}. Theorem \ref{thm:3} tells us that for GCNs with an odd number of layers, the AUROC upper bound is monotonically increasing in the dynamic homophily levels for the positive and negative class, while for GCNs with an even number of layers the upper bound is monotonically increasing as both homophily levels approach 0 and 1. To demonstrate this intuition, consider 1 and 2-layer GCNs along binary stochastic block models (SBMs). When the positive and negative class homophily levels increase, more edges within communities are present, and a 1-layer GCN's AUROC can only increase. When both homophily levels are low, the SBM resembles a bipartite graph, and 2-layer GCNs are able to recover the correct representations. 

We illustrate Theorem \ref{thm:3} by visualizing the AUROC upper bound as a function of positive and negative dynamic class homophily levels for different GCN layers in Figure \ref{fig:auc}. Notice that we recover known mid-homophily pitfalls in the static setting \citep{luan2024graph} since the worst AUROC is obtained when both positive and negative dynamic class homophily levels are 0.5. More generally, the best AUROC across different positive and negative class dynamic homophily levels is achieved by different layers of a GCN. This result leads to potential insights in how to design more powerful dynamic GNNs that can adapt to both high and low dynamic homophily settings across time. Specifically, one way to overcome low dynamic homophily while maintaining high performance in high dynamic homophily settings is to leverage the intermediate representations of GCNs similar to designs in the static setting \citep{zhu2020beyond} since even layers can recover from low dynamic homophily, while odd layers may obtain the best AUROC in high dynamic homophily settings. This allows a GCN to obtain the best performance across different positive and negative class dynamic homophily levels as the dynamic graph changes across time.

\textbf{New Homophily Measure for Dynamic Graphs.} Our investigation of GCN discriminative performance in the dynamic setting suggests a new homophily measure for the dynamic setting. In the binary case, we define the overall dynamic homophily as the average of the positive and negative dynamic homophily levels. Defining dynamic homophily as an average correctly accounts for a GNNs ability to discriminate between each of the classes since if the average decreases the distance between classes also decreases. More generally, in the multiclass case, we use the following definition:

\begin{definition}[Dynamic homophily]
The dynamic homophily at timestep $t$ is defined as,
\al{h_t^D &= \frac{1}{\abs{C}}\sum_{c\in C} h_t^c, \text{ where} \\ h_t^c &= \mbb{P}(y_{t+1}(i) = y_{t}(j)\mid j \in \hat{\mc{N}}_t(i), y_{t+1}(i)=c).}
\end{definition}

In the dynamic multiclass setting, there is no single metric that fully captures the performance of a GNN similar to the static multiclass setting. In order to better capture performance in multiclass settings, in Appendix \ref{app:mult} we propose and analyze the dynamic compatibility matrix, an extension of the class compatibility in static settings~\cite{zhu2021graph,zhu2023heterophily}. Moreover, while our definition for dynamic homophily assumes a discrete dynamic graph, dynamic homophily can be easily extended to continuous dynamic graphs, and in Appendix \ref{app:cont} we propose a straightforward extension. 

\section{EXPERIMENTAL SETUP}

To demonstrate that our theoretical results hold in real-world dynamic graphs and for a variety of GNNs, we test if dynamic homophily correlates well with GNN performance across dynamic node classification tasks. We first consider tasks where the goal is to predict the spread of a signal since these tasks are fundamental problems arising in a variety of domains \citep{centola2007complex, guilbeault2018complex}. For example in public health, epidemiologists are interested in how a disease spreads, or in social network analysis, moderators aim to prevent the spread of misinformation \citep{leskovec2007dynamics, gomez2013modeling}. We next consider a biological task where the goal is to determine active proteins at each timestep, extending our experiments beyond signal spreading. Determining active proteins is an important task since it elucidates signaling pathways, information flow, and various protein functions \citep{przytycka2010toward, holme2012temporal}. We explore the relationship between dynamic homophily and GNN performance across time, considering both standard GNN designs and designs that aim to address low static homophily.

\subsection{Pseudo-synthetic Graph Datasets} 

To model the spread of infectious disease, we start by generating the dynamic graph structure. Given the structure, we generate features and labels based on an epidemiological model. In generating our structures, we utilize real-world dynamic graphs representing various types of social interactions: \textbf{UCI}, a social network at the University of California Irvine \citep{panzarasa2009patterns}, \textbf{Bitcoin}, a transaction network from a Bitcoin platform \citep{kumar2016edge}, and \textbf{Math}, a forum network on the website Math Overflow \citep{paranjape2017motifs}. In each network, structure changes over time such that $\mbf{A}_t \neq \mbf{A}_{t+1}$ for all $t$.

Given these real network structures, we synthetically generate labels and features using the susceptible-infected (SI) epidemiological model \citep{kermack1927contribution, newman2002spread}. Using the SI model, we independently generate 60 dynamic graphs for each structure. Each graph is generated by first sampling node infection parameters and statuses at $t=0$, then simulating the SI process. Here, node features include a node's infectivity and susceptibility parameters used in determining its future label and infection status at time $t$. With these dynamic graphs, we randomly split these data into equally sized sets of 20 graphs for training, validation, and testing. Thus, all nodes in the test set are unseen since the dynamic graphs are new.

\subsection{Real-world Dynamic Graph Datasets}
\textbf{Higgs Networks.} We utilize dynamic graphs from the Higgs dataset \citep{de2013anatomy}, a collection of dynamic social networks. The dataset monitors the spread of information about the discovery of the Higgs boson on Twitter before, during, and after its announcement. Here, a node's feature vector is a learnable node embedding concatenated with whether or not the information has yet reached the node. The task is to predict the times at which the information will reach each user. In our experiments, we consider four separate dynamic graphs each spanning 24 hours: \textbf{Higgs 1}, \textbf{Higgs 2}, \textbf{Higgs 3}, and \textbf{Higgs 4}, each representing different days of the Twitter network. For each dynamic graph we split the graph chronologically into 7 train graphs, 7 validation graphs, and 10 test graphs. In the Higgs networks, all nodes appear in the train, validation, and test sets. Each dynamic graph exhibits vastly different spreading behaviors since each day corresponds to a different period of the spreading process \citep{de2013anatomy}.

\begin{figure*}[h!]
     \centering
     \begin{subfigure}[b]{.325\textwidth}
         \centering
         \includegraphics[width=\textwidth]{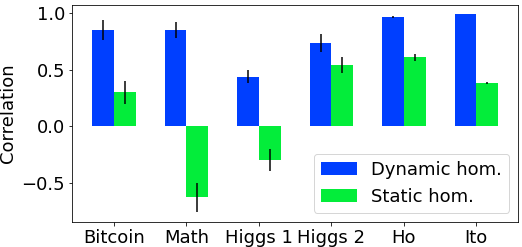}
         \caption{Correlations with GCN AUROC}
         \label{fig:sgc}
     \end{subfigure}
     \hfill
     \begin{subfigure}[b]{.325\textwidth}
         \centering
         \includegraphics[width=\textwidth]{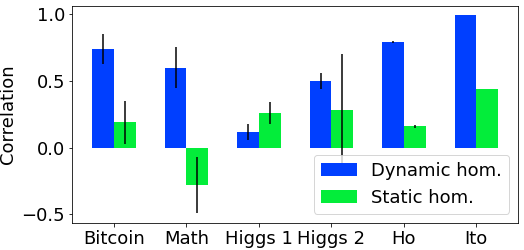}
         \caption{Correlations with GIN AUROC}
     \end{subfigure}
     \hfill
     \begin{subfigure}[b]{.325\textwidth}
         \centering
         \includegraphics[width=\textwidth]{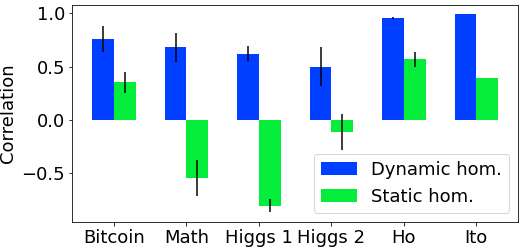}
         \caption{Correlations with GAT AUROC}
     \end{subfigure}
     \centering
     \begin{subfigure}[b]{.325\textwidth}
         \centering
         \includegraphics[width=\textwidth]{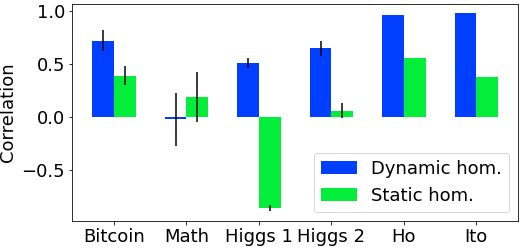}
         \caption{Correlations with SAGE AUROC}
         \label{fig:sgc}
     \end{subfigure}
     \hfill
     \begin{subfigure}[b]{.325\textwidth}
         \centering
         \includegraphics[width=\textwidth]{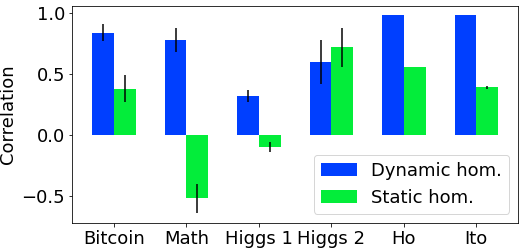}
         \caption{Correlations with GCNII AUROC}
     \end{subfigure}
     \hfill
     \begin{subfigure}[b]{.325\textwidth}
         \centering
         \includegraphics[width=\textwidth]{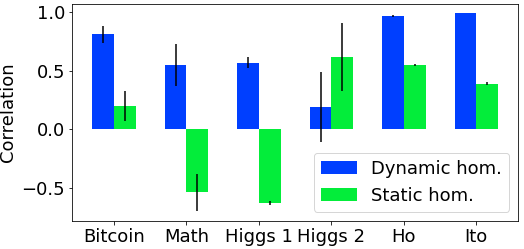}
         \caption{Correlations with FA-GCN AUROC}
     \end{subfigure}
    \caption{Mean $\pm$ standard deviations of Spearman's rank correlation coefficient between GNN AUROC and homophily measures across graphs in the test set for a subset of datasets. For most GNN and dynamic graph combinations, dynamic homophily has a higher correlation with GNN performance compared to static homophily.}
    \label{fig:3}
\end{figure*}

\textbf{Protein-Protein Interaction Networks.} We also utilize dynamic graphs from a biological repository of dynamic protein-protein interaction networks (DPPIN) \citep{fu2022dppin}. The datasets consist of dynamic protein-protein interactions of yeast cells where the task is to predict the times at which proteins are active. Here, node features include information describing proteins such as protein class and whether proteins are unknown or verified. Node features also include active status at time $t$. In our experiments, we consider four separate dynamic graphs: \textbf{Gavin}, \textbf{Ho}, \textbf{Ito}, and \textbf{Uetz} each spanning 36 timestamps of different protein interaction networks. We split the graph chronologically into 10 train graphs, 10 validation graphs, and 16 test graphs. Similar to the Higgs networks, all nodes appear in the train, validation, and test sets. The full details of all dynamic graphs are summarized in Appendix \ref{app:exps}\footnote{Code can be found at:\\https://github.com/MLD3/UnderstandingDynamicGraphs}.

\subsection{Training and Evaluation Details} 
\textbf{Models.} To explore the differences between homophilous and heterophilous GNNs, we train and evaluate many different GNNs: \textbf{SGC} \citep{wu2019simplifying}, \textbf{GCN} \citep{kipf2017semi}, \textbf{GIN} \citep{xu2018how}, and \textbf{GAT} \citep{velickovic2018graph} are homophilous baselines since they do not explicitly include designs to address heterophily, while \textbf{SAGE} \citep{hamilton2017inductive}, \textbf{GCNII} \citep{li2018deeper}, and \textbf{FA-GCN} \citep{bo2021beyond} are heterophilous since they adopt additional designs to improve performance in heterophilous settings \citep{LovelandZHFSK23discrepancies, zhu2023heterophily}. Given the construction of the dynamic graphs and existing results in \citet{fu2022dppin}, we do not expect dynamic GNNs that leverage the full temporal signal to perform better than static ones. Yet for completeness, in the Appendix we include results on the following dynamic GNNs: \textbf{DGNN} \citep{manessi2020dynamic, narayan2018learning, chen2022gc}, \textbf{GCRN} \citep{seo2018structured}, and \textbf{EvolveGCN} \citep{pareja2020evolvegcn}, where we find consistent results for the dynamic GNNs.

\noindent \textbf{Training and Evaluation.} During training, we apply a GNN to each static graph at time $t$ making predictions about the labels $\mbf{y}_{t+1}$ at the next timestep, minimizing the binary cross entropy loss. In the infectious disease and Higgs networks we only consider predictions for nodes that the signal has not reached. We evaluate each model on the same held out test set of graphs, measuring the AUROC. We then report the mean AUROC across time for each dynamic graph, and the mean and standard deviation of the AUROCs across all graphs in the test set. To measure to what extent dynamic homophily captures GNN discriminative power, we measure Spearman's rank correlation coefficient between dynamic homophily and GNN performance across time for each graph in the test set. We report the mean and standard deviation of the correlations across the graphs in the test set. For comparison, we also measure the correlation between static homophily with respect to snapshots of the dynamic graph and GNN performance. We provide training and evaluation details including computation of dynamic and static homophily, hyperparameter procedures, and reproducibility guidelines in Appendix \ref{app:exps}.

\begin{figure*}[h!]
     \centering
     \begin{subfigure}[b]{.325\textwidth}
         \centering
         \includegraphics[width=\textwidth]{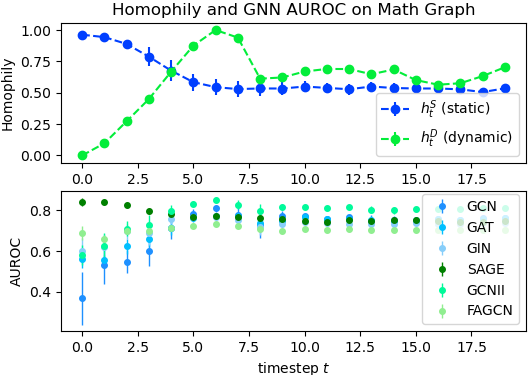}
         \caption{Math dynamic graph}
         \label{fig:kreg}
     \end{subfigure}
     \hfill
     \begin{subfigure}[b]{.325\textwidth}
         \centering
         \includegraphics[width=\textwidth]{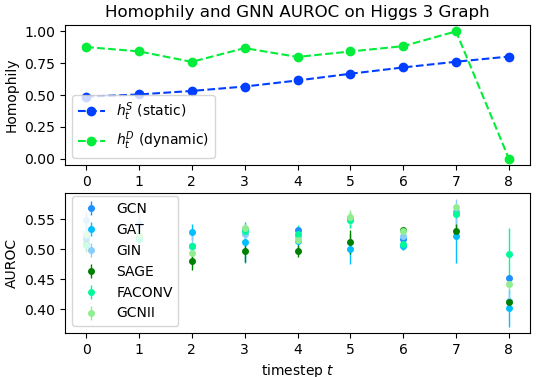}
         \caption{Higgs 3 dynamic graph}
     \end{subfigure}
     \hfill
     \begin{subfigure}[b]{.325\textwidth}
         \centering
         \includegraphics[width=\textwidth]{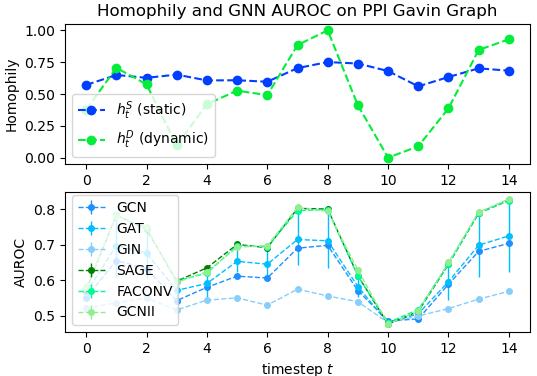}
         \caption{Gavin dynamic graph}
     \end{subfigure}
    \caption{Mean and standard deviations of dynamic homophily, static homophily, and AUROC across all graphs in the test set for the Math, Higgs 3, and Gavin dynamic graph. For all three dynamic graphs, static homophily stays high across time, while dynamic homophily and GNN performances exhibit the same trends. We do not connect AUROCs across signal spreading tasks since we only predict for nodes which the signal has not reached.}
    \label{fig:4}
\end{figure*}

\begin{table*}[h!]
\begingroup
\setlength{\tabcolsep}{6pt} 
\scriptsize
\centering
\caption{Mean ± standard deviation of GNN AUROC across time for each dynamic graph in the test set. In general, GNNs with heterophilous designs (SAGE, GCNII, FA-GCN) outperform GNNs with homophilous designs (SGC, GCN, GIN, GAT), suggesting that heterophilous design choices can help alleviate low dynamic homophily.}
\label{tab:3}
\resizebox{\textwidth}{!}{
\begin{tabular}{ l @{\qquad} c c c l c c c l c c c } 
\toprule
& \multicolumn{3}{c }{\textbf{Epidemiological Dynamic Graphs}} && \multicolumn{3}{c }{\textbf{Higgs Dynamic Graphs}} && \multicolumn{3}{c }{\textbf{Protein-protein Dynamic Graphs}} \\
\cmidrule{2-4} \cmidrule{6-8} \cmidrule{10-12}
 \textbf{GNN}  &  \textbf{UCI} & \textbf{Bitcoin} & \textbf{Math} && \textbf{Higgs 1} & \textbf{Higgs 2} & \textbf{Higgs 3} && \textbf{Gavin} & \textbf{Ho} & \textbf{Ito} \\ 
 \midrule
  \textbf{SGC} (\textit{hom.}) & 0.84 ± 0.01 & 0.85 ± 0.01 & 0.82 ± 0.01 && 0.54 ± 0.02 & 0.53 ± 0.01 & \grn{0.53 ± 0.00} && 0.60 ± 0.00 & 0.60 ± 0.00 & 0.67 ± 0.00 \\
 \textbf{GCN} (\textit{hom.}) & 0.84 ± 0.01 & 0.85 ± 0.01 & 0.82 ± 0.01 && 0.56 ± 0.01 & 0.54 ± 0.01 & 0.52 ± 0.01 && 0.61 ± 0.00 & 0.60 ± 0.00 & 0.67 ± 0.00 \\
\textbf{GIN} (\textit{hom.}) & 0.85 ± 0.01 & \grn{0.88 ± 0.01} & 0.86 ± 0.01 && 0.49 ± 0.00 & 0.52 ± 0.01 & 0.52 ± 0.00 && 0.54 ± 0.00 & 0.53 ± 0.00 & 0.63 ± 0.00 \\
 \textbf{GAT} (\textit{hom.}) & 0.79 ± 0.01 & 0.84 ± 0.01 & 0.81 ± 0.01 && 0.60 ± 0.02 & 0.51 ± 0.01 & 0.50 ± 0.01 && 0.63 ± 0.03 & 0.60 ± 0.01 & \grn{0.68 ± 0.00} \\
 \midrule
 \textbf{SAGE} (\textit{het.}) & \grn{0.86 ± 0.01} & 0.87 ± 0.01 & \grn{0.88 ± 0.01} && \grn{0.61 ± 0.00} & \grn{0.55 ± 0.00} & 0.50 ± 0.01 && \grn{0.68 ± 0.00} & 0.68 ± 0.00 & \grn{0.68 ± 0.00} \\
  \textbf{GCNII} (\textit{het.}) & 0.84 ± 0.01 & \grn{0.88 ± 0.01} & \grn{0.88 ± 0.01} && 0.56 ± 0.01 & 0.53 ± 0.00 & 0.52 ± 0.01 && \grn{0.68 ± 0.00} & \grn{0.69 ± 0.00} & \grn{0.68 ± 0.00} \\
 \textbf{FA-GCN} (\textit{het.}) & 0.79 ± 0.01 & 0.83 ± 0.01 & 0.80 ± 0.01 && 0.57 ± 0.01 & 0.54 ± 0.01 & 0.52 ± 0.00 && \grn{0.68 ± 0.00} & \grn{0.69 ± 0.00} & \grn{0.68 ± 0.00} \\
 \bottomrule
\end{tabular}
}
\endgroup
\end{table*}

\section{EXPERIMENTAL RESULTS}
We empirically compare a variety of GNNs based on their performance as dynamic homophily changes. We aim to answer the following research questions: 
\begin{itemize}
    \item \textbf{RQ1}: To what extent do dynamic homophily and static homophily, based on snapshots in time, correlate with GNN discriminative performance in general dynamic node classification tasks where our assumptions made in our analyses are violated?
         
    \item \textbf{RQ2}: Do the observed trends change when considering GNNs specifically designed to perform well in settings with low static homophily? And how do such GNNs perform in settings with low dynamic homophily?
\end{itemize}

\subsection{(RQ1) How does static and dynamic homophily correlate with GNN AUROC?}
In Figure \ref{fig:3}, we compare average correlations obtained by dynamic and static homophily with GNN performance for a representative subset of GNN and dynamic graph combinations. We observe similar trends on the set of full results and present them in the Appendix \ref{sec:rem}. Across 43 out of 44 homophilous GNN and dynamic graph combinations, the average correlation between dynamic homophily and GNN performance exceeds the correlation between static homophily and GNN performance. Across the 44 combinations, dynamic homophily achieves a median correlation of 0.75 interquartile range (IQR): (0.59, 0.96) which is significantly greater than the median correlation achieved by static homophily of 0.26 IQR: (-0.15, 0.47). 

Data in Figure \ref{fig:3} are averaged across graphs in the test set, but also across time. To fully capture changes to GNN performance, we measure AUROC, dynamic homophily, and static homophily at each timestep for the Math, Higgs 3, and Gavin graphs (Figure \ref{fig:4}). While dynamic homophily exhibits the same trends as homophilous GNN performance, static homophily does not since it stays high across all three graphs for all timesteps. We observe consistent results on the remaining graphs and include them in Appendix \ref{app:plots}.

\subsection{(RQ2) How do heterophilous GNNs perform in low dynamic homophily?}

Similar to trends across homophilous GNNs, we find that dynamic homophily also correlates well with heterophilous GNN performance, while static homophily does not. Across the 33 combinations of heterophilous GNNs and dynamic graphs, dynamic homophily achieves a median correlation of 0.78 IQR: (0.55, 0.97) which is significantly greater than the median correlation achieved by static homophily of 0.38 IQR: (0.04, 0.54). GNNs with heterophilous designs also tend to perform higher on average compared to GNNs with homophilous designs (Table \ref{tab:3}). In Figure \ref{fig:4}, we compare the performance across time between the two design types. When dynamic homophily is high, the task is generally easy, and we observe strong discriminative performance from both homophilous and heterophilous GNNs. However, when dynamic homophily is low, the task becomes much more difficult, and performance gaps emerge between the two design types in favor of the heterophilous GNNs. The results suggest that heterophilous designs in the static setting could also alleviate low dynamic homophily in the dynamic setting.

Although dynamic homophily generally correlates well with the discriminative performance for GNNs of different designs and dynamic graph datasets, there are specific combinations of GNNs and dynamic graphs in which dynamic homophily is only weakly correlated with GNN performance. Specifically, the performance of \textbf{SAGE} does not correlate with dynamic homophily for the Math graph. This may be due to several limitations. First, our theoretical analysis relies on the GCN aggregation. Thus, dynamic homophily is not guaranteed to correlate well with GNNs that leverage more complex aggregations. Second, as shown in Theorems \ref{thm:2} and \ref{thm:3}, dynamic homophily may not reflect the distance in node representations at a particular timestep if there are too few nodes in both classes at that timestep.



\section{DISCUSSION AND CONCLUSION}

In the context of node classification on dynamic graphs, we present the first theoretical and empirical analysis of GNNs and homophily. Specifically, we analyze GCNs and characterize their discriminative performance in dynamic settings. Based on our analysis, we propose dynamic homophily, a new homophily measure that characterizes GCN discriminative power in a dynamic setting. 

Our work builds on previous work that has focused on GNN performance and homophily in static settings. More specifically, \citet{ma2022is}, \citet{yan2022two}, \citet{li2022finding} and \citet{Zhu0IKF24} all measured GCN node representations and their relationship to node-level homophily in static graphs. Their analyses focus on distance between classes in order to improve GNN performance in static settings. We also look at the node representations, but in addition to studying the distance between classes, we analyze the distance within classes which provides additional insights into designing better GNNs for dynamic settings. 

Our analysis suggests that leveraging intermediate GNN representations can mitigate low dynamic homophily and allow a GNN to adapt to the dynamic graph as it changes across time. In our empirical analyses, we demonstrate that dynamic homophily is highly correlated with the performance of many variations of GNNs across a variety of dynamic node classification tasks from epidemiology, social network analysis, and molecular biology. More broadly, our results indicate that popular homophilous and heterophilous GNNs are not robust to low dynamic homophily, and as a result new approaches may be warranted. Going forward, our results have the potential to inform future GNN designs. Moreover, our work is an important building block for future work that can analyze more complex dynamic settings such as homophily in the context of local performance discrepancies extending the global view of dynamic homophily and general dynamic graph tasks beyond node classification. Overall, our work represents a step toward understanding the discriminative power of GNNs in the dynamic setting.

\subsubsection*{Acknowledgements}

This material is based on work supported by the U.S. Department of Energy, Office of Science, Office of Advanced Scientific Computing Research, Department of Energy Computational Science Graduate Fellowship under Award Number DE-SC0023112. It was also partially supported by National Science Foundation under Grant No. IIS~2212143. We thank the anonymous reviewers and members of the MLD3 lab for their valuable feedback.


\section*{Checklist}



 \begin{enumerate}

 \item For all models and algorithms presented, check if you include:
 \begin{enumerate}
   \item A clear description of the mathematical setting, assumptions, algorithm, and/or model. \textbf{Yes}
   \item An analysis of the properties and complexity (time, space, sample size) of any algorithm. \textbf{Yes}
   \item (Optional) Anonymized source code, with specification of all dependencies, including external libraries. \textbf{Yes}.
 \end{enumerate}

 \item For any theoretical claim, check if you include:
 \begin{enumerate}
   \item Statements of the full set of assumptions of all theoretical results. \textbf{Yes}
   \item Complete proofs of all theoretical results. \textbf{Yes}
   \item Clear explanations of any assumptions. \textbf{Yes}
 \end{enumerate}

 \item For all figures and tables that present empirical results, check if you include:
 \begin{enumerate}
   \item The code, data, and instructions needed to reproduce the main experimental results (either in the supplemental material or as a URL). \textbf{No}
   \item All the training details (e.g., data splits, hyperparameters, how they were chosen). \textbf{Yes}
         \item A clear definition of the specific measure or statistics and error bars (e.g., with respect to the random seed after running experiments multiple times). \textbf{Yes}
         \item A description of the computing infrastructure used. (e.g., type of GPUs, internal cluster, or cloud provider). \textbf{Yes}
 \end{enumerate}

 \item If you are using existing assets (e.g., code, data, models) or curating/releasing new assets, check if you include:
 \begin{enumerate}
   \item Citations of the creator If your work uses existing assets. \textbf{Yes}
   \item The license information of the assets, if applicable. \textbf{Not Applicable}
   \item New assets either in the supplemental material or as a URL, if applicable. \textbf{Not Applicable}
   \item Information about consent from data providers/curators. \textbf{Not Applicable}
   \item Discussion of sensible content if applicable, e.g., personally identifiable information or offensive content. \textbf{Not Applicable}
 \end{enumerate}

 \item If you used crowdsourcing or conducted research with human subjects, check if you include:
 \begin{enumerate}
   \item The full text of instructions given to participants and screenshots. \textbf{Not Applicable}
   \item Descriptions of potential participant risks, with links to Institutional Review Board (IRB) approvals if applicable. \textbf{Not Applicable}
   \item The estimated hourly wage paid to participants and the total amount spent on participant compensation. \textbf{Not Applicable}
 \end{enumerate}

 \end{enumerate}

\appendix

\onecolumn
\aistatstitle{Supplementary Material}


\tableofcontents

\newpage

\section{REMAINING EXPERIMENTAL RESULTS \label{sec:rem}}

In Figure \ref{fig:5}, we present the average correlations obtained by dynamic and static homophily with GNN performance on the remaining GNN and dataset combinations. We observe consistent trends from the main paper and for most combinations dynamic homophily obtains a higher correlation than static homophily. In Table \ref{tab:2}, we compare the average performance obtained by all GNNs on all dynamic graph datasets. We observe consistent trends from the main paper and generally GNNs with heterophilous designs outperform GNNs with homophilous designs. 

\begin{figure*}[h!]
     \centering
     \begin{subfigure}[b]{.325\textwidth}
         \centering
         \includegraphics[width=\textwidth]{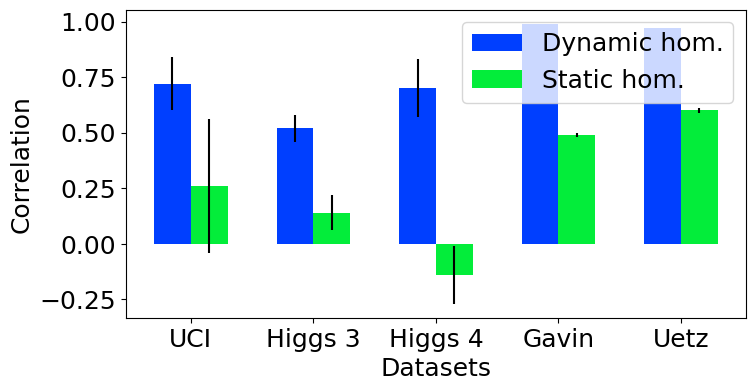}
         \caption{Correlations with GCN AUROC}
         \label{fig:sgc}
     \end{subfigure}
     \hfill
     \begin{subfigure}[b]{.325\textwidth}
         \centering
         \includegraphics[width=\textwidth]{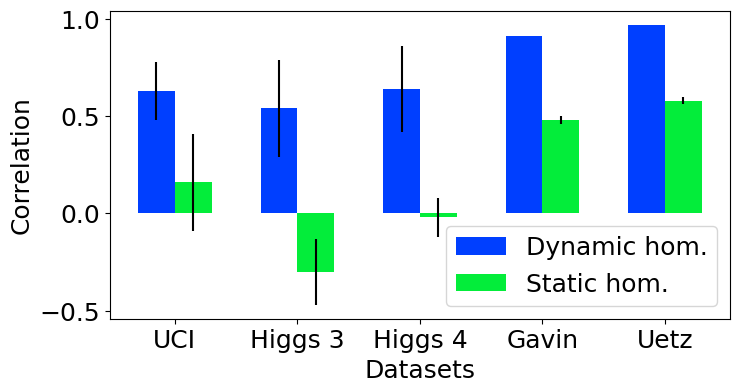}
         \caption{Correlations with GIN AUROC}
     \end{subfigure}
     \hfill
     \begin{subfigure}[b]{.325\textwidth}
         \centering
         \includegraphics[width=\textwidth]{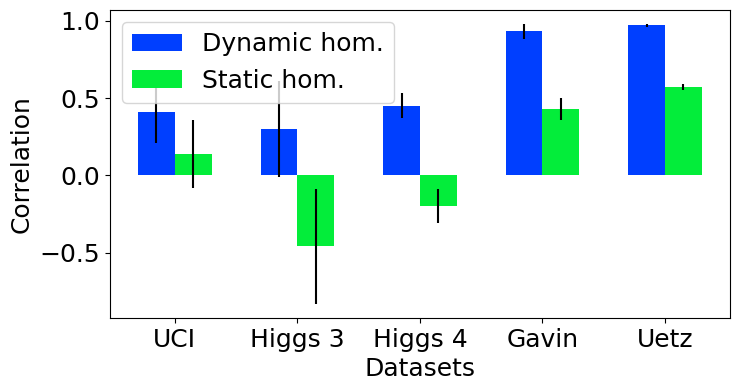}
         \caption{Correlations with GAT AUROC}
     \end{subfigure}
     \centering
     \begin{subfigure}[b]{.325\textwidth}
         \centering
         \includegraphics[width=\textwidth]{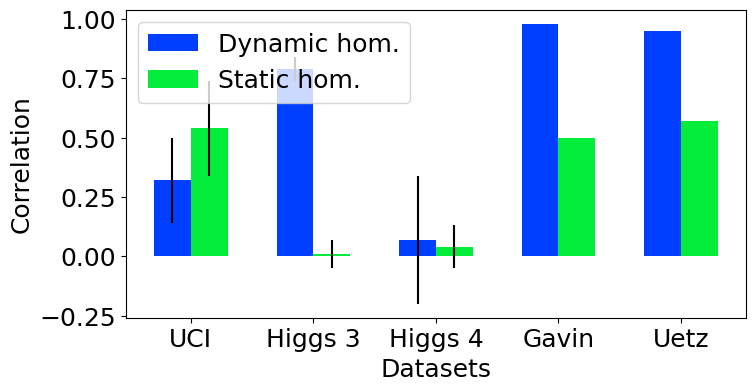}
         \caption{Correlations with SAGE AUROC}
         \label{fig:sgc}
     \end{subfigure}
     \hfill
     \begin{subfigure}[b]{.325\textwidth}
         \centering
         \includegraphics[width=\textwidth]{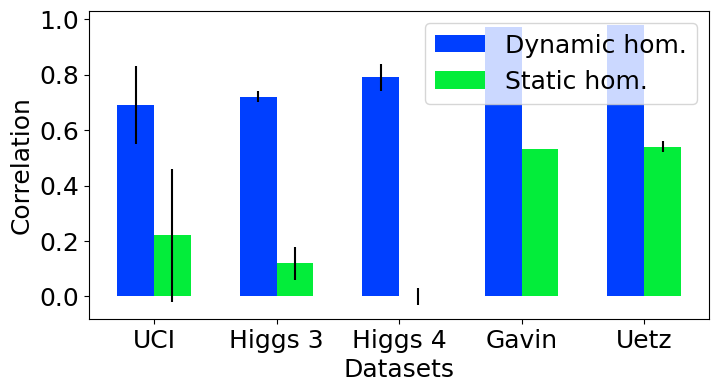}
         \caption{Correlations with GCNII AUROC}
     \end{subfigure}
     \hfill
     \begin{subfigure}[b]{.325\textwidth}
         \centering
         \includegraphics[width=\textwidth]{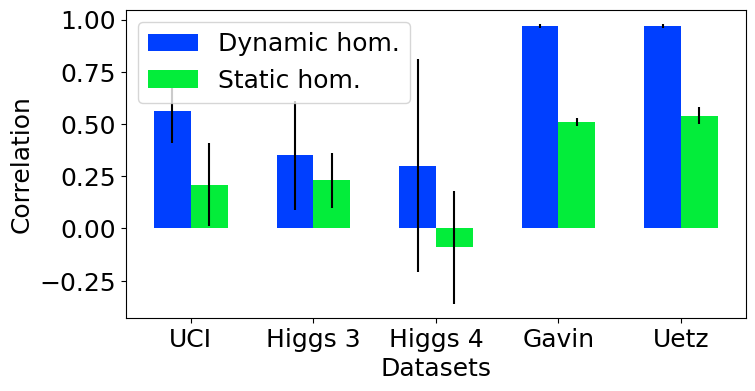}
         \caption{Correlations with FA-GCN AUROC}
     \end{subfigure}
     \begin{subfigure}[b]{.325\textwidth}
         \centering
         \includegraphics[width=\textwidth]{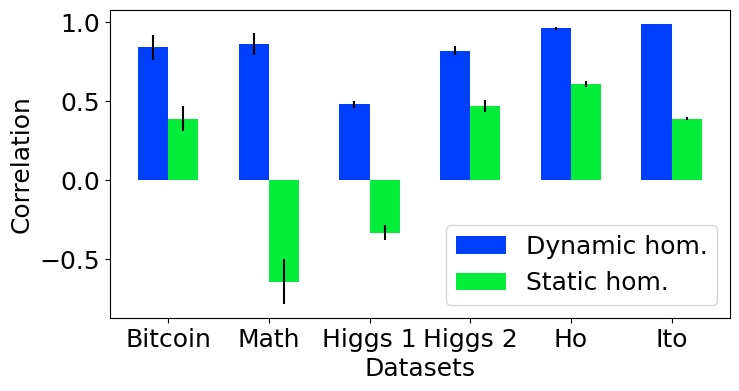}
         \caption{Correlations with SGC AUROC}
     \end{subfigure}
     \begin{subfigure}[b]{.325\textwidth}
         \centering
         \includegraphics[width=\textwidth]{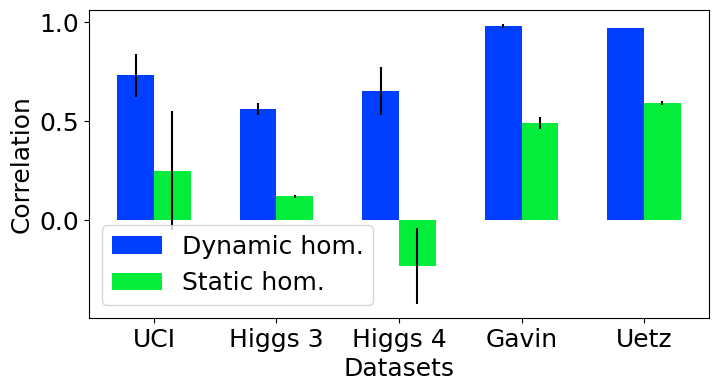}
         \caption{Correlations with SGC AUROC}
     \end{subfigure}
    \caption{\footnotesize Mean $\pm$ standard deviations of Spearman's rank correlation coefficient between GNN AUROC and homophily measures across graphs in the test set for the remaining dynamic graph datasets. For most GNN and dynamic graph combinations, dynamic homophily has a higher correlation with GNN performance compared to static homophily.}
    \label{fig:5}
\end{figure*}

\begin{table*}[h!]
\begingroup
\setlength{\tabcolsep}{6pt} 
\scriptsize
\centering
\caption{Mean ± standard deviation of GNN AUROC across time for each dynamic graph in the test set. In general, GNNs with heterophilous designs (SAGE, GCNII, FA-GCN) outperform GNNs with homophilous designs (SGC, GCN, GIN, GAT), suggesting that heterophilous design choices can help alleviate low dynamic homophily in the dynamic setting.}
\label{tab:2}
\resizebox{\textwidth}{!}{
\begin{tabular}{ l @{\qquad} c c c l c c c c l c c c c } 
\toprule
& \multicolumn{3}{c }{\textbf{Epidemiological Dynamic Graphs}} && \multicolumn{4}{c }{\textbf{Higgs Dynamic Graphs}} && \multicolumn{4}{c }{\textbf{Protein-protein Dynamic Graphs}} \\
\cmidrule{2-4} \cmidrule{6-9} \cmidrule{11-14}
 \textbf{GNN}  &  \textbf{UCI} & \textbf{Bitcoin} & \textbf{Math} && \textbf{Higgs 1} & \textbf{Higgs 2} & \textbf{Higgs 3} & \textbf{Higgs 4} && \textbf{Gavin} & \textbf{Ho} & \textbf{Ito} & \textbf{Uetz}\\ 
 \midrule
  \textbf{SGC} (\textit{hom.}) & 0.84 ± 0.01 & 0.85 ± 0.01 & 0.82 ± 0.01 && 0.54 ± 0.02 & 0.53 ± 0.01 & \grn{0.53 ± 0.00} & 0.53 ± 0.01 && 0.60 ± 0.00 & 0.60 ± 0.00 & 0.67 ± 0.00 & 0.65 ± 0.00 \\
 \textbf{GCN} (\textit{hom.}) & 0.84 ± 0.01 & 0.85 ± 0.01 & 0.82 ± 0.01 && 0.56 ± 0.01 & 0.54 ± 0.01 & 0.52 ± 0.01 & \grn{0.53 ± 0.00} && 0.61 ± 0.00 & 0.60 ± 0.00 & 0.67 ± 0.00 & 0.65 ± 0.00 \\
\textbf{GIN} (\textit{hom.}) & 0.85 ± 0.01 & \grn{0.88 ± 0.01} & 0.86 ± 0.01 && 0.49 ± 0.00 & 0.52 ± 0.01 & 0.52 ± 0.00 & 0.52 ± 0.00 && 0.54 ± 0.00 & 0.53 ± 0.00 & 0.63 ± 0.00 & 0.64 ± 0.00 \\
 \textbf{GAT} (\textit{hom.}) & 0.79 ± 0.01 & 0.84 ± 0.01 & 0.81 ± 0.01 && 0.60 ± 0.02 & 0.51 ± 0.01 & 0.50 ± 0.01 & 0.50 ± 0.01 && 0.63 ± 0.03 & 0.60 ± 0.01 & \grn{0.68 ± 0.00} & \grn{0.67 ± 0.00} \\
 \midrule
 \textbf{SAGE} (\textit{het.}) & \grn{0.86 ± 0.01} & 0.87 ± 0.01 & \grn{0.88 ± 0.01} && \grn{0.61 ± 0.00} & \grn{0.55 ± 0.00} & 0.50 ± 0.01 & 0.49 ± 0.00 && \grn{0.68 ± 0.00} & 0.68 ± 0.00 & \grn{0.68 ± 0.00} & \grn{0.67 ± 0.00} \\
  \textbf{GCNII} (\textit{het.}) & 0.84 ± 0.01 & \grn{0.88 ± 0.01} & \grn{0.88 ± 0.01} && 0.56 ± 0.01 & 0.53 ± 0.00 & 0.52 ± 0.01 & \grn{0.53 ± 0.00} && \grn{0.68 ± 0.00} & \grn{0.69 ± 0.00} & \grn{0.68 ± 0.00} & \grn{0.67 ± 0.00} \\
 \textbf{FA-GCN} (\textit{het.}) & 0.79 ± 0.01 & 0.83 ± 0.01 & 0.80 ± 0.01 && 0.57 ± 0.01 & 0.54 ± 0.01 & 0.52 ± 0.00 & 0.52 ± 0.00 && \grn{0.68 ± 0.00} & \grn{0.69 ± 0.00} & \grn{0.68 ± 0.00} & \grn{0.67 ± 0.00} \\
 \bottomrule
\end{tabular}
}
\endgroup
\end{table*}

\vfill

\section{PROOFS OF MAIN RESULTS}

\subsection{Characterization of AUROC after $l$ GCN Layers \label{pf:lem1}}

\begin{lemma}
The expected AUROC of $f^{(l)}$ at timestep $t$ can be written as follows, 
\eq{\mbb{E}[A_t(f^{(l)})] = 1 - \Phi\prn{-\frac{\mbb{E}_{i | i \in V_{t+1}^+}\brt{\mbf{h}^{(l)}_{t}(i)} -\mbb{E}_{j | j \in V_{t+1}^-}\brt{\mbf{h}^{(l)}_{t}(j)}}{\mbb{V}_{i, j| i \in V_{t+1}^+, j \in V_{t+1}^-}[\mbf{h}^{(l)}_{t}(i)  + \mbf{h}^{(l)}_{t}(j)]}}}
where $\Phi$ is the cumulative distribution function of the Gaussian distribution, and $V_{t+1}^+$ and $V_{t+1}^-$ are the nodes in the positive and negative classes at time $t+1$, respectively.
\label{lem:2}
\end{lemma}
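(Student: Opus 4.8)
The plan is to reduce the statement to the classical Mann--Whitney characterization of the AUROC together with the fact that a linear GCN produces Gaussian node representations. First I would recall that the AUROC of a real-valued scoring function equals the probability that it assigns a strictly higher score to a uniformly random positive example than to a uniformly random negative example (the Wilcoxon--Mann--Whitney identity; see \citet{agarwal2005generalization}). In our setting the relevant scores at time $t$ are the scalar representations $\mathbf{h}^{(l)}_t(i)$, and the two populations are the future-positive nodes $V_{t+1}^+$ and the future-negative nodes $V_{t+1}^-$, so
\[
\mathbb{E}[A_t(f^{(l)})] = \mathbb{P}\!\left(\mathbf{h}^{(l)}_t(i) - \mathbf{h}^{(l)}_t(j) > 0\right),
\]
where $i$ is drawn uniformly from $V_{t+1}^+$, $j$ uniformly and independently from $V_{t+1}^-$, and the probability is also over the Gaussian features $\mathbf{X}_t$.

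Next I would use linearity of $f^{(l)}$: unrolling the mean-aggregation propagation rule, each representation has the form $\mathbf{h}^{(l)}_t(i) = \sum_{k} [\hat{\mathbf{A}}_t^{\,l}]_{ik}\, x_t(k)$, an affine function of the jointly Gaussian features, hence itself Gaussian. Because $V_{t+1}^+$ and $V_{t+1}^-$ are disjoint node sets, the draws of $i$ and $j$ and (by the per-node feature independence in the assumptions) the feature contributions to the two scores are independent, so $\Delta := \mathbf{h}^{(l)}_t(i) - \mathbf{h}^{(l)}_t(j)$ is Gaussian with mean $\mathbb{E}_{i\in V_{t+1}^+}[\mathbf{h}^{(l)}_t(i)] - \mathbb{E}_{j\in V_{t+1}^-}[\mathbf{h}^{(l)}_t(j)]$ and variance $\mathbb{V}[\mathbf{h}^{(l)}_t(i) + \mathbf{h}^{(l)}_t(j)]$ (the two independent terms' variances add, so the sign inside is immaterial). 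Evaluating the Gaussian CDF then gives $\mathbb{P}(\Delta>0) = \Phi(\mathrm{mean}/\mathrm{sd}) = 1 - \Phi(-\mathrm{mean}/\mathrm{sd})$, which is exactly the claimed expression once the denominator $\mathbb{V}[\cdot]$ is read as the standard deviation of the pooled difference.

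The delicate step, and the one I expect to require the most care, is that nodes within a class need not share the same representation distribution: degrees and local neighborhood structure vary, so $[\hat{\mathbf{A}}_t^{\,l}]_{i\cdot}$ differs across $i$, and the pooled class-conditional score is in general a Gaussian \emph{mixture}, not a single Gaussian. I would resolve this via the law of total variance: the pooled moments $\mathbb{E}_{i\in V_{t+1}^+}[\mathbf{h}^{(l)}_t(i)]$ and $\mathbb{V}_{i\in V_{t+1}^+}[\mathbf{h}^{(l)}_t(i)]$ already absorb both the within-node feature noise and the across-node variation of the conditional means, and one then treats the pooled per-class scores as Gaussian with these moments — the standard modeling choice in this line of work \citep{ma2022is, luan2024graph}, consistent with the paper's stated assumptions. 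With that in place the lemma follows, and it sets up Theorems~\ref{thm:1}--\ref{thm:3}, which respectively compute the numerator (the between-class separation) and bound the denominator (the within-class variances) in terms of the dynamic homophily levels.
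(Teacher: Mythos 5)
Your proposal matches the paper's proof essentially step for step: both invoke the Wilcoxon--Mann--Whitney identity to write the expected AUROC as $\mbb{P}(\mbf{h}^{(l)}_t(i) - \mbf{h}^{(l)}_t(j) > 0)$ over a random future-positive/future-negative pair, treat that difference as Gaussian, and evaluate its CDF at zero. You are in fact somewhat more careful than the paper, which simply asserts that $Z_t$ is Gaussian, whereas you explicitly flag (and resolve via the standard modeling convention) that the pooled class-conditional score is strictly a Gaussian mixture, and you also note that the denominator must be read as a standard deviation and that the sign inside the variance is immaterial by independence.
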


\begin{proof}
The expected AUROC of $f^{(l)}$ at timestep $t$ is equivalent to the probability that $f^{(l)}$ ranks a random positive node higher than a random negative node. Thus, the expected AUROC of $f^{(l)}$ at timestep $t$ can also be expressed as,
\al{\mbb{E}[A_t(f^{(l)})] &= \mbb{P}(f^{(l)}(i) > f^{(l)}(j) \mid i \in V_{t+1}^+, j \in V_{t+1}^-) \\
&= \mbb{P}(f^{(l)}(i) - f^{(l)}(j) > 0 \mid i \in V_{t+1}^+, j \in V_{t+1}^-)}

Defining random variable $Z_t$ as the linear combination $Z_t = f^{(l)}(i) - f^{(l)}(j)$ given a random node $i \in V_{t+1}^+$ and random node $j \in V_{t+1}^-$, the expected AUROC amounts to the quantity: 
\eq{\mbb{E}[A_t(f^{(l)})] = 1 - \Phi_{Z_t}(0)}

where $\Phi$ is the cumulative distribution function of the random variable $Z_t$. Furthermore, since $Z_t$ is Gaussian distributed, we can apply the definition of the CDF for Gaussian distributions and express the expected AUROC at timestep $t$ as the following: 
\eq{\mbb{E}[A_t(f^{(l)})] = 1 - \Phi_{Z_t}(0) = 1 - \Phi\prn{-\frac{\mbb{E}[Z_t] }{\mbb{V}[Z_t]}} \label{eq:20}}
Substituting the definition of $Z_t$ into Equation \ref{eq:20} completes the proof.
\end{proof}

\subsection{Expected Distance for $l$ layer GNNs \label{pf:dist}}

\begin{theorem}
At time $t$ the difference in expected node representations between a future positive and negative node after $l$ layers of a GCN can be expressed as: 
\eq{\mbb{E}_{i | i \in V_{t+1}^+}\brt{\mbf{h}^{(l)}_{t}(i)} -\mbb{E}_{j | j \in V_{t+1}^-}\brt{\mbf{h}^{(l)}_{t}(j)} = 2 \cdot \mu_t \cdot (h_t^+ + h_t^- - 1)^l.}

where $\mu_t$ is the magnitude of the mean for all $\mbf{x}_t(i)$, $h_{t}^+$ is the probability node $i$'s label at time $t+1$ is the same as its neighbor's label at time $t$ given node $i$'s label is positive at $t+1$ such that: 
\eq{h_{t}^+ = \mbb{P}(y_{t+1}(i) = y_{t}(j)\mid j \in \hat{\mc{N}}_t(i), y_{t+1}(i) = +1),}

and $h_{t}^-$ is the probability node $i$'s label at time $t+1$ is the same as its neighbor's label at time $t$ given node $i$'s label is negative at $t+1$ such that: 
\eq{h_{t}^- = \mbb{P}(y_{t+1}(i) = y_{t}(j)\mid j \in \hat{\mc{N}}_t(i), y_{t+1}(i)=-1).}

We denote $h_{t}^+$ and $h_{t}^-$ as the positive and negative dynamic homophily levels, respectively.
\end{theorem}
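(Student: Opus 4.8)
The plan is to prove the identity by induction on the number of GCN layers $l$. First I would rewrite the $l$-layer linear GCN applied at time $t$ in operator form, $\mbf{h}^{(l)}_t = \hat{\mbf{A}}_t^{\,l}\mbf{x}_t$, where $\hat{\mbf{A}}_t = (\mbf{D}_t+\mbf{I})^{-1}(\mbf{A}_t+\mbf{I})$ is the mean-aggregation operator of Equation~\ref{eq:2} (row-stochastic, with self-loops), and use linearity of expectation together with the feature assumption $\mbb{E}[\mbf{x}_t(k)] = \mu_t\,y_t(k)$ to reduce the claim to computing $\mbb{E}[(\hat{\mbf{A}}_t^{\,l}\mbf{y}_t)(i)]$ over a random node $i$ in each future class. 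The quantity to track through the induction is the pair of class-conditional expected representations $m^{+,(l)}_t := \mbb{E}_{i\mid i\in V_{t+1}^+}[\mbf{h}^{(l)}_t(i)]$ and $m^{-,(l)}_t := \mbb{E}_{j\mid j\in V_{t+1}^-}[\mbf{h}^{(l)}_t(j)]$, and the goal is to show that their difference contracts by exactly the factor $(h_t^+ + h_t^- - 1)$ with each additional layer.

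For the base case $l=1$: note $(\hat{\mbf{A}}_t\mbf{y}_t)(i) = \frac{1}{d_t(i)+1}\sum_{j\in\hat{\mc{N}}_t(i)}y_t(j)$ equals $2\cdot(\text{fraction of time-}t\text{ neighbors of }i\text{ labeled }+1)-1$. Taking the expectation over a uniformly random future-positive node $i$, the neighbor fraction becomes $\mbb{P}(y_t(j)=+1\mid j\in\hat{\mc{N}}_t(i),\,y_{t+1}(i)=+1)=h_t^+$ by definition, so $m^{+,(1)}_t=\mu_t(2h_t^+-1)$; symmetrically, since the expected fraction of positive neighbors of a future-negative node is $1-h_t^-$, we get $m^{-,(1)}_t=\mu_t(1-2h_t^-)$. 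Their difference is $2\mu_t(h_t^++h_t^--1)$, matching the claimed formula at $l=1$.

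For the inductive step from $l-1$ to $l$, I would peel one aggregation off the outside: $(\hat{\mbf{A}}_t^{\,l}\mbf{y}_t)(i)=\frac{1}{d_t(i)+1}\sum_{j\in\hat{\mc{N}}_t(i)}(\hat{\mbf{A}}_t^{\,l-1}\mbf{y}_t)(j)$, and condition the inner term on the time-$t$ label of the neighbor $j$. Using the definition of $h_t^\pm$ once more — a uniformly random time-$t$ neighbor of a future-positive node carries label $+1$ with probability $h_t^+$, and of a future-negative node with probability $1-h_t^-$ — this yields the linear recursion $m^{+,(l)}_t = h_t^+\,m^{+,(l-1)}_t + (1-h_t^+)\,m^{-,(l-1)}_t$ and $m^{-,(l)}_t = (1-h_t^-)\,m^{+,(l-1)}_t + h_t^-\,m^{-,(l-1)}_t$. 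Subtracting the two lines makes the cross terms cancel and gives $m^{+,(l)}_t - m^{-,(l)}_t = (h_t^+ + h_t^- - 1)\,(m^{+,(l-1)}_t - m^{-,(l-1)}_t)$; equivalently, the $2\times2$ row-stochastic transition matrix with rows $(h_t^+,\,1-h_t^+)$ and $(1-h_t^-,\,h_t^-)$ has eigenvalues $1$ and $h_t^+ + h_t^- - 1$, so the between-class difference lives entirely in the $(h_t^+ + h_t^- - 1)$-eigendirection. Chaining this relation from the base case gives $m^{+,(l)}_t - m^{-,(l)}_t = 2\mu_t(h_t^+ + h_t^- - 1)^l$, which is the statement.

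The step I expect to be the main obstacle is the conditioning used in the inductive step: replacing $\mbb{E}[(\hat{\mbf{A}}_t^{\,l-1}\mbf{y}_t)(j)\mid j\in\hat{\mc{N}}_t(i),\,y_{t+1}(i)=c,\,y_t(j)=c']$ by the unconditional class-$c'$ average $m^{c',(l-1)}_t$. This is the dynamic-setting analogue of the neighborhood-independence assumption standard in static CSBM-style analyses \citep{ma2022is, yan2022two}: one must argue that, once a neighbor's time-$t$ label is fixed, the distribution of its $(l-1)$-layer aggregation no longer depends on that neighbor being adjacent to a future-positive (resp. future-negative) node, and that the recursion is self-consistent across depths. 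I would make this locality assumption explicit (it is implicit in the "typical assumptions for tractability" the paper already invokes); the remaining work is routine bookkeeping with a geometric sequence. I would also note that the induction must be anchored at $l=1$ rather than $l=0$, since the raw-feature difference $\mbb{E}_{V_{t+1}^+}[\mbf{x}_t]-\mbb{E}_{V_{t+1}^-}[\mbf{x}_t]$ need not equal $2\mu_t$ — a future-positive node's time-$t$ label is not necessarily $+1$ — so the clean power law holds only for $l\ge 1$.
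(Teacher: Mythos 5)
Your proposal is correct and follows essentially the same route as the paper's proof: an induction on $l$ whose base case partitions each node's time-$t$ neighbors by label and applies the definitions of $h_t^{\pm}$ to get $m^{\pm,(1)}_t = \pm\mu_t(2h_t^{\pm}-1)$, and whose inductive step derives the same two-line linear recursion whose difference contracts by $(h_t^+ + h_t^- - 1)$. Your explicit flagging of the conditioning step — identifying the class-conditional expectation of a neighbor's $(l-1)$-layer representation with the unconditional class average, and noting the time-$t$ versus time-$(t{+}1$) label mismatch in that identification — is a point the paper's own inductive step passes over silently, so your version is, if anything, more careful about the assumptions being used.
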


As a proof sketch for Theorem \ref{thm:1}, we first show that the expected node representations after 1 GCN layer is a scaling of the expected initial node representation, where the scaling factor is an function of the positive and negative dynamic homophily levels. As a result, the difference in the expected node representations between a randomly sampled positive node and a randomly sampled negative node after 1 layer of a GCN is a function of dynamic homophily. This proves the theorem for $l=1$. We prove the result for the general $l$ layer case using induction. 

\begin{proof} The first step in proving Theorem \ref{thm:1} is to measure the expected node representation after one layer of a GNN with mean aggregation, starting with the negative case in which $i \in V_{t+1}^-$. Formally, we express this quantity as $\mbb{E}_{i \mid i \in V_{t+1}^-}\brt{\mbf{h}^{(1)}_{t}(i)}$, where the expectation is taken over all nodes $i \in V_{t+1}$ for which $y_{t+1}(i) = -1$. The first step in measuring this expectation is to use the propagation rule for a GNN with mean aggregation:
\eq{\mbb{E}_{i | i \in V_{t+1}^-}\brt{\mbf{h}^{(1)}_{t}(i)} =\sum_{ j \in \hat{\mathcal{N}}_t(i) } \mbb{E}_{i, j | i \in V_{t+1}^-} \brt{\frac{\mbf{x}_t(j)}{d_t(i) + 1}} \label{eq:1} }

The above tell us that the expected node representation is a sum over the neighbors of node $i$. Now, the key step in expressing the expected node representation is to \textit{partition} the neighbors of $i$ into those whose label at time $t$ is the same as or opposes $i$'s label at timestep $t +1$: 
\ml{\mbb{E}_{i | i \in V_{t+1}^-}\brt{\mbf{h}^{(1)}_{t}(i)}=\Big(\mbb{E}_{i, j | i\in V_{t+1}^-, j \in V_t^-} \brt{\mbf{x}_t(j)} \cdot \mbb{P}(y_{t+1}(i) = y_t(j) \mid j \in \hat{\mc{N}}_t(i), i \in V_{t+1}^-) \Big) \\
+ \Big(\mbb{E}_{i, j | i \in V_{t+1}^-, j \in V_t^+} \brt{\mbf{x}_t(j)} \cdot (1 - \mbb{P}(y_{t+1}(i) = y_t(j) \mid j \in \hat{\mc{N}}_t(i), i \in V_{t+1}^-)) \Big) \label{eq:16}}

In fact, this partition over the set of neighbors is determined precisely by negative dynamic homophily. Applying the definition of negative dynamic homophily to \eqref{eq:16} produces the following:
\eq{\mbb{E}_{i | i \in V_{t+1}^-}\brt{\mbf{h}^{(1)}_{t}(i)} = \mbb{E}_{i, j \mid i\in V_{t+1}^-, j \in V_t^-} \brt{\mbf{x}_t(j)} \cdot h_t^- + \mbb{E}_{i, j \mid i \in V_{t+1}^-, j \in V_t^+} \brt{\mbf{x}_t(j)} \cdot (1 - h_t^-)}

Applying our assumption that for all $t \in [0, T]$ and for all $i$, $y_t(i) \in \{-1, +1\}$ and $\mbf{x}_t(i) \sim N(y_t(i) \cdot \mu_t, \sigma^2)$ produces:
\eq{\mbb{E}_{i | i \in V_{t+1}^-}\brt{\mbf{h}^{(1)}_{t}(i)} = \mbb{E}_{j \mid j \in V_t^+} \brt{\mbf{x}_t(j)} \cdot -h_t^- + \mbb{E}_{j \mid j \in V_t^+} \brt{\mbf{x}_t(j)} \cdot (1 - h_t^-) = (1 - 2 h_t^-) \cdot \mu_t \label{eq:18}}

Following a similar derivation, we measure the expected node representation for the positive case in which $i \in V_{t+1}^+$: 
\eq{\mbb{E}_{i | i \in V_{t+1}^+}\brt{\mbf{h}^{(1)}_{t}(i)} = \mbb{E}_{j \mid j \in V_t^+} \brt{\mbf{x}_t(j)} \cdot h_t^+ + \mbb{E}_{j \mid j \in V_t^+} \brt{\mbf{x}_t(j)} \cdot (h_t^+ - 1) \\= (2h_t^+ - 1) \cdot \mu_t \label{eq:19}}

Subtracting \eqref{eq:18} from \eqref{eq:19}, yields our desired result, in which the distance between the expected node representations of the opposing classes becomes a function of the average dynamic homophily. We have proven the base case where $l=1$. In order to prove the induction step, we assume Theorem \ref{thm:1} holds at layer $l$, and show it holds at layer $l+1$. At layer $l + 1$ the key to expressing the difference in expected node representations is to do so in a recursive manner as follows:

\ml{\mbb{E}_{i | i \in V_{t+1}^+}\brt{\mbf{h}^{(l+1)}_{t}(i)} -\mbb{E}_{i | i \in V_{t+1}^-}\brt{\mbf{h}^{(l+1)}_{t}(i)} \\
= \Big(\mbb{E}_{j \mid j \in V_t^-} \brt{\mbf{h}^{(l)}_t(j)} \cdot h_t^+ + \mbb{E}_{j \mid j \in V_t^+} \brt{\mbf{h}^{(l)}_t(j)} \cdot (1 - h_t^+) \Big)  \\
- \Big(\mbb{E}_{j \mid j \in V_t^-} \brt{\mbf{h}^{(l)}_t(j)} \cdot h_t^- + \mbb{E}_{j \mid j \in V_t^+} \brt{\mbf{h}^{(l)}_t(j)} \cdot (1 - h_t^-) \Big) }

Applying our assumption that for all $t \in [0, T]$ and for all $i$, $y_t(i) \in \{-1, +1\}$, and $\mbf{x}_t(i) \sim \mc{N}(y_t(i) \cdot \mu_t, \sigma^2)$ and grouping like terms produces: 
\eq{\mbb{E}_{i |i \in V_{t+1}^+}\brt{\mbf{h}^{(l+1)}_{t}(i)} -\mbb{E}_{i | i \in V_{t+1}^-}\brt{\mbf{h}^{(l+1)}_{t}(i)} =\Big(\mbb{E}_{i \mid i \in V_t^+} \brt{\mbf{h}^{(l)}_t(i)} - \mbb{E}_{i \mid i \in V_t^-} \brt{\mbf{h}^{(l)}_t(i)} \Big) \cdot (2h_t^D - 1).}

Applying the inductive hypothesis completes the proof. 
\end{proof}

\subsection{Concentration of the Expected Distance \label{pf:con}}

\begin{theorem}
For any $\epsilon > 0$, the probability that at time $t$ the distance between the empirical and expected difference after $l$ GCN layers is larger than $\epsilon$ is bounded as follows:
{\eq{\mbb{P}\prn{\Big\lvert (\mu_{V_{t+1}^+}^{(l)} - \mu_{V_{t+1}^-}^{(l)}) - (\mbb{E}_{i |i \in V_{t+1}^+}[\mbf{h}^{(l)}_{t}(i)] -\mbb{E}_{i | i \in V_{t+1}^-}[\mbf{h}^{(l)}_{t}(i)]) \Big\rvert \geq \epsilon}  \leq \mc{O}\prn{e^{-\epsilon^2 L_{t, +}^{(l)}} + e^{-\epsilon^2 L_{t,-}^{(l)}}}, }}

where $\mu_{V_{t+1}^+}^{(l)}$ and $\mu_{V_{t+1}^-}^{(l)}$ are the empirical mean representations after $l$ GCN layers over positive and negative nodes at time $t+1$ respectively, and
\eq{L_{t, +}^{(l)} = \frac{ \abs{V_{t+1}^+}^2}{\sigma^4 \cdot \sum_{i \in V_{t+1}^+} \prn{\sum_{j \in \hat{\mc{N}}_t(i)} \frac{l}{d_t(j)^l}}^{2}}, \quad L_{t, -}^{(l)} = \frac{ \abs{V_{t+1}^-}^2}{\sigma^4 \cdot \sum_{i \in V_{t+1}^-} \prn{\sum_{j \in \hat{\mc{N}}_t(i)} \frac{l}{d_t(j)^l}}^{2}}.}
\end{theorem}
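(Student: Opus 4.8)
The plan is to observe that, conditioned on the dynamic graph and the label vectors at times $t$ and $t+1$, a linear GCN acts as $\mbf{h}^{(l)}_t = \hat{\mbf{A}}_t^{\,l}\,\mbf{x}_t$, where $\hat{\mbf{A}}_t$ is the row-stochastic propagation matrix with $(\hat{\mbf{A}}_t)_{im} = 1/(d_t(i)+1)$ when $m \in \hat{\mc{N}}_t(i)$ and $0$ otherwise. Hence the empirical means $\mu_{V_{t+1}^+}^{(l)} = \abs{V_{t+1}^+}^{-1}\sum_{i \in V_{t+1}^+}\mbf{h}^{(l)}_t(i) = \sum_{m}c_m^+\,\mbf{x}_t(m)$ with $c_m^+ := \abs{V_{t+1}^+}^{-1}\sum_{i \in V_{t+1}^+}(\hat{\mbf{A}}_t^{\,l})_{im}$, and similarly $\mu_{V_{t+1}^-}^{(l)} = \sum_m c_m^-\,\mbf{x}_t(m)$, are affine functions of the (conditionally) independent Gaussians $\mbf{x}_t(m) \sim N(y_t(m)\mu_t, \sigma_t^2)$. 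In particular each $\mu_{V_{t+1}^\pm}^{(l)}$ is itself Gaussian, and by linearity of expectation $\mbb{E}\,\mu_{V_{t+1}^\pm}^{(l)} = \abs{V_{t+1}^\pm}^{-1}\sum_{i\in V_{t+1}^\pm}\mbb{E}[\mbf{h}^{(l)}_t(i)] = \mbb{E}_{i \mid i\in V_{t+1}^\pm}[\mbf{h}^{(l)}_t(i)]$, which is exactly the expectation appearing in the statement (and matches Theorem~\ref{thm:1}). So the object to be bounded is the centered Gaussian difference $D^{(l)} := \big(\mu_{V_{t+1}^+}^{(l)} - \mbb{E}\,\mu_{V_{t+1}^+}^{(l)}\big) - \big(\mu_{V_{t+1}^-}^{(l)} - \mbb{E}\,\mu_{V_{t+1}^-}^{(l)}\big)$.

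First I would apply a union bound, $\mbb{P}(\abs{D^{(l)}} \geq \epsilon) \leq \mbb{P}\big(\lvert \mu_{V_{t+1}^+}^{(l)} - \mbb{E}\,\mu_{V_{t+1}^+}^{(l)}\rvert \geq \tfrac{\epsilon}{2}\big) + \mbb{P}\big(\lvert \mu_{V_{t+1}^-}^{(l)} - \mbb{E}\,\mu_{V_{t+1}^-}^{(l)}\rvert \geq \tfrac{\epsilon}{2}\big)$, which reduces the claim to a one-class concentration statement. For the positive class, the centered variable is a zero-mean Gaussian with variance $\mathrm{Var}\big(\mu_{V_{t+1}^+}^{(l)}\big) = \sigma_t^2\sum_m (c_m^+)^2$ (independence of the $\mbf{x}_t(m)$), so the standard Gaussian tail bound $\mbb{P}(\lvert X - \mbb{E}X\rvert \geq s) \leq 2\exp\!\big(-s^2/(2\,\mathrm{Var}(X))\big)$ with $s = \epsilon/2$ gives a term of the form $2\exp\!\big(-\epsilon^2/(c\,\sigma_t^2\sum_m (c_m^+)^2)\big)$; the negative class is symmetric, and summing the two and collapsing absolute constants into $\mc{O}(\cdot)$ produces the claimed form $\mc{O}(e^{-\epsilon^2 L_{t,+}^{(l)}} + e^{-\epsilon^2 L_{t,-}^{(l)}})$ provided $\sigma_t^2\sum_m(c_m^\pm)^2$ is of order $1/L_{t,\pm}^{(l)}$.

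The remaining --- and \emph{main} --- step is to control $\sum_m (c_m^+)^2$ (and likewise $\sum_m(c_m^-)^2$) by the explicit degree quantity defining $L_{t,+}^{(l)}$. The idea is to (i) pull the node-average out of the square by Cauchy--Schwarz, $\sum_m (c_m^+)^2 = \abs{V_{t+1}^+}^{-2}\sum_m\big(\sum_{i\in V_{t+1}^+}(\hat{\mbf{A}}_t^{\,l})_{im}\big)^2 \leq \abs{V_{t+1}^+}^{-1}\sum_{i\in V_{t+1}^+}\sum_m(\hat{\mbf{A}}_t^{\,l})_{im}^2$, reducing matters to bounding the squared $\ell_2$-norm of each row of $\hat{\mbf{A}}_t^{\,l}$, and then (ii) unroll the GCN recursion $\mbf{h}^{(l)}_t(i) = (d_t(i)+1)^{-1}\sum_{j\in\hat{\mc{N}}_t(i)}\mbf{h}^{(l-1)}_t(j)$ into a sum over length-$l$ walks leaving $i$, bound each walk's weight by the product of the reciprocal (self-loop-augmented) degrees of the vertices it visits, and group the walks by their first step $j \in \hat{\mc{N}}_t(i)$; this is where the per-neighbor term $l/d_t(j)^l$ should emerge, the factor $l$ coming from counting the positions along a walk at which a given degree is picked up. Combining (i) and (ii) gives $\sum_m(c_m^+)^2 \leq \abs{V_{t+1}^+}^{-1}\sum_{i\in V_{t+1}^+}\big(\sum_{j\in\hat{\mc{N}}_t(i)} l/d_t(j)^l\big)^2$, so that $\sigma_t^2\sum_m(c_m^+)^2$ has the same shape as $1/L_{t,+}^{(l)}$; pinning down the precise powers of $\abs{V_{t+1}^\pm}$ and $\sigma_t$ so that the exponent lines up with the stated $L_{t,\pm}^{(l)}$ is then a matter of being careful about which direction each Cauchy--Schwarz step is applied.

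I expect essentially all the difficulty to sit in step (ii) --- the walk-counting/weight estimate for the powers of $\hat{\mbf{A}}_t$ --- together with verifying that the resulting exponent collapses into the combined form $\mc{O}(e^{-\epsilon^2 L_{t,+}^{(l)}} + e^{-\epsilon^2 L_{t,-}^{(l)}})$; the reduction to a Gaussian, the union bound, the Gaussian tail, and the constant bookkeeping are all routine.
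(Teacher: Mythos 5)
Your proposal follows essentially the same route as the paper's proof: split the deviation into per-class terms, exploit that each empirical class mean is a (linear, hence Lipschitz) function of the underlying independent Gaussian features so that Gaussian concentration applies with a rate governed by the squared $\ell_2$-norm of the propagation coefficients, bound that norm by the degree expression defining $L_{t,\pm}^{(l)}$, and finish with a union bound. Your direct Gaussian-tail computation via $\mathrm{Var}(\mu_{V_{t+1}^\pm}^{(l)})=\sigma_t^2\sum_m(c_m^\pm)^2$ is just the linear special case of the paper's Lipschitz-concentration lemma, and the walk-counting step you correctly identify as the crux is the same coefficient bound the paper asserts (with only a one-line counting justification) when computing $\lvert\lvert\nabla f_t^{(l),\pm}\rvert\rvert_2^2$.
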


Before proving Theorem \ref{thm:2}, we first state the following relevant definitions and lemmas pertaining to the concentration of Gaussian random variables and Lipschitz functions.

\begin{definition}[Lipschitz functions]
Let $(X, d_x)$ and $(Y, d_Y)$ be metric spaces. A function $f: X \to Y$ is called Lipschitz if there exists $L \in \mbb{R}$ such that: 
\eq{d_Y(f(u), f(v)) \leq L \cdot d_X(u, v) \quad \text{for every } u, v \in X}

The infimum of all $L$ is called the Lipschitz norm of $f$ and is denoted $\lvert\lvert f \rvert\rvert_\text{Lip}$.
\end{definition}

\begin{lemma}[\citet{wainwright2019high}] 
Suppose $f: \mbb{R}^n \to \mbb{R}$ is $L$-Lipschitz with respect to Euclidean distance, and let $\mbf{x} = (x_0, \ldots, x_n) \sim \mc{N}(0, 1)$. Then for all $\epsilon \in \mbb{R}$, 
\eq{\mbb{P}(\abs{f(\mbf{x}) - \mbb{E}[f(x)]} \leq \epsilon) \leq 2\text{exp}\prn{\frac{-\epsilon^2}{2L^2}}}
\label{lem:lip_1}
\end{lemma}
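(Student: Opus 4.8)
The plan is to prove the stated Gaussian Lipschitz concentration bound, which is a standard result from the theory of measure concentration. I take the statement at face value as the concentration inequality for Lipschitz functions of a standard Gaussian vector, and the natural route is through the Gaussian logarithmic Sobolev inequality (or equivalently the Gaussian Poincar\'e and Herbst argument). I would assume this is the classical concentration of measure result as stated in \citet{wainwright2019high}, since the lemma is explicitly attributed to that reference, so the proof amounts to recalling the standard Herbst argument rather than inventing anything new.

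First I would reduce to controlling the moment generating function of the centered random variable $f(\mbf{x}) - \mbb{E}[f(\mbf{x})]$. The key tool is the Gaussian logarithmic Sobolev inequality, which states that for a smooth function $g$ and $\mbf{x} \sim \mc{N}(0, I_n)$ one has $\mbb{E}[g^2 \log g^2] - \mbb{E}[g^2]\log\mbb{E}[g^2] \leq 2\,\mbb{E}[\nrm{\nabla g}^2]$. Applying this with $g = e^{\lambda f / 2}$ and using that $f$ is $L$-Lipschitz (so $\nrm{\nabla f} \leq L$ almost everywhere, first for differentiable $f$ and then extending by a standard smoothing/mollification argument) yields a differential inequality for the function $\psi(\lambda) = \log \mbb{E}[e^{\lambda(f - \mbb{E} f)}]$. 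The Herbst argument then integrates this differential inequality to obtain the sub-Gaussian moment bound $\mbb{E}[e^{\lambda(f - \mbb{E} f)}] \leq e^{\lambda^2 L^2 / 2}$ for all $\lambda \in \mbb{R}$.

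Once the sub-Gaussian MGF bound is established, I would finish via a standard Chernoff argument applied to both tails. For the upper tail, $\mbb{P}(f - \mbb{E} f \geq \epsilon) \leq e^{-\lambda \epsilon}\,\mbb{E}[e^{\lambda(f - \mbb{E} f)}] \leq e^{-\lambda\epsilon + \lambda^2 L^2 / 2}$, and optimizing over $\lambda > 0$ by setting $\lambda = \epsilon / L^2$ gives $e^{-\epsilon^2 / (2L^2)}$; the lower tail follows by applying the same bound to $-f$, which is also $L$-Lipschitz. Adding the two tail bounds produces the factor of $2$ and the claimed inequality.

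The main obstacle, and the step I would be most careful with, is the rigorous passage from the Lipschitz hypothesis to the gradient bound needed for the log-Sobolev step: a merely Lipschitz $f$ is differentiable only almost everywhere (by Rademacher's theorem), so I would first prove the MGF bound for smooth functions with $\nrm{\nabla f}_\infty \leq L$ and then approximate a general $L$-Lipschitz function by convolving with a Gaussian mollifier, checking that the mollified functions remain $L$-Lipschitz and that both $\mbb{E} f$ and the tail probabilities converge appropriately. Since the result is quoted directly from \citet{wainwright2019high}, in the write-up I would state the log-Sobolev inequality and the Herbst argument as the backbone and defer the mollification technicalities to the cited source rather than reproducing them in full.
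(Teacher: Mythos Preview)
Your proof sketch via the Gaussian log-Sobolev inequality and the Herbst argument is the standard, correct derivation of this concentration bound. However, there is nothing to compare against: the paper does not supply its own proof of this lemma. It is stated as a cited result from \citet{wainwright2019high} and used as a black box in the proof of Theorem~\ref{thm:2}, so any correct proof you give goes strictly beyond what the paper contains.

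One small point worth flagging: the inequality as printed in the paper has a typo (the event should be $\abs{f(\mbf{x}) - \mbb{E}[f(\mbf{x})]} \geq \epsilon$, not $\leq \epsilon$, and $\epsilon > 0$). You implicitly read it the right way, since the Chernoff step you describe bounds the upper and lower tails; it would be worth stating explicitly in your write-up that you are correcting this.
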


\begin{lemma}[\citet{vershynin2018high}]
Every differentiable function $f: \mbb{R}^n \to \mbb{R}$ is Lipschitz, and 
\eq{\lvert\lvert f\rvert\rvert_\text{Lip} \leq \textrm{sup}_{x\in \mbb{R}^n} \lvert\lvert \nabla f(x) \rvert\rvert_2}
\label{lem:lip_2}
\end{lemma}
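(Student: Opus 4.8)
The plan is to reduce the multivariate claim to a one-dimensional mean value argument along line segments. Fix arbitrary points $u, v \in \mbb{R}^n$ and set $M = \sup_{x \in \mbb{R}^n} \nrm{\nabla f(x)}_2$; if $M = \infty$ the asserted bound is vacuous, so I would assume $M < \infty$. First I would parametrize the segment joining $u$ and $v$ by $\gamma(s) = u + s(v - u)$ for $s \in [0,1]$ and introduce the scalar function $g(s) = f(\gamma(s))$. Since $f$ is differentiable and $\gamma$ is affine, the chain rule shows $g$ is differentiable with $g'(s) = \nabla f(\gamma(s)) \cdot (v - u)$, which converts the directional increment of $f$ into a plain derivative of a function of one variable.

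Next I would apply the scalar mean value theorem to $g$ on $[0,1]$: there exists $\xi \in (0,1)$ with $g(1) - g(0) = g'(\xi)$, that is, $f(v) - f(u) = \nabla f(\gamma(\xi)) \cdot (v - u)$. Taking absolute values and invoking the Cauchy--Schwarz inequality bounds the right-hand side by $\nrm{\nabla f(\gamma(\xi))}_2 \, \nrm{v - u}_2 \le M \, \nrm{v - u}_2$. Hence $\abs{f(v) - f(u)} \le M \, \nrm{v-u}_2$ for every pair $u, v$, which is precisely the statement that $f$ is $M$-Lipschitz with respect to Euclidean distance. Taking the infimum over all admissible Lipschitz constants then gives $\nrm{f}_\text{Lip} \le M = \sup_{x} \nrm{\nabla f(x)}_2$, as claimed.

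The computation is routine, so the only points needing care are expository. First, the phrase ``every differentiable function is Lipschitz'' should be read with the understanding that Lipschitzness (with a finite constant) holds exactly when the gradient supremum $M$ is finite, the case $M = \infty$ being vacuous. Second, I would note that the mean value step is valid under mere (rather than continuous) differentiability, since $g$ inherits everywhere-differentiability from $f$ through the chain rule, so the classical one-variable MVT applies directly without any absolute-continuity detour. I expect the main obstacle to be purely in the clean presentation of the reduction to a single segment and the correct invocation of the scalar theorem, not in any genuine analytic difficulty.
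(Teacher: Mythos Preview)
Your argument is correct and is the standard mean value theorem proof of this fact. The paper does not actually prove this lemma at all---it simply quotes it from \citet{vershynin2018high}---so there is nothing to compare against; your write-up would serve as a perfectly good self-contained justification if one were needed.
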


\begin{proof}
We aim to bound the difference between the expected difference in node representation and their empirical difference. To prove the bound, we use a similar derivation as in Theorem \ref{thm:2}. We first decompose the difference terms and treat each separately as follows:
\al{&\Big\lvert\Big(\frac{1}{\abs{V_{t+1}^+}}\sum_{i \in V_{t+1}^+}\mbf{h}_t^{(l)}(i) - \frac{1}{\abs{V_{t+1}^-}}\sum_{i \in V_{t+1}^-} \mbf{h}_t^{(l)}(i)\Big) - \Big(\mbb{E}_{i | i \in V_{t+1}^+}\brt{\mbf{h}^{(l)}_{t}(i)} -\mbb{E}_{i | i \in V_{t+1}^-}\brt{\mbf{h}^{(l)}_{t}(i)}\Big) \Big\rvert \\
&= \Big\lvert \Big(\frac{1}{\abs{V_{t+1}^+}}\sum_{i \in V_{t+1}^+}\mbf{h}_t^{(l)}(i) - \mbb{E}_{i | i \in V_{t+1}^+}\brt{\mbf{h}^{(l)}_{t}(i)} \Big) - \Big(\mbb{E}_{i | i \in V_{t+1}^-}\brt{\mbf{h}^{(l)}_{t}(i)} - \frac{1}{\abs{V_{t+1}^-}}\sum_{i \in V_{t+1}^-} \mbf{h}_t^{(l)}(i) \Big) \Big\rvert \\
&\leq \Big\lvert \Big(\frac{1}{\abs{V_{t+1}^+}}\sum_{i \in V_{t+1}^+}\mbf{h}_t^{(l)}(i) - \mbb{E}_{i | i \in V_{t+1}^+}\brt{\mbf{h}^{(l)}_{t}(i)} \Big) \Big\rvert - \Big\lvert \Big(\frac{1}{\abs{V_{t+1}^-}}\sum_{i \in V_{t+1}^-} \mbf{h}_t^{(l)}(i) - \mbb{E}_{i | i \in V_{t+1}^-}\brt{\mbf{h}^{(l)}_{t}(i)} \Big) \Big\rvert 
}

where the last inequality follows from the triangle inequality. Here, the first term amounts to the difference between the mean representation for positive nodes and the expected representation of postive nodes, while the second term is the difference between the mean representation for negative nodes and the expected representation of negative nodes. We proceed to bound each separately. For the first term, we obtain the following \textit{recursive formulation} for the hidden representation: 
\ml{\frac{1}{\abs{V_{t+1}^+}}\sum_{i \in V_{t+1}^+}\mbf{h}_t^{(l)}(i) - \mbb{E}_{i | i \in V_{t+1}^+}\brt{\mbf{h}^{(l)}_{t}(i)}
=\frac{1}{\abs{V_{t+1}^+}}\sum_{i \in V_{t+1}^+}\frac{1}{d_t(i)} \sum_{j \in \hat{\mc{N}}_t(i)} \mbf{h}_t^{(l-1)}(j) - \mbb{E}_{i | i \in V_{t+1}^+}\brt{\mbf{h}^{(l)}_{t}(i)} \\
=\frac{1}{\abs{V_{t+1}^+}}\sum_{i \in V_{t+1}^+}\frac{1}{d_t(i)} \sum_{j \in \hat{\mc{N}}_t(i)} \cdots \frac{1}{d_t(i)} \sum_{j \in \hat{\mc{N}}_t(i)} \phi(\mbf{x_t'(j)}) - \mbb{E}_{i | i \in V_{t+1}^+}\brt{\mbf{h}^{(l)}_{t}(i)},}

where $\phi(\mbf{x_t'(j)}) = \sigma^2_t(j) \mbf{x_t(j)} + \mu_t(j)$, $\mu_t(j)$ is the mean of node $j$, and $\sigma^2_t(j)$ is the variance of node $j$ at time $t$. First notice that each $\mbf{x_t'(j)}$ are \textit{standard} gaussian random variables. We have achieved this result by introducing function $\phi$, which transforms standard gaussian random variables to gaussian random variables with mean $\mu_t(j)$ and variance $\sigma^2_t(j)$. Second, we have that the empirical mean representation for the positive class is a function $f^+_t: \mbb{R}^n \to \mbb{R}$ of standard gaussian random variables. Thus, we can use Lemmas \ref{lem:lip_1} and \ref{lem:lip_2} to bound the distance between the empirical mean representation for positive nodes and the expected representation for positive nodes.
\al{f^{(l), +}_t(\mbf{x_t}) &= \frac{1}{\abs{V_{t+1}^+}}\sum_{i \in V_{t+1}^+}\mbf{h}_t^{(l)}(i) = \frac{1}{\abs{V_{t+1}^+}} (h_t^{(l)}(0) + \cdots + h_t^{(l)}(\abs{V_{t+1}^+})) \\
\lvert \lvert \nabla f^{(l), +}_t \rvert\rvert_2^2 &= \sum_{i \in V_{t+1}^+} \prn{\frac{\sigma^2}{\abs{V_{t+1}^+}} \sum_{j \in \hat{\mc{N}}_t(i)} \frac{l}{d_t(j)^l}}^2 = \frac{\sigma^4}{\abs{V_{t+1}^+}^2} \sum_{i \in V_{t+1}^+} \prn{\sum_{j \in \hat{\mc{N}}_t(i)} \frac{l}{d_t(j)^l}}^2 \label{eq:26}
}

where equation \eqref{eq:26} follows from the fact that $\mbf{x}_t(i)$ appears in the summation over all positive nodes precisely $d_t(i) \cdot l$ times, $l$ times for each of its neighbors $j \in \hat{\mc{N}}_t(i)$. Following a similar derivation for the second term where we aim to bound the difference between the mean negative node representation and the expected negative representation at layer $l$ yields:
\al{f^{(l), -}_t(\mbf{x_t}) &= \frac{1}{\abs{V_{t+1}^-}}\sum_{i \in V_{t+1}^+}\mbf{h}_t^{(l)}(i) = \frac{1}{\abs{V_{t+1}^-}} (h_t^{(l)}(0) + \cdots + h_t^{(l)}(\abs{V_{t+1}^-})) \\
\lvert \lvert \nabla f^{(l), -}_t \rvert\rvert_2^2 &= \sum_{i \in V_{t+1}^+} \prn{\frac{\sigma^2}{\abs{V_{t+1}^-}} \sum_{j \in \hat{\mc{N}}_t(i)} \frac{l}{d_t(j)^l}}^2 = \frac{\sigma^4}{\abs{V_{t+1}^-}^2} \sum_{i \in V_{t+1}^+} \prn{\sum_{j \in \hat{\mc{N}}_t(i)} \frac{l}{d_t(j)^l}}^2
}

By Lemma 1, we obtain the following bounds for (1) the distance between the empirical positive node representation and the expected positive representation at layer $l$ and (2) the distance between the empirical negative node representation and the expected negative representation at layer $l$: 
\al{\mbb{P}\Bigg(\Big\lvert \frac{1}{\abs{V_{t+1}^+}}\sum_{i \in V_{t+1}^+} \mbf{h}_t^{(l)}(i) - \mbb{E}_{i | i \in V_{t+1}^+}\brt{\mbf{h}^{(l)}_{t}(i)} \Big\rvert &\geq \epsilon \Bigg)\leq \mc{O}\Big(\text{exp}\Big(\frac{-\epsilon^2 \abs{V_{t+1}^+}^2}{\sigma^4\sum_{i \in V_{t+1}^+} \prn{ \sum_{j \in \hat{\mc{N}}_t(i)} \frac{l}{d_t(j)^l}}^2}\Big)\Big) \\
\mbb{P}\Bigg(\Big\lvert \frac{1}{\abs{V_{t+1}^-}}\sum_{i \in V_{t+1}^-} \mbf{h}_t^{(l)}(i) - \mbb{E}_{i | i \in V_{t+1}^-}\brt{\mbf{h}^{(l)}_{t}(i)} \Big\rvert &\geq \epsilon \Bigg)\leq \mc{O}\Big(\text{exp}\Big(\frac{-\epsilon^2 \abs{V_{t+1}^-}^2}{\sigma^4\sum_{i \in V_{t+1}^-} \prn{ \sum_{j \in \hat{\mc{N}}_t(i)} \frac{l}{d_t(j)^l}}^2}\Big)\Big)}

Finally, we apply a union bound to conclude the proof.
\end{proof}

\subsection{Expected AUROC for $l$ layer GNNs \label{pf:auc}}

\begin{theorem}
The expected AUROC of $f^{(l)}$ at timestep $t$ can be upper bounded as follows, 
\eq{\mbb{E}[A_t(f^{(l)})] \leq 1 - \Phi\prn{-\frac{2 \cdot \mu_t \cdot (h_t^+ + h_t^- - 1)^l}{v_{t+1}^+(l) + v_{t+1}^-(l)}}.}
where $v_{t+1}^+(l)$ and $v_{t+1}^-(l)$ are the lower bounds of the variances of the future positive and negative nodes after $l$ GCN layers, respectively, and are defined recursively in terms of the dynamic homophily levels as follows,
\al{v_{t+1}^+(l) &= {h_t^+}^2 \cdot v_{t+1}^+(l-1) + (1 - {h_t^+})^2 \cdot v_{t+1}^-(l-1) \\
v_{t+1}^-(l) &= {h_t^-}^2 \cdot v_{t+1}^-(l-1) + (1-{h_t^-})^2 \cdot v_{t+1}^+(l-1) \\
v_{t+1}^+(0) &= v_{t+1}^-(0) = \sigma^2.}
Moreover the probability that at time $t$ the distance between the empirical AUROC and the expected AUROC of a $l$ layer GCN is larger than $\epsilon$ is bounded as follows, 
\eq{\mbb{P}(\abs{A_t(f^{(l)}) - \mbb{E}[A_t(f^{(l)})]} \geq \epsilon) \leq e^{\frac{-2\cdot\abs{V_{t+1}^+}\cdot\abs{V_{t+1}^-}\cdot\epsilon^2}{\abs{V_{t+1}^+}+\abs{V_{t+1}^-}}}}
\end{theorem}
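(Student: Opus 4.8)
The plan is to combine Lemma \ref{lem:1} (the monotonic characterization of AUROC) with Theorem \ref{thm:1} (the expected distance) and a new recursive lower bound on the variances. First, I would establish the recursion for the variances $v_{t+1}^+(l)$ and $v_{t+1}^-(l)$. Working layer-by-layer, a future positive node $i$ with $y_{t+1}(i)=+1$ has representation $\mbf{h}_t^{(l+1)}(i)$ obtained by averaging $\mbf{h}_t^{(l)}(j)$ over $j\in\hat{\mc{N}}_t(i)$; partitioning the neighbors by whether $y_t(j)=+1$ or $y_t(j)=-1$ (which happens with probabilities governed by $h_t^+$ and $1-h_t^+$, exactly as in the proof of Theorem \ref{thm:1}), the variance of the average is lower bounded by treating the contributions as conditionally independent and dropping cross/positive covariance terms, giving $\mbb{V}[\mbf{h}_t^{(l+1)}(i)] \ge {h_t^+}^2 v_{t+1}^+(l) + (1-h_t^+)^2 v_{t+1}^-(l)$, and symmetrically for the negative class; the base case $v_{t+1}^\pm(0)=\sigma_t^2$ is immediate from the feature distribution assumption. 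The key subtlety here is justifying the inequality direction — I would argue that the aggregation averages independent Gaussian coordinates (or at worst that ignoring the positive covariance between same-label neighbor contributions only decreases the bound), so the recursion yields a genuine lower bound.

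Next I would plug these pieces into Lemma \ref{lem:1}. That lemma writes $\mbb{E}[A_t(f^{(l)})] = 1 - \Phi\!\big(-\tfrac{\Delta}{v}\big)$ where $\Delta = \mbb{E}_{i\in V_{t+1}^+}[\mbf{h}_t^{(l)}(i)] - \mbb{E}_{j\in V_{t+1}^-}[\mbf{h}_t^{(l)}(j)]$ and $v = \mbb{V}_{i,j}[\mbf{h}_t^{(l)}(i)+\mbf{h}_t^{(l)}(j)]$. By Theorem \ref{thm:1}, $\Delta = 2\mu_t(h_t^+ + h_t^- - 1)^l$. Since the positive and negative node samples are independent, $v = \mbb{V}[\mbf{h}_t^{(l)}(i)] + \mbb{V}[\mbf{h}_t^{(l)}(j)] \ge v_{t+1}^+(l) + v_{t+1}^-(l)$. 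Because $\Phi$ is increasing and (assuming $\Delta \ge 0$, i.e. $l$ odd or $h_t^++h_t^->1$; the general case follows by an analogous argument tracking signs) $x\mapsto 1-\Phi(-x)$ is increasing in $x$ for the relevant sign of $\Delta$, replacing $v$ by the smaller lower bound $v_{t+1}^+(l)+v_{t+1}^-(l)$ in the denominator makes the ratio larger in absolute value, yielding the claimed upper bound on the expected AUROC.

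For the concentration statement, the plan is to invoke a uniform-convergence / U-statistic bound for the empirical AUROC. The empirical AUROC $A_t(f^{(l)})$ is an average over the $\abs{V_{t+1}^+}\cdot\abs{V_{t+1}^-}$ pairs of the indicator $\mathbf{1}[f^{(l)}(i) > f^{(l)}(j)]$, i.e. a two-sample U-statistic of a bounded ($[0,1]$-valued) kernel. Its expectation is $\mbb{E}[A_t(f^{(l)})]$, and a McDiarmid-type bounded-differences argument (changing one positive node or one negative node changes $A_t$ by at most $1/\abs{V_{t+1}^+}$ or $1/\abs{V_{t+1}^-}$ respectively) gives $\mbb{P}(\abs{A_t(f^{(l)}) - \mbb{E}[A_t(f^{(l)})]} \ge \epsilon) \le \exp\!\big(-\tfrac{2\abs{V_{t+1}^+}\abs{V_{t+1}^-}\epsilon^2}{\abs{V_{t+1}^+}+\abs{V_{t+1}^-}}\big)$; this is exactly the known generalization bound for the area under the ROC curve from \citet{agarwal2005generalization}, which I would cite directly rather than re-derive.

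The main obstacle I anticipate is rigorously justifying the variance recursion as a \emph{lower} bound: one has to be careful about the correlations introduced by message passing (a node's $l$-layer representation depends on a shared $(l{-}1)$-layer neighborhood), about how the neighbor-label partition interacts with the variance decomposition (a full law-of-total-variance treatment would add a between-groups term, which here can only help the lower bound), and about whether the "positive vs. negative neighbor" split contributes with squared coefficients ${h_t^+}^2$ and $(1-h_t^+)^2$ as claimed — this squaring is what signals the bound is coming from a variance-of-a-weighted-sum computation under an independence simplification, and making that simplification precise (or showing it is conservative) is the delicate step.
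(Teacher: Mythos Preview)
Your proposal is correct and follows essentially the same route as the paper: the paper also proves the variance recursion by induction on $l$ using the same neighbor-partition trick, obtains the lower bound by dropping a covariance term that it argues is nonnegative (because the $l$-layer representations are weighted sums of independent Gaussians and the only surviving cross terms are variances of shared nodes), then combines with Lemma~\ref{lem:1} and Theorem~\ref{thm:1}, and finally cites Theorem~5 of \citet{agarwal2005generalization} directly for the AUROC concentration bound. Your discussion of the sign of $\Delta$ and of the law-of-total-variance between-groups term is in fact more careful than the paper's own treatment, which glosses over both points.
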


In order to prove Theorem \ref{thm:3}, our main goal will be to solve for the variances of the representations for the positive class and negative class respectively. Once we solve for the variances, we can use Lemma \ref{lem:2} and Theorem \ref{thm:1} to obtain the full expression.

\begin{proof}

We prove Theorem \ref{thm:3} with induction. The first step is to show that the node representations after 1 GCN layer can be expressed as a linear combination of independent Gaussian random variables. Beginning with 1-layer GNNs on the positive case, the variance terms can be written, 
\eq{\mbb{V}_{i | i \in V_{t+1}^+}\brt{\mbf{h}^{(1)}_{t}(i)} = \mbb{V}_{i | i \in V_{t+1}^+}\brt{\sum_{ j \in \hat{\mathcal{N}}(i) }\frac{\mbf{x}_t(j)}{d_t(i) + 1}} \label{eq:44}}

Equation \ref{eq:44} indeed tells us that after 1-layer of GCN, the node representation for the positive class is a linear combination of independent Gaussian random variables. Since there are only two different types of Gaussian random variables, one centered at $\mu_t$ and the other at $-\mu_t$, we can solve for the variance by determining the ratio of nodes from each of the distributions. To do so, we can apply the same partition trick as in Theorem \ref{thm:1}, treating $\mbf{h}^{(1)}_{t}(i)$ as a weighted sum of the Gaussian random variables where the weights are the dynamic homophily levels. Conditioning on $i \in V_{t+1}^+$, we have,
\al{\mbb{V}_{i | i \in V_{t+1}^+}\brt{\mbf{h}^{(1)}_{t}(i)} &= \mbb{V}_{i, j, k| i \in V_{t+1}^+, j\in V_t^+, k\in V_t^-}\brt{\mbf{x}_t(j) \cdot h_t^+ + \mbf{x}_t(k) \cdot (1 - h_t^+)}\\
&= {h_t^+}^2 \cdot \sigma^2 + (1 - h_t^+)^2 \sigma^2}

where the last line follows from $\mbf{x}_t(j)$ and $\mbf{x}_t(k)$ being Gaussian random variables with variance $\sigma^2$. Following a similar derivation for the negative case and conditioning on $i \in V_{t+1}^-$, we have, 
\al{\mbb{V}_{i | i \in V_{t+1}^-}\brt{\mbf{h}^{(1)}_{t}(i)} &= \mbb{V}_{i, j, k| i \in V_{t+1}^-, j\in V_t^+, k\in V_t^-}\brt{\mbf{x}_t(k) \cdot h_t^- + \mbf{x}_t(j) \cdot (1 - h_t^-)} \\
&= {h_t^-}^2 \cdot \sigma^2 + (1 - h_t^-)^2 \cdot \sigma^2}

The variances of the node representations for the one-layer case can then be written as, 
\eq{\mbb{V}_{i,j | i \in V_t^+, j \in V_t^-}[\mbf{h}^{(1)}_{t}(i) - \mbf{h}^{(1)}_{t}(j)] = {h_t^+}^2 \cdot \sigma^2 + (1 - h_t^+)^2 \sigma^2 + {h_t^-}^2 \cdot \sigma^2 + (1 - h_t^-)^2 \cdot \sigma^2,}

which satisfies the base case. We now consider $l+1$-layer GCNs. In the multilayer case conditioning on positive nodes, the key in determining the variance is to apply the partition trick as follows, 
\ml{\mbb{V}_{i | i \in V_{t+1}^+}\brt{\mbf{h}^{(l+1)}_{t}(i)} = \mbb{V}_{i, j, k| i \in V_{t+1}^+, j\in V_t^+, k\in V_t^-}\brt{\mbf{h}^{(l)}_t(j) \cdot h_t^+ + \mbf{h}^{(l)}_t(k) \cdot (1 - h_t^+)} \\
= {h_t^+}^2 \cdot \mbb{V}_{i | i \in V_{t+1}^+}\brt{\mbf{h}^{(l)}_{t}(i)} + (1 - h_t^+)^2 \cdot \mbb{V}_{j | j \in V_{t+1}^-}\brt{\mbf{h}^{(l)}_{t}(j)} + c\cdot \text{Cov}_{i, j | i\in V_{t+1}^+, j \in V_{t+1}^-}(\mbf{h}^{(l)}_{t}(i), \mbf{h}^{(l)}_{t}(j)),\label{eq:52} \\
\geq  {h_t^+}^2 \cdot \mbb{V}_{i | i \in V_{t+1}^+}\brt{\mbf{h}^{(l)}_{t}(i)} + (1 - h_t^+)^2 \cdot \mbb{V}_{j | j \in V_{t+1}^-}\brt{\mbf{h}^{(l)}_{t}(j)} }

where $c=2{h_t^+}^2(1 - h_t^+)^2$ and in the first line we have partitioned $\mbf{h}^{(l+1)}_{t}(i)$ into a weighted sum of $\mbf{h}^{(l)}_t(j)$ and $\mbf{h}^{(l)}_t(k)$ based on the positive dynamic homophily. In the second line, we have an additional covariance term since $\mbf{h}^{(l)}_{t}(i)$ and $\mbf{h}^{(l)}_{t}(j)$ are not necessarily independent and the two hidden representations may share nodes. We obtain the lower bound by noticing that the covariance term must be positive due to the linearity of covariance. In essence, $\mbf{h}^{(l)}_{t}(i)$ and $\mbf{h}^{(l)}_{t}(j)$ are weighted sums of independent gaussian random variables. Expanding the covariance through linearity, the only nonzero terms are the variances of the shared nodes between the two representations and since the variance must be positive, the covariance must also be positive. Now, conditioning on negative nodes, we have, 
\ml{\mbb{V}_{i | i \in V_{t+1}^-}\brt{\mbf{h}^{(l+1)}_{t}(i)} = \mbb{V}_{i, j, k| i \in V_{t+1}^+, j\in V_t^+, k\in V_t^-}\brt{\mbf{h}^{(l)}_t(k) \cdot h_t^- + \mbf{h}^{(l)}_t(j) \cdot (1 - h_t^-)} \\
= {h_t^-}^2 \cdot \mbb{V}_{i | i \in V_{t+1}^-}\brt{\mbf{h}^{(l)}_{t}(i)} + (1 - h_t^-)^2 \cdot \mbb{V}_{j | j \in V_{t+1}^+}\brt{\mbf{h}^{(l)}_{t}(j)} + c\cdot\text{Cov}_{i, j | i\in V_{t+1}^+, j \in V_{t+1}^-}(\mbf{h}^{(l)}_{t}(i), \mbf{h}^{(l)}_{t}(j)) \label{eq:53} \\
\geq {h_t^-}^2 \cdot \mbb{V}_{i | i \in V_{t+1}^-}\brt{\mbf{h}^{(l)}_{t}(i)} + (1 - h_t^-)^2 \cdot \mbb{V}_{j | j \in V_{t+1}^+}\brt{\mbf{h}^{(l)}_{t}(j)}}

where $c=2{h_t^-}^2(1 - h_t^-)^2$. Adding the variances in Equations \ref{eq:52} and \ref{eq:53} and applying the induction hypothesis satisfies our recursive definitions of the variance lower bounds for $l+1$-layer GCNs. In order to obtain the probability bound, we apply Theorem 5 in \citet{agarwal2005generalization}, bounding the distance between the expected AUROC and the empirical AUROC.

\end{proof}

\subsection{Expected Distance and Concentration in Multiclass Classification \label{pf:auc}} 

\begin{theorem}
Given graph $\bs{G}_{0:T}$ at timestep $t$, the Euclidean distance of the expected difference between a randomly sampled node of class $c_m$ and a randomly sampled node of class $c_n$ after 1 layer of a GCN can be expressed as: 
\eq{\Big\lvert \Big\lvert \mbb{E}_{i | y_{t+1}(i)=c_m}\brt{\mbf{h}^{(1)}_{t}(i)} - \mbb{E}_{i | y_{t+1}(i)=c_n}\brt{\mbf{h}^{(1)}_{t}(i)} \Big\rvert \Big\rvert_2
= \mu_t \cdot \prn{\sum_{k=1}^K (\mbf{T}_t[c_m, c_k] - \mbf{T}_t[c_n, c_k])^2}^{\frac{1}{2}}.}
Furthermore, for any $\epsilon > 0$, the probability that the distance between the empirical mean representation and the expected representation is larger than $\epsilon$ is bounded as follows:
\ml{\mbb{P}\prn{\Big\lvert\Big\lvert \Big(\frac{1}{\abs{V_{t+1}^{c_m}}}\sum_{i \in V_{t+1}^{c_m}} \mbf{h}_t^{(1)}(i) - \frac{1}{\abs{V_{t+1}^{c_n}}}\sum_{i \in V_{t+1}^{c_n}} \mbf{h}_t^{(1)}(i)\Big) - \Big(\mbb{E}_{i |i \in V_{t+1}^{c_m}}\brt{\mbf{h}^{(1)}_{t}(i)} -\mbb{E}_{i | i \in V_{t+1}^{c_n}}\brt{\mbf{h}^{(1)}_{t}(i)}\Big) \Big\rvert\Big\rvert_2 \geq \epsilon} \\ 
\leq \mc{O}\prn{\text{exp}\prn{\frac{-\epsilon^2 \abs{V_{t+1}^{c_m}}^2 \lvert C \rvert}{\sum_{i \in V_{t+1}^{c_m}} \sum_{j \in \hat{\mc{N}}_t(i)} \frac{1}{d_t(j)}}^2}+ \text{exp}\prn{\frac{-\epsilon^2 \abs{V_{t+1}^{c_n}}^2 \lvert C \rvert}{\sum_{i \in V_{t+1}^{c_n} } \prn{\sum_{j \in \hat{\mc{N}}_t(i)} \frac{1}{d_t(j)}}^2}}}
}
\end{theorem}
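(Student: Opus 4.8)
The plan is to mirror the two-part structure of Theorems \ref{thm:1} and \ref{thm:2}, now with vector-valued Gaussian features. I work under the natural multiclass version of the assumptions: features live in $\mbb{R}^{\abs{C}}$ with $\mbf{x}_t(i)\sim N(\mu_t\,\mbf{e}_{y_t(i)},\sigma_t^2 I)$, where $\{\mbf{e}_{c_k}\}_{k=1}^{K}$ ($K=\abs{C}$) is the standard orthonormal basis, and the dynamic compatibility matrix is defined entrywise by $\mbf{T}_t[c_m,c_k]=\mbb{P}(y_t(j)=c_k\mid j\in\hat{\mc{N}}_t(i),\,y_{t+1}(i)=c_m)$. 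The first claim then follows from exactly the partition argument in the proof of Theorem \ref{thm:1}: starting from the mean-aggregation rule $\mbb{E}_{i\mid y_{t+1}(i)=c_m}[\mbf{h}^{(1)}_t(i)]=\mbb{E}_{i\mid y_{t+1}(i)=c_m}\big[\tfrac{1}{d_t(i)+1}\sum_{j\in\hat{\mc{N}}_t(i)}\mbf{x}_t(j)\big]$, I partition the neighbourhood of $i$ by the time-$t$ label of each neighbour — the proportions being, by definition, the row $\mbf{T}_t[c_m,\cdot]$ — and use linearity together with $\mbb{E}[\mbf{x}_t(j)\mid y_t(j)=c_k]=\mu_t\mbf{e}_k$ to get $\mbb{E}_{i\mid y_{t+1}(i)=c_m}[\mbf{h}^{(1)}_t(i)]=\mu_t\sum_{k=1}^{K}\mbf{T}_t[c_m,c_k]\,\mbf{e}_k$. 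Subtracting the analogous expression for $c_n$ and taking the Euclidean norm, orthonormality of $\{\mbf{e}_k\}$ kills the cross terms and leaves $\mu_t\big(\sum_{k=1}^{K}(\mbf{T}_t[c_m,c_k]-\mbf{T}_t[c_n,c_k])^2\big)^{1/2}$.

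For the concentration bound I would split the deviation with the triangle inequality into a class-$c_m$ term $\big\lVert\tfrac{1}{\abs{V_{t+1}^{c_m}}}\sum_{i\in V_{t+1}^{c_m}}\mbf{h}^{(1)}_t(i)-\mbb{E}_{i\mid i\in V_{t+1}^{c_m}}[\mbf{h}^{(1)}_t(i)]\big\rVert_2$ and the analogous class-$c_n$ term, bound each, and close with a union bound over the two. For a fixed class $c$, I would apply the $\phi$-reparameterization from the proof of Theorem \ref{thm:2}: rewrite every feature coordinate as $\sigma_t$ times a standard Gaussian plus the relevant class-mean coordinate, so that $\mbf{x}\mapsto\tfrac{1}{\abs{V_{t+1}^{c}}}\sum_{i\in V_{t+1}^{c}}\mbf{h}^{(1)}_t(i)$ becomes a fixed affine function of i.i.d.\ standard Gaussians. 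Computing its Jacobian coordinatewise, the Gaussian variable attached to feature coordinate $k$ of node $j$ enters output coordinate $k$ with total weight $\tfrac{\sigma_t}{\abs{V_{t+1}^{c}}}\sum_{i\in V_{t+1}^{c}}\tfrac{\mbf{1}[j\in\hat{\mc{N}}_t(i)]}{d_t(j)}$ — the $l=1$ specialization of the degree-power sum in Theorem \ref{thm:2} — giving a per-coordinate Lipschitz constant whose square is $\tfrac{\sigma_t^2}{\abs{V_{t+1}^{c}}^2}\sum_{i\in V_{t+1}^{c}}\big(\sum_{j\in\hat{\mc{N}}_t(i)}\tfrac{1}{d_t(j)}\big)^2$. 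I would then invoke the scalar Gaussian–Lipschitz inequality (Lemmas \ref{lem:lip_1} and \ref{lem:lip_2}) on each of the $\abs{C}$ output coordinates with threshold $\epsilon/\sqrt{\abs{C}}$, so that the coordinatewise events imply the Euclidean-norm event, union bound over the $\abs{C}$ coordinates, and combine with the two-class union bound to obtain the stated $\mc{O}(\cdot)$ expression.

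The main obstacle is lifting the scalar concentration machinery of the appendix (Lemmas \ref{lem:lip_1}, \ref{lem:lip_2}, stated for $f:\mbb{R}^n\to\mbb{R}$) to these vector-valued empirical means while keeping the dependence on $\abs{C}$, the class sizes, and the degrees sharp. The delicate points are (i) verifying that, because distinct feature coordinates depend on disjoint blocks of the underlying Gaussian vector, the per-coordinate Lipschitz constants are the correct quantities and no cross-coordinate interaction inflates the bound, and (ii) doing the combinatorial accounting of how often each Gaussian coordinate appears in the one-layer aggregation, which is the multiclass $l=1$ analogue of the degree-power bookkeeping in the proof of Theorem \ref{thm:2}; once these are settled, the remaining steps are routine.
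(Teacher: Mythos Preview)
Your proposal is correct and follows essentially the same route as the paper's proof: partition neighbours by their time-$t$ label and read off the row of $\mbf{T}_t$ to get the expected representation, then for concentration split by triangle inequality into the two classes, treat each of the $\abs{C}$ output coordinates separately via the diagonal-covariance assumption and the $\phi$-reparameterization, apply the scalar Lipschitz lemmas, and finish with union bounds over coordinates and over the two classes. Your write-up is in fact slightly more explicit than the paper's on the coordinate-wise bookkeeping and the $\epsilon/\sqrt{\abs{C}}$ threshold, but the skeleton is identical.
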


\begin{proof}
In the multiclass classification case, rather than partitioning the neighbors into those whose label at timestep $t$ is the same as $i$'s label at timestep $t+1$, we instead partition the neighbors into their respective classes at timestep $t$. Now, the partition is determined by entries of the dynamic compatibility matrix. First assume $y_{t+1}(i)=c_m$: 
\al{\mbb{E}_{i | i \in V_{t+1}^{c_m}}\brt{\mbf{h}^{(1)}_{t}(i)}&=\sum_{k=1}^K \mbb{E}_{j | y_t(j)=c_k} \brt{\mbf{x}_t(j)} 
\cdot \mbb{P}(y_t(j)=c_k \mid j \in \hat{\mc{N}}_t(i), y_{t+1}(i) = c_m) \\
&=\sum_{k=1}^K \mbb{E}_{j | y_t(j)=c_k} \brt{\mbf{x}_t(j)}\cdot \mbf{T}_t[c_m, c_k] \label{eq:49}.} 
 
We follow a similar derivation assuming $y_{t+1}(i)=c_n$, and we obtain the following:
\al{\mbb{E}_{i | i \in V_{t+1}^{c_n}}\brt{\mbf{h}^{(1)}_{t}(i)} &=\sum_{k=1}^K \mbb{E}_{j | y_t(j)=c_k} \brt{\mbf{x}_t(j)} \cdot \mbb{P}(y_t(j)=c_k \mid j \in \hat{\mc{N}}_t(i), y_{t+1}(i) = c_n) \\
&=\sum_{k=1}^K \mbb{E}_{j | y_t(j)=c_k} \brt{\mbf{x}_t(j)} \cdot \mbf{T}_t[c_n, c_k] \label{eq:51}.}

Measuring the $l^2$ norm of the difference between \eqref{eq:51} and \eqref{eq:49}, and applying the assumption that for all $i$, $y_t(i) \in \mc{C}$ and $\mbf{x}_t(i) \sim \mc{N}(\bs{\mu_t}_{\mbf{Y}_t(i)}, \sigma^2)$, yields:
\al{\Big\lvert \Big\lvert \mbb{E}_{i | i \in V_{t+1}^{c_m}}\brt{\mbf{h}^{(1)}_{t}(i)} -\mbb{E}_{i | i \in V_{t+1}^{c_n}}\brt{\mbf{h}^{(1)}_{t}(i)} \Big\rvert \Big\rvert_2 &= \mu_t \cdot \Big\lvert \Big\lvert \sum_{k=1}^K (\mbf{T}_t[c_m, c_k] - \mbf{T}_t[c_n, c_k]) \Big\rvert \Big\rvert_2 \label{eq:27} \\
&= \mu_t \cdot \prn{\sum_{k=1}^K (\mbf{T}_t[c_m, c_k] - \mbf{T}_t[c_n, c_k])^2}^{\frac{1}{2}}.}

We aim to bound the difference between the expected difference in node representation and their empirical difference. To prove the bound, we use a similar derivation as in Theorems \ref{thm:2}. We first decompose the difference terms and treat each separately as follows:
\ml{\Big\lvert\Big\lvert \Big(\frac{1}{\abs{V_{t+1}^{c_m}}}\sum_{i \in V_{t+1}^{c_m}}\mbf{h}_t^{(1)}(i) - \frac{1}{\abs{V_{t+1}^{c_n}}}\sum_{i \in V_{t+1}^{c_n}} \mbf{h}_t^{(1)}\Big) - \Big(\mbb{E}_{i | i \in V_{t+1}^{c_m}}\brt{\mbf{h}^{(1)}_{t}(i)} -\mbb{E}_{i | i \in V_{t+1}^{c_n}}\brt{\mbf{h}^{(1)}_{t}(i)}\Big) \Big\rvert \Big\rvert_2 \\
\leq \Big\lvert\Big\lvert \Big(\frac{1}{\abs{V_{t+1}^{c_m}}}\sum_{i \in V_{t+1}^{c_m}}\mbf{h}_t^{(1)}(i) - \mbb{E}_{i | i \in V_{t+1}^{c_m}}\brt{\mbf{h}^{(1)}_{t}(i)} \Big) \Big\rvert\Big\rvert_2 - \Big\lvert\Big\lvert \Big(\frac{1}{\abs{V_{t+1}^{c_n}}}\sum_{i \in V_{t+1}^{c_n}} \mbf{h}_t^{(1)}(i) - \mbb{E}_{i | i \in V_{t+1}^{c_n}}\brt{\mbf{h}^{(1)}_{t}(i)} \Big) \Big\rvert\Big\rvert_2}

where the last inequality follows from the triangle inequality. Again, the first term amounts to the difference between the mean representation for nodes in $V_{t+1}^{c_m}$ and the expected representation of nodes in nodes in $V_{t+1}^{c_m}$, while the second term is the difference between the mean representation for nodes in nodes in $V_{t+1}^{c_n}$ and the expected representation of nodes in nodes in $V_{t+1}^{c_m}$. We proceed to bound each separately.

The key in proving the upper bound in Theorem \ref{thm:6} is to handle each dimension of the node representations separately. Using our assumption that node representations are drawn from a multivariate gaussian with a diagonal covariance, each dimension of the representation is independent and equivalent to a univariate gaussian. For dimension arbitrary dimension $k$ of the representation, we obtain the following \textit{recursive formulation} for the hidden representation: 
\ml{\frac{1}{\abs{V_{t+1}^{c_m}}}\sum_{i \in V_{t+1}^{c_m}}\mbf{h}_t^{(l)}(i)[k] - \mbb{E}_{i | i \in V_{t+1}^{c_m}}\brt{\mbf{h}^{(l)}_{t}(i)[k]}\\ 
=\frac{1}{\abs{V_{t+1}^{c_m}}}\sum_{i \in V_{t+1}^{c_m}}\frac{1}{d_t(i)} \sum_{j \in \hat{\mc{N}}_t(i)} \mbf{x}_t^{(1)}(j)[k] - \mbb{E}_{i | i \in V_{t+1}^{c_m}}\brt{\mbf{h}^{(1)}_{t}(i)[k]} \\
=\frac{1}{\abs{V_{t+1}^{c_m}}}\sum_{i \in V_{t+1}^{c_m}}\frac{1}{d_t(i)} \sum_{j \in \hat{\mc{N}}_t(i)} \phi(\mbf{x_t'(j)[k]}) - \mbb{E}_{i | i \in V_{t+1}^{c_m}}\brt{\mbf{h}^{(l)}_{t}(i)[k]}}

where $\phi(\mbf{x_t'(j)})$, $\mu_t(j)$, and $\sigma^2_t(j)$ are as defined in Theorem \ref{thm:1}. Again, each $\mbf{x_t'(j)[k]}$ are \textit{standard} gaussian random variables, and the empirical mean representation for the positive class at layer $l$ is a function $f^{{(l)}, +}_t: \mbb{R}^n \to \mbb{R}$ of standard gaussian random variables. Thus, we can use Lemmas \ref{lem:lip_1} and \ref{lem:lip_2} to bound the distance between the empirical mean representation for positive nodes and the expected representation for positive nodes.
\al{f^{(1), c_m}_t(\mbf{x_t}) &= \frac{1}{\abs{V_{t+1}^{c_m}}}\sum_{i \in V_{t+1}^{c_m}}\mbf{h}_t^{(1)}(i)[k] = \frac{1}{\abs{V_{t+1}^{c_m}}} (h_t^{(1)}(0)[k] + \cdots + h_t^{(1)}(\abs{V_{t+1}^{c_m}})[k]) \\
\lvert \lvert \nabla f^{(l), c_m}_t \rvert\rvert_2^2 &= \sum_{i \in V_{t+1}^{c_m}} \prn{\frac{\sigma^2}{\abs{V_{t+1}^{c_m}}} \sum_{j \in \hat{\mc{N}}_t(i)} \frac{1}{d_t(j)}}^2 = \frac{\sigma^4}{\abs{V_{t+1}^{c_m}}^2} \sum_{i \in V_{t+1}^{c_m}} \prn{\sum_{j \in \hat{\mc{N}}_t(i)} \frac{1}{d_t(j)}}^2 \label{eq:41}
}

Following a similar derivation for the second term where we aim to bound the difference between the mean negative node representation and the expected negative representation at layer $l$ yields:
\al{f^{(1), c_n}_t(\mbf{x_t}) &= \frac{1}{\abs{V_{t+1}^{c_n}}}\sum_{i \in V_{t+1}^{c_n}}\mbf{h}_t^{(1)}(i)[k] = \frac{1}{\abs{V_{t+1}^{c_n}}} (h_t^{(1)}(0)[k] + \cdots + h_t^{(l)}(\abs{V_{t+1}^{c_n}}[k])) \\
\lvert \lvert \nabla f^{(1), c_n}_t \rvert\rvert_2^2 &= \sum_{i \in V_{t+1}^{c_n}} \prn{\frac{\sigma^2}{\abs{V_{t+1}^{c_n}}} \sum_{j \in \hat{\mc{N}}_t(i)} \frac{1}{d_t(j)}}^2 = \frac{\sigma^4}{\abs{V_{t+1}^{c_n}}^2} \sum_{i \in V_{t+1}^{c_m}} \prn{\sum_{j \in \hat{\mc{N}}_t(i)} \frac{l}{d_t(j)}}^2
}

By Lemma 1, we obtain the following bounds for (1) the distance between the empirical positive node representation and the expected positive representation at layer $l$ and (2) the distance between the empirical negative node representation and the expected negative representation at layer $l$: 
\al{\mbb{P}\Bigg(\Big\lvert \frac{1}{\abs{V_{t+1}^{c_m}}}\sum_{i \in V_{t+1}^{c_m}} \mbf{h}_t^{(1)}(i) - \mbb{E}_{i | i \in V_{t+1}^{c_m}}\brt{\mbf{h}^{(1)}_{t}(i)} \Big\rvert &\geq \epsilon \Bigg)\leq \mc{O}\Big(\text{exp}\Big(\frac{-\epsilon^2 \abs{V_{t+1}^{c_m}}^2}{\sigma^4 \sum_{i \in V_{t+1}^{c_m}} \prn{\sum_{j \in \hat{\mc{N}}_t(i)} \frac{l}{d_t(j)^l}}^2}\Big)\Big) \\
\mbb{P}\Bigg(\Big\lvert \frac{1}{\abs{V_{t+1}^{c_n}}}\sum_{i \in V_{t+1}^{c_n}} \mbf{h}_t^{(1)}(i) - \mbb{E}_{i | i \in V_{t+1}^{c_n}}\brt{\mbf{h}^{(1)}_{t}(i)} \Big\rvert &\geq \epsilon \Bigg)\leq \mc{O}\Big(\text{exp}\Big(\frac{-\epsilon^2 \abs{V_{t+1}^{c_n}}^2}{\sigma^4 \sum_{i \in V_{t+1}^{c_n}} \prn{\sum_{j \in \hat{\mc{N}}_t(i)} \frac{l}{d_t(j)^l}}^2}\Big)\Big)}

Notice, we have obtained our bounds assuming arbitrary dimension $k$. It suffices to apply a second union bound over all $k \in \lvert C \rvert$ in order to obtain the final bound in Theorem \ref{thm:6}.
\end{proof}

\section{ADDITIONAL THEORY, EMPIRICAL RESULTS, AND DISCUSSIONS}\label{app:disc}

\subsection{Dynamic Homophily and Multiclass Classification \label{app:mult}}

In the multiclass classification case, dynamic homophily alone does not represent the distance between nodes of different classes. Thus, we introduce the dynamic compatibility matrix as an extension of the class compatibility matrix, allowing us to estimate the node representations of GCNs and measure the expected distance in node representations.

\textbf{Why is static homophily insufficient?} We first consider why static homophily is insufficient in multiclass classification on a static graph. In multiclass classification, each class can be associated with $\abs{\mc{C}}$ probabilities, where the $k$-th probability for class $m$ denotes the probability a node of class $m$ forms an edge with a node of class $k$. These probabilities are essential for estimating node representations because they capture the full neighbor label distribution across all node classes. Intuitively, each class must have a unique neighbor distribution in order for GNNs to discriminate between the classes. Notably, here static homophily can be low, yet the distributions can be unique, leading to high GNN performance \citep{lim2021large,ma2022is,zhu2023heterophily}. 

In the the static setting, the class compatibility matrix captures the entire neighbor label distribution for each class. Thus, we aim to develop a similar quantity in the dynamic setting that captures the same intuition as dynamic class homophily in binary classification. To this end, we propose the dynamic compatibility matrix formally defined as follows: 

\begin{definition}[Dynamic Compatibility Matrix]
Given dynamic graph $\bs{G}_{0:T}$, the dynamic compatibility matrix $\mbf{T}_t$ at timestep $t$ is a $\abs{\mc{C}} \times \abs{\mc{C}}$ matrix where entry $\mbf{T}_t[c_m , c_n]$ is defined: 
\eq{\mbf{T}_t[c_m, c_n] = \mbb{P}(y_t(j)=c_n \mid j \in \hat{\mc{N}}_t(i), y_{t+1}(i) = c_m).}
\end{definition}

We now proceed to measure the node representations for specific classes under multiclass node classification in the dynamic setting. Here, we show that the expected Euclidean distance between node representations of different classes is a function of the Euclidean distance between the neighbor label distributions of the two classes. In our analysis we make the following assumptions: 

\textbf{Assumptions (Multiclass Classification):} Let $\bs{G}_{0:T}$ be a dynamic graph, and $\mc{C} = \{c_1, \ldots, c_K \}$ be the set of node classes, where $K > 2$. For all $t \in [0, T]$ and for all $i \in V_t$, $y_t(i) \in \mc{C}$ and $\mbf{x}_t(i) \sim N(\bs{\mu_t}_{y_t(i)}, \sigma^2)$ where $N$ is a mutivariate normal distribution with mean $\bs{\mu_t}_{y_t(i)} = \mu_t \cdot \text{one-hot}(y_t(i)) \in \mbb{R}^K$ and covariance $\sigma^2 = \text{Diag}(\bs{\sigma^2}) \in \mbb{R}^{K \times K}$. 

\begin{theorem}
The Euclidean distance of the expected difference between a randomly sampled node of class $c_m$ and a randomly sampled node of class $c_n$ after 1 layer of a GCN can be expressed as: 
\eq{\Big\lvert \Big\lvert \mbb{E}_{i | y_{t+1}(i)=c_m}\brt{\mbf{h}^{(1)}_{t}(i)} - \mbb{E}_{i | y_{t+1}(i)=c_n}\brt{\mbf{h}^{(1)}_{t}(i)} \Big\rvert \Big\rvert_2
= \mu_t \cdot \prn{\sum_{k=1}^K (\mbf{T}_t[c_m, c_k] - \mbf{T}_t[c_n, c_k])^2}^{\frac{1}{2}}.}
Furthermore, for any $\epsilon > 0$, the probability that the distance between the empirical mean representation and the expected representation is larger than $\epsilon$ is bounded as follows:
\ml{\mbb{P}\prn{\Big\lvert\Big\lvert \Big(\frac{1}{\abs{V_{t+1}^{c_m}}}\sum_{i \in V_{t+1}^{c_m}} \mbf{h}_t^{(1)}(i) - \frac{1}{\abs{V_{t+1}^{c_n}}}\sum_{i \in V_{t+1}^{c_n}} \mbf{h}_t^{(1)}(i)\Big) - \Big(\mbb{E}_{i |i \in V_{t+1}^{c_m}}\brt{\mbf{h}^{(1)}_{t}(i)} -\mbb{E}_{i | i \in V_{t+1}^{c_n}}\brt{\mbf{h}^{(1)}_{t}(i)}\Big) \Big\rvert\Big\rvert_2 \geq \epsilon} \\ 
\leq \mc{O}\prn{\text{exp}\prn{\frac{-\epsilon^2 \abs{V_{t+1}^{c_m}}^2 \lvert C \rvert}{\sum_{i \in V_{t+1}^{c_m}} \sum_{j \in \hat{\mc{N}}_t(i)} \frac{1}{d_t(j)}}^2}+ \text{exp}\prn{\frac{-\epsilon^2 \abs{V_{t+1}^{c_n}}^2 \lvert C \rvert}{\sum_{i \in V_{t+1}^{c_n} } \prn{\sum_{j \in \hat{\mc{N}}_t(i)} \frac{1}{d_t(j)}}^2}}}
}
\label{thm:6}
\end{theorem}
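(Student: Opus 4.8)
The plan is to generalize the partition argument behind Theorem \ref{thm:1} from a two-way to a $K$-way split indexed by the dynamic compatibility matrix, and then to replay the Lipschitz-concentration argument from Theorem \ref{thm:2} coordinate by coordinate, paying an extra union bound over the $\lvert C \rvert$ representation dimensions.

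For the expected-distance claim I would start from the GCN propagation rule \eqref{eq:2} and write $\mbb{E}_{i \mid y_{t+1}(i)=c_m}\brt{\mbf{h}^{(1)}_t(i)}$ as an average over the neighbors $j \in \hat{\mc{N}}_t(i)$. The key step is to partition this neighbor set by the label $y_t(j)$: the probability that $y_t(j)=c_k$ conditioned on $y_{t+1}(i)=c_m$ is by definition the entry $\mbf{T}_t[c_m,c_k]$, so the expectation becomes $\sum_{k=1}^K \mbb{E}_{j\mid y_t(j)=c_k}\brt{\mbf{x}_t(j)}\cdot \mbf{T}_t[c_m,c_k]$, mirroring \eqref{eq:49}. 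Under the multiclass assumption $\mbf{x}_t(j)\sim N(\mu_t\cdot\text{one-hot}(y_t(j)),\text{Diag}(\bs{\sigma^2}))$ we have $\mbb{E}_{j\mid y_t(j)=c_k}\brt{\mbf{x}_t(j)}=\mu_t\cdot\text{one-hot}(c_k)$, so the expected representation is the vector whose $k$-th coordinate equals $\mu_t\cdot\mbf{T}_t[c_m,c_k]$. Subtracting the analogous expression for $c_n$ and taking the Euclidean norm gives the stated formula; this is exactly the analogue of equations \eqref{eq:18}--\eqref{eq:19}, with the scalar two-point partition replaced by the $K$-dimensional one.

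For the concentration claim I would split the deviation using the triangle inequality into a $c_m$ term and a $c_n$ term, as in the proof of Theorem \ref{thm:2}, and bound each separately. Because the covariance $\text{Diag}(\bs{\sigma^2})$ is diagonal, each coordinate $k$ of the node representations is an independent univariate Gaussian, so I fix a coordinate and rewrite the empirical mean $\frac{1}{\abs{V_{t+1}^{c_m}}}\sum_{i\in V_{t+1}^{c_m}}\mbf{h}^{(1)}_t(i)[k]$ as a function $f$ of \emph{standard} Gaussian inputs via the affine reparametrization $\phi$ from Theorem \ref{thm:1}. I then compute $\nrm{\nabla f}_2^2$ by observing that $\mbf{x}_t(j)[k]$ contributes once for each incident edge of $j$ with weight $1/d_t(j)$, giving the expression in \eqref{eq:41}, and apply Lemmas \ref{lem:lip_1} and \ref{lem:lip_2} to obtain a sub-Gaussian tail for that coordinate. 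A union bound over the $\lvert C \rvert$ coordinates (which is what introduces the $\lvert C \rvert$ factor, after rescaling $\epsilon$ so the per-coordinate deviations control the $\ell_2$ norm) followed by a union bound over the two classes yields the claimed bound.

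The main obstacle I anticipate is the bookkeeping in the concentration step rather than any genuine conceptual difficulty: correctly tracking the per-coordinate Lipschitz constants, reconciling the Euclidean norm over coordinates with the coordinatewise union bound (which forces either a $1/\sqrt{\lvert C \rvert}$ rescaling of the per-coordinate threshold or a vector-valued Gaussian concentration inequality), and verifying that the degree-counting collapses to the $l=1$ specialization of the factor appearing in Theorem \ref{thm:2}. The expected-distance part is essentially immediate once the correct $K$-way partition is in place.
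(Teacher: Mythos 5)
Your proposal follows essentially the same route as the paper's proof: the $K$-way partition of the neighborhood indexed by the dynamic compatibility matrix (yielding $\sum_k \mbb{E}_{j\mid y_t(j)=c_k}[\mbf{x}_t(j)]\cdot\mbf{T}_t[c_m,c_k]$ and hence the stated Euclidean distance under the one-hot mean assumption), followed by the triangle-inequality split, the coordinatewise reduction to standard Gaussians via $\phi$, the gradient-norm computation for the Lipschitz constant, and union bounds over the $\lvert C\rvert$ coordinates and the two classes. Your noted concern about reconciling the $\ell_2$ norm with the per-coordinate union bound (the $1/\sqrt{\lvert C\rvert}$ rescaling that produces the $\lvert C\rvert$ factor in the exponent) is exactly the step the paper handles with its "second union bound over all $k\in\lvert C\rvert$," so the proposal is correct and matches the paper's argument.
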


Theorem \ref{thm:6} tells us that the expected Euclidean distance between node representations of different classes depend on the Euclidean distance between the neighbor label distributions of the two classes. Moreover, the expected distance is close to the empirical distance with high probability. As the neighbor label distributions for two classes become more similar, the Euclidean distance between their expected node representation decreases, and we expect 1-layer GCNs to perform worse in discriminating between the two classes. 

\subsection{Extension of Dynamic homophily to Continuous Dynamic Graphs \label{app:cont}}

In the main paper, we propose dynamic homophily in the context of discrete dynamic graphs. While there are many examples of discrete dynamic graphs such as social networks and protein-protein interaction networks, there are also continuous representations of graphs where the graph is defined as a sequence of edges rather than a sequence of static graphs. A straightforward way to extend dynamic homophily to continuous graphs is to first specify a window size $k$, then define the dynamic homophily at time $t$ as the dynamic homophily measured within the window size [$t$, $t + k$]. We obtain the sequence of dynamic homophily levels by measuring it along the windows [$tk$, $(t+1)k$] for all $t$. This strategy aligns with most tasks defined on continuous graphs, where performance is measured along time slices of the continuous graph and the window size is determined by the application \citep{kumar2019predicting, huang2023temporal}. 

In practice, the evaluation of dynamic homophily in continuous settings is challenging since most publicly available continuous graphs are heterogeneous, where nodes are of different types \citep{kumar2019predicting, huang2023temporal}. For example, \citet{kumar2019predicting} considers a Reddit network where nodes are users and subreddits, while edges connect users to subreddits. Another example can be found in \citet{huang2023temporal} where the dataset is a cryptocurrency network where nodes are users and tokens, while edges connect users to tokens. In these cases, it is not straightforward to measure homophily since it is difficult to compare user nodes to token or subreddit nodes, and there may be no interpretation for a user node to be similar to a token or subreddit node. Thus, due to the heterogeneity of these datasets, we leave the further exploration of heterogenous continuous dynamic graphs and dynamic homophily for future work.

\subsection{Further Discussion of Theorem \ref{thm:3} \label{sub:auc}}

Here, we provide additional discussion of Theorem \ref{thm:3}. In Figure \ref{fig:6}, we visualize the expected AUROC at different GCN layers on the same plot for different choices of $\mu_t$ and $\sigma^2$. We find that indeed optimal AUROC for different dynamic homophily levels is obtained for different GCN layers. In particular, when $\mu_t=2$ and $\sigma^2=.5$, deeper odd-layered GCNs lead to the best performance when both dynamic homophily levels are high, while deeper even-layered GCNs lead to the best performance when both dynamic homophily levels are low. When $\mu_t=1$ and $\sigma^2=1$, deeper GCNs perform poorly since smoothing is not as beneficial. Here, the optimal GCNs are 1-layer and 2-layer GCNs when dynamic homophily levels are both high and low, respectively. In light of these results, a natural solution in obtaining the best performance across a dynamic graph with changing dynamic and dynamic homophily levels is to combine the representations across layers such that a GNN leverages the most useful ones across the spectrum of dynamic and dynamic homophily levels.

\begin{figure*}[t!]
    \centering
    \includegraphics[width=.9\textwidth]{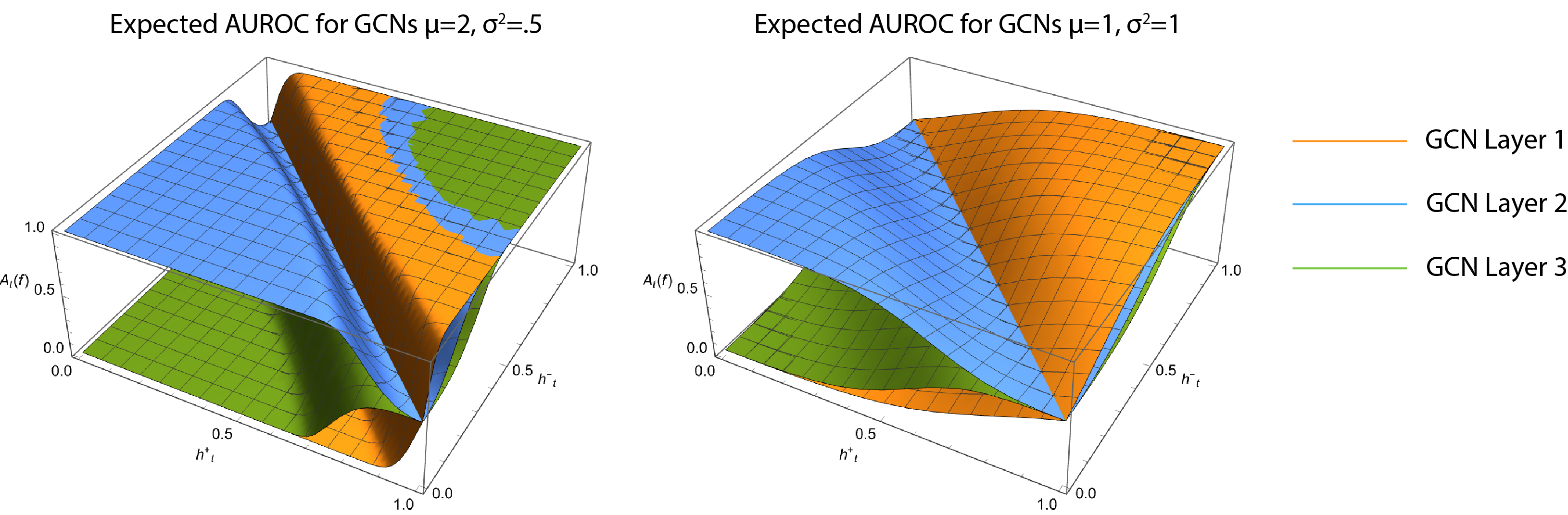}
    \caption{\footnotesize Expected AUROC across GCN layers as a function of dynamic homophily levels for different $\mu_t$ and $\sigma$.}
    \label{fig:6}
\end{figure*}

\subsection{Dynamic Homophily and Dynamic GNNs \label{app:dyn}}

Following the same training and evaluation setup described in the main paper, we also test dynamic homophily's correlation with performance for various dynamic GNNs. Generally, dynamic GNNs find a way to combine an RNN component with a message passing module in order to leverage the temporal signal present in the dynamic graph. However, since in our synthetic experimental setup the SI model makes the Markov assumption, a dynamic GNN that leverages the full history should simply learn to ignore all timesteps prior to $t$, making them equivalent to static GNNs on these datasets. On the Higgs networks, intuitively, the information from timestep $t$ should be enough information to predict the spread of the signal at time $t+1$. Lastly, existing results in \citet{fu2022dppin} on the protein-protein interaction networks suggest that dynamic GNNs leveraging the temporal signal perform just as well as static GNNs that do not leverage the temporal data. Given these arguments, we do not expect dynamic GNNs to perform better than static GNNs on our chosen datasets. Nevertheless, for completeness purposes, we test the following representative dynamic GNNs from the literature on a subset of the datasets in the main paper: \textbf{DGNN} \citep{manessi2020dynamic, narayan2018learning, chen2022gc}, \textbf{GCRN} \citep{seo2018structured}, and \textbf{EvolveGCN} \citep{pareja2020evolvegcn}. We provide descriptions of all dynamic GNNs in the context of their dynamic components in Appendix \ref{app:exps}.

Following our evaluation procedure in the main paper, we compute the average correlation between dynamic homophily and dynamic GNN performance. Across all combinations of synthetic graphs and GNN approaches, the average correlation between dynamic homophily and GNN performance exceeds the correlation between static homophily and GNN performance (Table \ref{tab:4}). To fully capture changes to dynamic GNN performance, we evaluate at each time step for each of the datasets (Figure \ref{fig:7}). More specifically we measure AUROC, dynamic homophily, and static homophily at each timestep for the Regular, Powerlaw, and Higgs 1 graph. While dynamic homophily exhibits the same trends as GNN performance, static homophily does not since it stays high across all three graphs for all timesteps. Comparing the performance of dynamic GNNs to static GNNs, we find generally that dynamic GNNs perform worse than static ones, confirming our hypothesis that GNNs leveraging the full temporal signal do not provide benefits on our chosen datasets (Table \ref{tab:5}).

\begin{figure*}[h!]
     \centering
     \begin{subfigure}[b]{.325\textwidth}
         \centering
         \includegraphics[width=\textwidth]{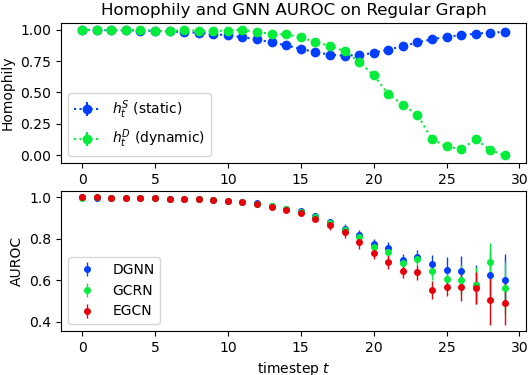}
         \caption{Regular dynamic graph}
     \end{subfigure}
     \hfill
     \begin{subfigure}[b]{.325\textwidth}
         \centering
         \includegraphics[width=\textwidth]{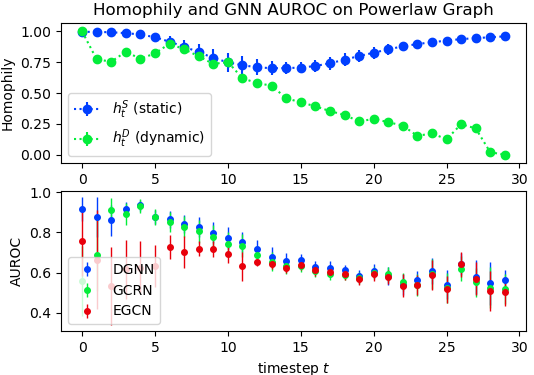}
         \caption{Powerlaw dynamic graph}
     \end{subfigure}
     \hfill
     \begin{subfigure}[b]{.325\textwidth}
         \centering
         \includegraphics[width=\textwidth]{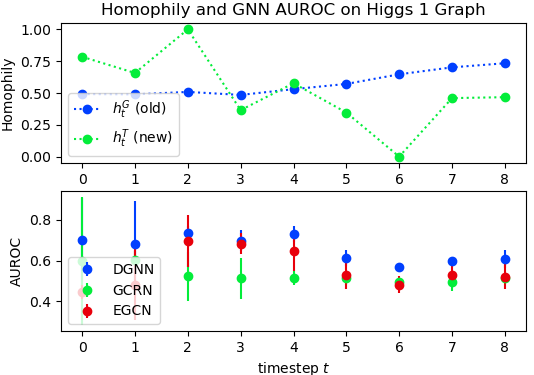}
         \caption{Higgs 1 dynamic graph}
     \end{subfigure}
    \caption{\footnotesize Mean and standard deviations (error bars) of dynamic homophily, static homophily, and AUROC across all graphs in the test set for the Regular, Powerlaw, and Higgs 1 dynamic graph. For all three dynamic graphs, static homophily stays relatively high across time, while dynamic homophily and GNN performances show different trends.}
    \label{fig:7}
\end{figure*}

\begin{table*}[h!]
\begingroup
\setlength{\tabcolsep}{6pt} 
\footnotesize
\caption{\footnotesize The mean ± standard deviation of Spearman's rank correlation coefficient between GNN AUROC and homophily measures across all graphs in the test set. For all combinations of GNN and dynamic graph, dynamic homophily tends to have a higher correlation with dynamic GNN performance compared to static homophily.}
\label{tab:4}
\begin{center}
\begin{tabular}{ l l @{\quad} c c l@{\quad} c c } 
\toprule
& & \multicolumn{2}{c }{\textbf{Epidemiological Dynamic Graphs}} && \multicolumn{2}{c }{\textbf{Higgs Dynamic Graphs}} \\
\cmidrule{3-4} \cmidrule{6-7}
 \textbf{GNN} & \textbf{Homophily} & \textbf{Regular}  & \textbf{Powerlaw} && \textbf{Higgs 1} & \textbf{Higgs 2} \\ 
  \midrule
 \multirow{2}{*}{\textbf{DGNN}} & (\textit{static}) $h_t^S$ & 0.52 ± 0.09 & 0.09 ± 0.15 && -0.55 ± 0.32 & OOM \\
 & (\textit{dynamic}) $h_t^D$ & \textbf{0.90 ± 0.04} & \textbf{0.83 ± 0.08} && \textbf{0.60 ± 0.35} & OOM \\
 \arrayrulecolor{gray}\hline
  \multirow{2}{*}{\textbf{GCRN}} & (\textit{static}) $h_t^S$ & 0.51 ± 0.09 & 0.13 ± 0.13 && -0.24 ± 0.26 & OOM \\
  & (\textit{dynamic}) $h_t^D$ & \textbf{0.91 ± 0.04} & \textbf{0.85 ± 0.07} && \textbf{0.22 ± 0.09} & OOM \\
 \hline
 \multirow{2}{*}{\textbf{EvolveGCN}} & (\textit{static}) $h_t^S$ & 0.50 ± 0.09 & -0.18 ± 0.23 && -0.05 ± 0.27 & OOM \\
 & (\textit{dynamic}) $h_t^D$ & \textbf{0.91 ± 0.03} & \textbf{0.77 ± 0.13} && \textbf{-0.03 ± 0.44} & OOM \\
 \arrayrulecolor{black}\bottomrule
\end{tabular}
\end{center}
\endgroup
\end{table*}

\begin{table*}[h!]
\begingroup
\setlength{\tabcolsep}{6pt} 
\footnotesize
\caption{\footnotesize Mean ± standard deviation of GNN AUROC across time for each dynamic graph in the test set. Dynamic graphs tend to perform worse than static ones, indicating leveraging the full timeseries on our chosen datasets do not improve performance.}
\label{tab:5}
\begin{center}
\begin{tabular}{ l @{\quad} c c l@{\quad} c c } 
\toprule
& \multicolumn{2}{c }{\textbf{Epidemiological Dynamic Graphs}} && \multicolumn{2}{c }{\textbf{Higgs Dynamic Graphs}} \\
\cmidrule{2-3} \cmidrule{5-6}
 \textbf{GNN} & \textbf{Regular}  & \textbf{Powerlaw} && \textbf{Higgs 1} & \textbf{Higgs 2} \\ 
  \midrule
 \textbf{GCN} & 0.96 $\pm$ 0.02 & 0.81 $\pm$ 0.01 && 0.56 $\pm$ 0.01 & 0.54 $\pm$ 0.01 \\
 \textbf{GAT} & 0.95 $\pm$ 0.02 & 0.78 $\pm$ 0.01 && 0.50 $\pm$ 0.01 & 0.50 $\pm$ 0.01 \\
 \textbf{SAGE} & \textbf{0.96 $\pm$ 0.01} & \textbf{0.82 $\pm$ 0.01} && 0.61 $\pm$ 0.00 & \textbf{0.55 $\pm$ 0.00} \\
 \midrule
 \textbf{DGNN} & 0.86 ± 0.01 & 0.69 ± 0.01 && \textbf{0.66 ± 0.07} & OOM \\
 \textbf{GCRN} & 0.86 ± 0.01 & 0.67 ± 0.02 && 0.53 ± 0.02 & OOM \\
 \textbf{EvolveGCN} & 0.83 ± 0.01 & 0.62 ± 0.01 && 0.56 ± 0.05 & OOM \\
 \arrayrulecolor{black}\bottomrule
\end{tabular}
\end{center}
\endgroup
\end{table*}

\section{EXPERIMENTAL DETAILS \label{app:exps}}

\subsection{Pseudo-synthetic Dataset Details}
We first describe how we utilize the SI model in generating the dynamic graph for the pseudo-synthetic dataset. Formally, at timestep $t$ susceptible node $i$ can become infected by an infected neighbor $j$ with probability $\alpha_j \cdot \beta_i$, where $\alpha_j \in [0, 1]$ is the infectivity of $j$ and $\beta_i \in [0, 1]$ is the susceptibility of $i$. Then, at timestep $t$ for susceptible node $i$ its label at timestep $t + 1$ is sampled from a Bernoulli parameterized by $1 - \prod_{j \in \mc{N}_t(i)} (1 - \alpha_j \cdot \beta_i \cdot \mbb{I}(y_t(j) == +1))$.

Given the graph structure, labels are generated by first assigning to each node hidden parameters $\alpha_i$ and $\beta_i$, sampled from unique beta distributions specified prior to assignment that do not change over time. Parameters for the beta distributions are selected such that all nodes in the dynamic graph are infected at the final timestep. We then randomly sample a small set of nodes at timestep $t=0$ as initially infected, determining $\mbf{y}_0$. Then, we use the SI model to generate the label sequence $\{\mbf{y}_1, \ldots, \mbf{y}_T\}$. At time $t$, node $i$'s feature vector is given by, $\mbf{x}_t(i) = \brt{y_t(i) || \alpha(i) || \beta(i)}$. 

We now present the pseudocode for generation of each dynamic graph in Algorithm 1. Below, we summarize the label and feature generation process for an arbitrary node $i$: 
\begin{align}
\alpha_i &\sim \mc{B}eta(\theta_{1,\text{inf}}, \theta_{2,\text{inf}}) \\
\beta_i &\sim \mc{B}eta(\theta_{1,\text{sus}}, \theta_{2,\text{sus}}) \\
\mbf{x}_t(i) &= \brt{y_t(i) || \alpha(i) || \beta(i)}
\end{align}
\eq{
y_{t+1}(i) \sim 
\begin{cases}
B(p_\text{init}), \hspace{.5em} &\text{if } t = 0\\ 
B\Big(1 - \hspace{-1.25em} \prod\limits_{j \in \hat{\mc{N}}_{t-1}(i)} (1 - \alpha_j \cdot \beta_i \cdot \mbb{I}(y_{t}(j) = +1))\Big), &\text{if } t>0
\end{cases}}

where, $\mc{B}eta(\theta_{1,\text{inf}}, \theta_{2,\text{inf}})$ and $\mc{B}eta(\theta_{1,\text{sus}}, \theta_{2,\text{sus}})$ represents the beta distributions for the infectivity and susceptibility parameters, and $B$ represents the bernoulli distribution. Importantly, the choice of parameters for the beta distributions affect the behavior of the infection. If the underlying susceptibilities of all nodes are too high or the underlying infectivity of all nodes are too low, infection does not spread to each node in the dynamic graphs. Ideally, we want to avoid degenerate infection cases where infection does not spread. Thus, in order to generate our dynamic graphs we ensure infection spreads throughout the entire graph by the final timestep by choosing the parameters for the beta distributions appropriately. 

\begin{algorithm}
\caption{SI Model of Infectious Disease}\label{alg:cap}
\begin{algorithmic}[1]
\Procedure {SI}{$V_{0:T}$, $\mbf{A}_{0:T}$, $\mc{B}eta(\theta_{1,\text{inf}}$, $\theta_{2,\text{inf}})$, $\mc{B}eta(\theta_{1,\text{sus}}, \theta_{2,\text{sus}})$, $p_{\text{init}}$}
    \For {$i \in V_{0:T}$} \quad \Comment {Initialize infectivity parameter, susceptibility parameter, and $y_0(i)$ for all nodes}
        \If{$i$ has not been assigned parameters $\alpha$ and $\beta$}
            \State $\alpha_i \sim \mc{B}eta(\theta_{1,\text{inf}}$, $\theta_{2,\text{inf}})$
            \State $\beta_i \sim \mc{B}eta(\theta_{1,\text{sus}}, \theta_{2,\text{sus}})$
        \EndIf
    \EndFor
    \For {$i \in V_{0}$}
        \State $y_0(i) \sim B(p_{\text{init}})$
    \EndFor
    
    \For {$t \leftarrow 0, T$} \quad \Comment {For all timesteps, for all nodes in $V_t$, simulate SI infection}
        \For {$i \in V_{t}$}
            \If{$y_t(i) == 0$}
                \For {$j \in \mc{N}_{t}(i)$}
                    \State $y_t(i) \sim B(\alpha_j \cdot \beta_i \cdot \mbb{I}(y_{t}(j) = +1))$
                    \If{$y_t(i) == +1$}
                        \State \textbf{break}
                    \EndIf
                \EndFor
            \EndIf
        \EndFor
    \EndFor
    
    \For {$t \leftarrow 0, T$} \quad \Comment {Gather features for all nodes for all timesteps}
        \For {$i \in V_{t}$}
            \State $\mbf{x_t(i)} \gets [y_t(i), \alpha_i, \beta_i]$
        \EndFor
    \EndFor
    
    \State \textbf{return} $G_{0:T} = (V_t, \mbf{A}_t, \mbf{X}_t, \mbf{y}_t)$
\EndProcedure
\end{algorithmic}
\end{algorithm}

Below we provide additional details about each dataset used in the main paper including specific parameters for both synthetic and real-world networks. We begin with the synthetic datasets, each of which are generated in Networkx 3.2 \citep{hagberg2020networkx},
\begin{itemize}
    \item \textbf{Regular}: a 3-regular graph on $n=1000$ nodes.
    \item \textbf{Powerlaw}: an albert barabasi graph with $n=1000$ nodes grown by attaching new nodes each with $m=3$ edges that are preferentially attached to existing nodes with high degree.
    \item \textbf{Block}: a stochastic block model on $n=1000$ nodes with $c=20$ communities of equal size where a node forms an edge within its community with probability $p_{\text{in}}=0.10$ and an edge outside of its community with probability $p_{\text{out}}=0.001$. 
\end{itemize}

We provide the dataset details for our real datasets. For each real dataset, we discretize the dynamic graph by setting the static graph at a particular timestep as the tuple of nodes and edges that lie within the timestep's associated time interval. More specifically, the static graph at time $t$ is composed of all nodes and edges that lie within the interval [$tw$, $t (w + 1)$], where $w$ is the interval size. For each dataset, we list the specific choice of $w$.
\begin{itemize}
    \item \textbf{UCI}: a dynamic network of private messages sent on an online social network at the University of California Irvine. Nodes represent users, and edges represent private messages between users. There are a total of $\abs{\mc{E}}=59835$ temporal edges, and we set the interval size equal to $w=2000$.
    \item \textbf{Bitcoin}: a dynamic network of transactions on the Bitcoin platform Bitcoin OTC. Nodes represent users and edges represent transactions. There are a total of $\abs{\mc{E}}=35592$ temporal edges, and we set the interval size equal to $w=1000$.
    \item \textbf{Math}: a dynamic network of interactions on the stack exchange web site Math Overflow. Nodes represent users, and edges represent various types of interactions including answering questions, commenting on questions, and commenting on answers. There are a total of $\abs{\mc{E}}=506550$ temporal edges, and we set the interval size equal to $w=12000$
\end{itemize}

\begin{table}[t]
\centering
\footnotesize
\caption{Statistics for dynamic graphs. $N$ is the number of dynamic graphs provided in each of the train, validation, and test set.}
\begin{tabular}{ l c c c c } 
\toprule
\textbf{Dynamic graph} & \textbf{$\abs{\mc{V}}$} & \textbf{ $\abs{\mc{E}}$} & \textbf{$T$} & $N$\\ 
 \midrule
 \textbf{Regular} & 1,000 & 45,000 & 30 & 20 \\ 
 \textbf{Powerlaw} & 1,000 & 90,000 & 30 & 20 \\
 \textbf{Block} & 1,000 & 150,000 & 30 & 20 \\
 \midrule
 \textbf{UCI} & 1,899 & 59,835 & 29 & 20 \\
 \textbf{Bitcoin} & 5,881 & 35,592 & 35 & 20 \\
 \textbf{Math} & 24,818 & 506,550 & 20 & 20 \\
 \midrule
 \textbf{Higgs Day 1} & 3,124 & 16,781 & 24 & 1 \\
 \textbf{Higgs Day 2} & 12,075 & 127,662 & 24 & 1 \\
 \textbf{Higgs Day 3} & 23,853 & 304,255 & 24 & 1 \\
 \textbf{Higgs Day 4} & 30,822 & 459,707 & 24 & 1 \\
 \midrule
 \textbf{Gavin} & 2,541 & 140,040 & 36 & 1 \\
 \textbf{Ho} & 1,548 & 42,220 & 36 & 1 \\
 \textbf{Ito} & 2,856 & 8,638 & 36 & 1 \\
 \textbf{Uetz} & 922 & 2,159 & 36 & 1 \\
 \bottomrule
\end{tabular}
\label{tab:1}
\end{table}

\subsection{Real Datasets}

The Higgs dataset is a social network of twitter where nodes are users and edges indicate friendship and follower statuses. The dataset records tweets, replies, and mentions of the announcement of the discovery of the Higgs boson between users. As described in the main paper, we use four separate daily graphs each divided into 24 hours. At each timestep, the task is to predict which nodes become positive at the next timestep. At time $t$, we construct features for each node based on a learnable embedding matrix concatenated with the positive statuses of the nodes at timestep $t$. At time $t$, future labels for nodes are are the positive statuses of nodes at timestep $t+1$.

The protein-protein interaction networks are from a biological repository of dynamic protein-protein interaction networks (DPPIN) \citep{fu2022dppin}. As described in the main paper, we consider four separate dynamic graphs each spanning 36 timestamps where timesteps represent 3 successive metabolic cycles of yeast cells at different resolutions. We split the graph chronologically into 10 train graphs, 10 validation graphs, and 16 test graphs. At time $t$ we construct features for each node based on the node's protein type concatenated with the active statuses of the nodes at timestep $t$. At time $t$, future labels for nodes are the active statuses of the nodes at time $t+1$. We summarize statistics of all datasets in Table \ref{tab:1}.

\subsection{Model Details}

We provide a summary of all GNNs used in the main paper below. First, we describe the static GNNs in the context of homophily, including their aggregation scheme and additional designs that aim to alleviate specific heterophilous settings. Next, we describe the dynamic GNNs in the context of their dynamic components.
\begin{itemize}
    \item \textbf{SGC} (homophilous) uses symmetric-normalized mean aggregation without any intermediate weights or nonlinearities between layers \citep{wu2019simplifying}.
    \item \textbf{GCN} (homophilous) uses symmetric-normalized mean aggregation, including intermediate weights and nonlinearities between layers \citep{kipf2017semi}.
    \item \textbf{GIN} (homophilous) is theoretically more expressive than GCNs, leveraging sum aggregation \citep{xu2018how}.
    \item \textbf{GAT} (homophilous) uses self-attention aggregation \citep{velickovic2018graph}. GAT is a popular choice included in our experiments for completeness.
    \item \textbf{SAGE} (heterophilous) uses mean aggregation, but incorporates the heterophilous design choice of separating the ego-representations from aggregated neighbor representations \citep{hamilton2017inductive}.
    \item \textbf{GCNII} (heterophilous) uses symmetric-normalized mean aggregation, but incorporates the heterophilous design choice of the addition of residual connections \citep{li2018deeper}.
    \item \textbf{FA-GCN} (heterophilous) uses attention-based aggregation, but incorporates the heterophilous design choices of both SAGE and GCNII \citep{bo2021beyond}.
\end{itemize}

In our dynamic GNN experiments, we train the following dynamic GNNs described below in the context of their dynamic components.
\begin{itemize}
    \item \textbf{DGNN} combines a GCN and RNN by first applying the GCN to the initial node representations for all nodes across all timesteps. Next, an RNN is applied to each timeseries of node representations. Finally, a readout layer is applied to obtain the prediction for all nodes across all timesteps. This particular formulation for dynamic GCNs has been widely used in many dynamic graph applications \citep{manessi2020dynamic, narayan2018learning, chen2022gc}.
    \item \textbf{GCRN} combines a GCN and RNN by replacing each weight update in the RNN with a GCN. The new RNN layer, GCRN, is applied to each timeseries of node representations, and a readout layer is applied to obtain the the prediction for all nodes across all timesteps \citep{seo2018structured}.
    \item \textbf{EvolveGCN} combines a GCN and RNN by using an RNN evolve the weights of GCN layers across timesteps \citep{pareja2020evolvegcn}.
\end{itemize}

We implement and train our models on a GeForce GTX 1080. 

\subsection{Training and Evaluation Details}

We train all GNNs using the Adam optimizer and in full batch mode where each batch consists of a single static graph \citep{KingBa15}. We perform a hyperparameter search over number of layers in the range [0, 3], learning rate in the range [.1, .0001], and the size of hidden dimension in the range [32, 128]. Training is stopped when either 200 epochs are reached or when validation performance no longer improves after 50 epochs. We report the mean test AUROC and compute correlations over the top 4 models with the best validation AUROC.

We now describe the computation of static and dynamic homophily. At a particular timestep $t$, we compute static homophily as the mean over all static local homophily levels in the static graph at time $t$. Formally,
\eq{h_t^S = \frac{1}{\abs{V_t}}\sum_{i \in V_t} h_t^S(i) = \frac{1}{\abs{V_t}} \sum_{i \in V_t} \sum_{j \in \hat{\mc{N}}_t(i)} \frac{\mbb{I}[y_t(i) == y_t(j)]}{\abs{\hat{\mc{N}}_t(i)}},}

where $h_t^S(i)$ is the local static homophily for node $i$ defined as the ratio of neighbors with the same label as itself at time $t$. Following Definition 1, we compute dynamic homophily as the following: 
\al{h_t^D &= \frac{1}{2}\prn{\frac{1}{\abs{V_{t+1}^+}}\sum_{i \in V_{t+1}^+} h_t^D(i) + \frac{1}{\abs{V_{t+1}^-}}\sum_{i \in V_{t+1}^-} h_t^D(i)} \\&
= \frac{1}{2}\prn{\frac{1}{\abs{V_{t+1}^+}}\sum_{i \in V_{t+1}^+} \sum_{j \in \hat{\mc{N}}_t(i)} \frac{\mbb{I}[y_{t+1}(i) == y_t(j)]}{\abs{\hat{\mc{N}}_t(i)}} + \frac{1}{\abs{V_{t+1}^-}}\sum_{i \in V_{t+1}^-} \sum_{j \in \hat{\mc{N}}_t(i)} \frac{\mbb{I}[y_{t+1}(i) == y_t(j)]}{\abs{\hat{\mc{N}}_t(i)}}}.}

\section{ADDITIONAL PLOTS}\label{app:plots}

Below we include additional plots on all combinations of GNNs and dynamic graphs.



\begin{figure*}[h!]
     \centering
     \begin{subfigure}[b]{.49\textwidth}
         \centering
         \includegraphics[width=\textwidth]{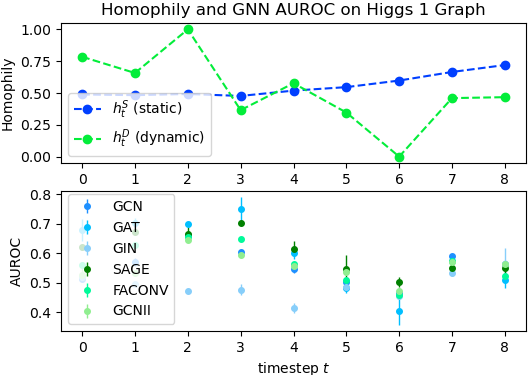}
         \caption{Higgs 1 dynamic graph}
     \end{subfigure}
     \hfill
     \begin{subfigure}[b]{.49\textwidth}
         \centering
         \includegraphics[width=\textwidth]{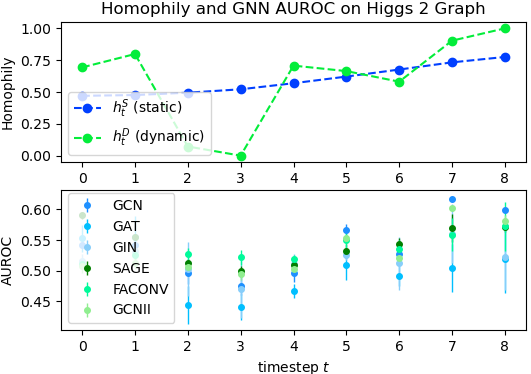}
         \caption{Higgs 2 dynamic graph}
     \end{subfigure}
\end{figure*}

\begin{figure*}[h!]
     \begin{subfigure}[b]{.49\textwidth}
         \centering
         \includegraphics[width=\textwidth]{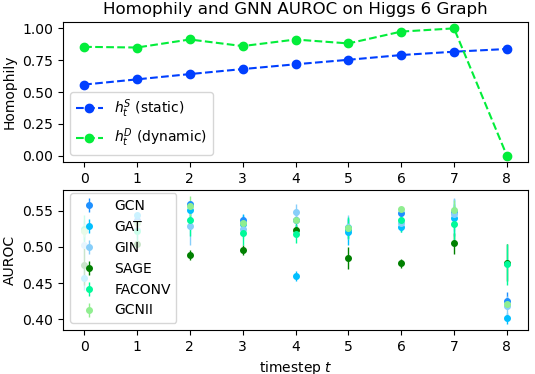}
         \caption{Higgs 6 dynamic graph}
     \end{subfigure}
     \hfill
     \begin{subfigure}[b]{.49\textwidth}
         \centering
         \includegraphics[width=\textwidth]{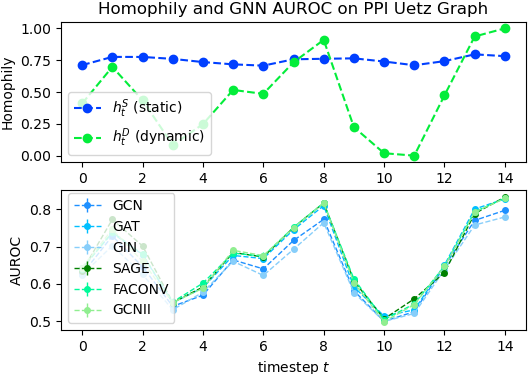}
         \caption{DPPIN Uetz dynamic graph}
     \end{subfigure}
\end{figure*}

\begin{figure*}[t!]
     \begin{subfigure}[b]{.49\textwidth}
         \centering
         \includegraphics[width=\textwidth]{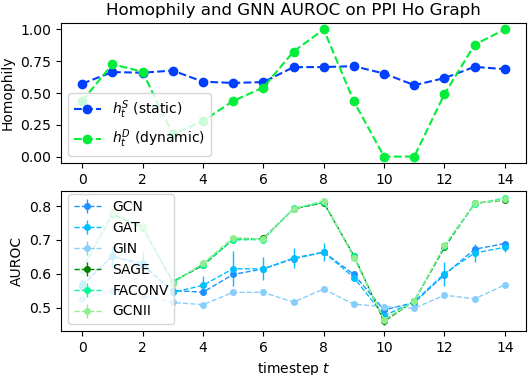}
         \caption{DPPIN Ho dynamic graph}
     \end{subfigure}
     \hfill
     \begin{subfigure}[b]{.49\textwidth}
         \centering
         \includegraphics[width=\textwidth]{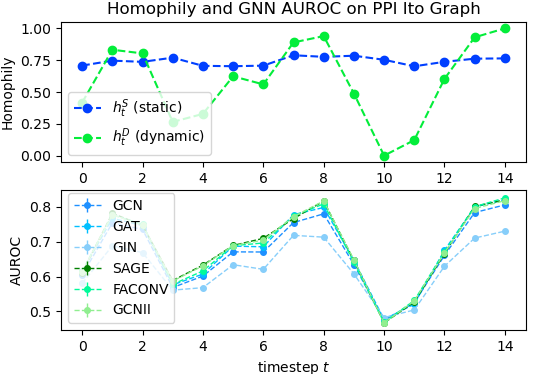}
         \caption{DPPIN Ito dynamic graph}
     \end{subfigure}
\end{figure*}

\section{SOCIETAL IMPACT \label{app:soc}}

As our work aims to understand the performance of GNNs in general dynamic node classification settings, we do not foresee any immediate \textit{negative} societal outcomes. For particular applications of dynamic node classification settings such as predicting the spread of infectious disease or misinformation, our work sheds light on current GNN limitations, potentially guiding the design of future GNNs aimed at better solving these tasks. 



\end{document}